\newcommand{\cmark}{\text{\ding{51}}}
\newcommand{\xmark}{\text{\ding{55}}}
\definecolor{deepblue}{rgb}{0,0,0.5}
\definecolor{deepred}{rgb}{0.6,0,0}
\definecolor{deepgreen}{rgb}{0,0.5,0}
\definecolor{boost_correct_to_correct}{HTML}{66C2A5}
\definecolor{default_correct_to_correct}{HTML}{fc8d62}
\definecolor{dup_correct_to_correct}{HTML}{8da0cb}
\definecolor{new_correct_to_correct}{HTML}{e78ac3}
\theoremstyle{plain}
\newtheorem{theorem}{Theorem}[section]
\newtheorem{lemma}[theorem]{Lemma}
\theoremstyle{definition}
\newtheorem{definition}[theorem]{Definition}
\theoremstyle{remark}
\newcommand{\Btr}{B_{\mathrm{tr}}}
\definecolor{lightblue}{rgb}{0.22,0.50,0.70}
\newcommand{\termbold}[1]{{\color{lightblue} \emph{\textbf{#1}}}}
\newcommand{\envfont}[1]{{\scalefont{0.9}{\textsc{#1}}}}
\newcommand{\cdown}{\envfont{Cdown}}
\newcommand{\mathdata}{\envfont{DMath}}
\newcommand{\mult}{\envfont{Mult}}
\newcommand{\multv}{\envfont{Mult-V}}
\newcommand{\Grpo}{GRPO{}}
\newcommand{\GrpoM}{GRPOMask{}}
\def\eqref#1{Eq.~\ref{#1}}
\def\1{\bm{1}}
\DeclareMathAlphabet{\mathsfit}{\encodingdefault}{\sfdefault}{m}{sl}
\SetMathAlphabet{\mathsfit}{bold}{\encodingdefault}{\sfdefault}{bx}{n}
\def\gA{{\mathcal{A}}}
\def\gO{{\mathcal{O}}}
\def\gS{{\mathcal{S}}}
\def\gT{{\mathcal{T}}}
\newcommand{\E}{\mathbb{E}}
\newcommand{\R}{\mathbb{R}}
\DeclareMathOperator*{\argmax}{arg\,max}
\DeclareMathOperator*{\argmin}{arg\,min}
\newcommand{\eqdef}{\mathrel{\mathop=}:}
\newcommand{\paren}[1]{\left( #1 \right)}
\newcommand{\brck}[1]{\left [ #1 \right ] }
\newcommand{\pibase}{\pi_b}
\newcommand{\bs}{\mathbf{s}}
\newcommand{\by}{\mathbf{y}}
\newcommand{\bx}{\mathbf{x}}
\newcommand{\gsim}{\raisebox{-0.13cm}{~\shortstack{$>$ \\[-0.07cm] $\sim$}}~}
\definecolor{rliableolive}{HTML}{BBCC33}
\definecolor{rliableblue}{HTML}{77AADD}
\definecolor{rliablered}{HTML}{EE8866}
\newtcolorbox{AIbox}[2][]{aibox,title=#2,#1}
\newcommand\pythonstyle{\lstset{
basicstyle=\ttfamily\footnotesize,
language=Python,
morekeywords={self, clip, exp, mse_loss, uniform_sample, concatenate, logsumexp},              
keywordstyle=\color{deepblue},
emph={MyClass,__init__},          
emphstyle=\color{deepred},   
stringstyle=\color{deepgreen},
frame=single,                       
showstringspaces=false
}}
\newcommand\pythoninline[1]{{\pythonstyle\lstinline!#1!}}
\definecolor{blanchedalmond}{rgb}{1.0, 0.92, 0.8}
\definecolor{carmine}{rgb}{0.59, 0.0, 0.09}
\definecolor{lightblue}{rgb}{0.22,0.45,0.70}%
\renewcommand{\mathbf}{\boldsymbol}
\def\Ddots{\mathinner{\mkern1mu\raise\p@
\vbox{\kern7\p@\hbox{.}}\mkern2mu
\raise4\p@\hbox{.}\mkern2mu\raise7\p@\hbox{.}\mkern1mu}}
\numberwithin{equation}{section}
\definecolor{amaranth}{rgb}{0.9, 0.17, 0.31}
\definecolor{antiquebrass}{rgb}{0.8, 0.58, 0.46}
\definecolor{antiquefuchsia}{rgb}{0.57, 0.36, 0.51}
\definecolor{chromeyellow}{rgb}{0.31, 0.47, 0.26}
\definecolor{maj5}{HTML}{2b8cbe}
\definecolor{maj5Imp}{HTML}{084081}
\definecolor{seq5wo}{HTML}{d95f0e}
\definecolor{seq5woImp}{HTML}{662506}
\definecolor{seq5w}{HTML}{6a51a3}
\definecolor{seq5wImp}{HTML}{3f007d}
\definecolor{selfwo}{HTML}{d95f0e}
\definecolor{selfwoImp}{HTML}{662506}
\definecolor{selfw}{HTML}{6a51a3}
\definecolor{selfwImp}{HTML}{3f007d}
\definecolor{glorewo}{HTML}{d95f0e}
\definecolor{glorewoImp}{HTML}{662506}
\definecolor{glorew}{HTML}{6a51a3}
\definecolor{glorewImp}{HTML}{3f007d}
\definecolor{vstar}{HTML}{d95f0e}
\definecolor{vstarImp}{HTML}{662506}
\def\mathcolor#1#{\@mathcolor{#1}}
\def\@mathcolor#1#2#3{%
  \protect\leavevmode
  \begingroup
    \color#1{#2}#3%
  \endgroup
}
\Crefname{assumption}{Assumption}{Assumptions}
    \let\Cref\crtCref
    \let\cref\crtcref
\newtcolorbox{analysisbox}[1][]{
    enhanced jigsaw,
    colback=white,
    colframe=blue!75!black,
    fonttitle=\bfseries,
    boxsep=5pt,
    left=5pt,
    right=5pt,
    top=5pt,
    bottom=5pt,
    title=#1,
}
\definecolor{lightblue}{rgb}{0.22,0.45,0.70}%
\definecolor{highlightmistake}{RGB}{255, 179, 179}
\definecolor{highlightcorrect}{RGB}{179, 255, 179}
\title{\texttt{e3}: Learning to \underline{E}xplore \underline{E}nables \underline{E}xtrapolation of Test-Time Compute for LLMs}
\author[1,*]{Amrith Setlur}
\author[1,*]{Matthew Y. R. Yang}
\author[2]{Charlie Snell}
\author[3]{Jeremy Greer}
\author[1]{Ian Wu}
\author[1]{Virginia Smith}
\author[1]{Max Simchowitz}
\author[1]{Aviral Kumar}
\affil[1]{Carnegie Mellon University}
\affil[2]{UC Berkeley}
\affil[3]{Oumi}
\affil[*]{Equal contribution.}
\newcommand{\ethree}{\texttt{e3}}
\begin{abstract}
\end{abstract}
\begin{document}

\maketitle

\begin{figure}[htbp]
\vspace{-1.0cm}
    \centering
    \begin{subfigure}[b]{0.72\textwidth}
     \captionsetup{font=small,skip=2pt}
        \centering
        \includegraphics[width=0.99\textwidth]{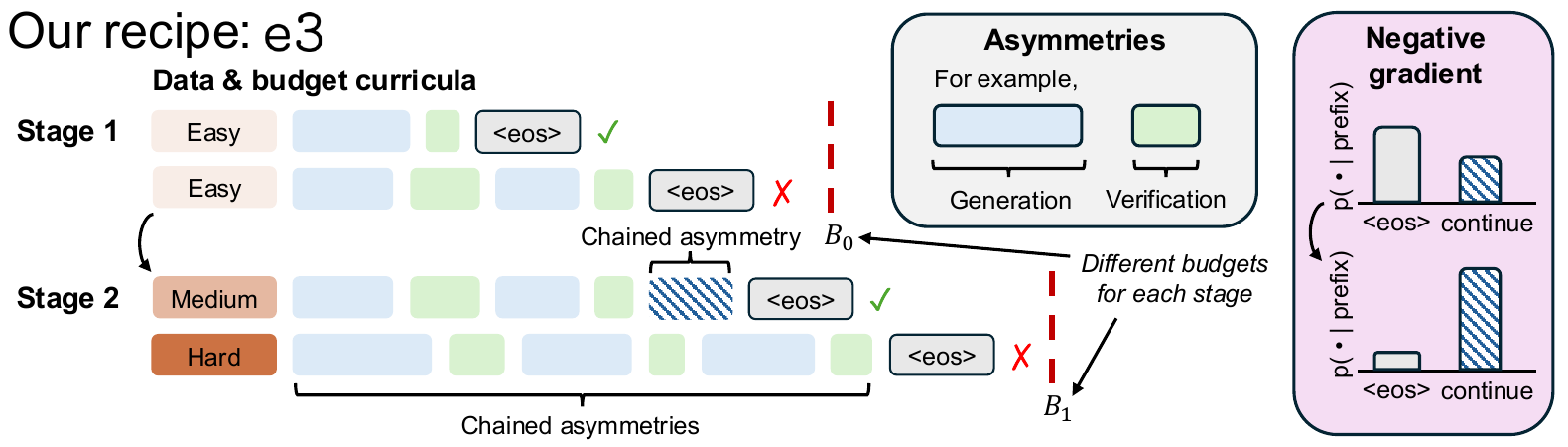}
        \caption*{\centering(a)}
    \end{subfigure}\hfill
    \begin{subfigure}[b]{0.27\textwidth}
     \captionsetup{font=small,skip=2pt}
        \centering
        \includegraphics[width=0.99\textwidth]{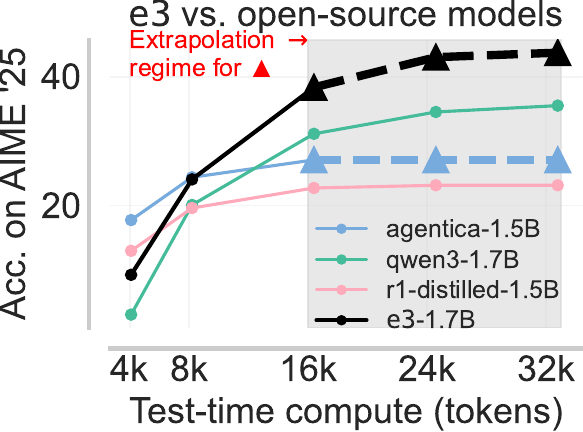}
        \caption*{\centering(b)}
    \end{subfigure}
    \vspace{-0.34cm}
    \caption{\footnotesize{\textbf{\emph{In-context \underline{e}xploration \underline{e}nables \underline{e}xtrapolation
    of test-time compute} (\texttt{e3})}: \textbf{(a)} By (i) chaining asymmetric capabilities of the base model, \textit{e.g.,} reliably self-verifying responses after generating them; (ii) lengthening model responses by chaining more asymmetries until the correct answer is discovered by utilizing the ``negative'' part of the RL policy gradient generated from incorrect responses; and (iii) coupling data \& budget curricula for RL training that carefully structures exploration by sequentially training models on different datasets and training compute budgets. \textbf{(b)} Qwen3-1.7B fine-tuned with \ethree{} outperforms <2B models on AIME' and HMMT' 25 and even some larger 7B/32B models (see full results in Tab.~\ref{tab:passk} and Fig.~\ref{fig:curriculum-panel-1}).}}
    \label{fig:fig1}
    \vspace{-0.2cm}
\end{figure}

{\absfont \textbf{Abstract:} Test-time scaling offers a promising path to improve LLM reasoning by utilizing more compute at inference time; however, the true promise of this paradigm lies in \textit{extrapolation} (\textit{i.e.}, improvement in performance on hard problems as LLMs keep ``thinking'' for longer, beyond the maximum token budget they were trained on). Surprisingly, we find that most existing reasoning models do not extrapolate well. 
We show that one way to enable extrapolation is by training the LLM to perform \emph{in-context exploration}: training the LLM to effectively spend its test time budget by chaining operations (such as generation, verification, refinement, \textit{etc.}), or testing multiple hypotheses before it commits to an answer.
To enable in-context exploration, we identify three key ingredients as part of our recipe \ethree: \textbf{(1)} chaining skills that the base LLM has asymmetric competence in, \textit{e.g.}, chaining verification (easy) with generation (hard), as a way to implement in-context search; \textbf{(2)} leveraging ``negative'' gradients from incorrect traces to amplify exploration during RL, resulting in longer search traces that chains additional asymmetries; and \textbf{(3)} coupling task difficulty with training token budget during training via a specifically-designed curriculum to structure in-context exploration. Our recipe \ethree~produces the best known 1.7B model according to AIME'25 and HMMT'25  scores, and extrapolates to 2$\times$ the training token budget. Our \ethree-1.7B model not only attains high pass@1 scores, but also improves pass@k over the base model.}

\vspace{-0.2cm}
\section{Introduction}
\label{sec:introduction}
\vspace{-0.1cm}

Test-time scaling boosts large language model (LLM) performance by extending inference, spending more compute on ``thinking'' before producing an answer. Its promise lies in enabling models to continue improving performance by scaling test-time compute upon deployment.
\textit{E.g.}, if the model can learn to implement ``algorithmic procedures'' like planning, self-reflection, or backtracking generally across the board, it can discover more accurate responses as more test compute is used. 
With this motivation, current recipes post-train LLMs via reinforcement learning (RL)~\citep{deepseekai2025deepseekr1incentivizingreasoningcapability,yu2025dapo} and supervised fine-tuning (SFT)~\citep{openthoughts,muennighoff2025s1simpletesttimescaling} at long output lengths. However, it is unclear whether the models post-trained with current recipes can truly realize the promise of \termbold{extrapolation}: if we scale the test compute beyond the maximum \emph{training budget}, would the LLM be able to continue to solve more and more problems?

Of course, the performance of a model at very long response lengths may be restricted by other factors like model architecture or model size~\cite{li2024long}. However, one can at least expect that an LLM should benefit from test-time scaling within the pretraining context lengths, that tend to be around 2-4$\times$ larger than the budgets used for training reasoning models\footnote{LLMs often undergo long-context training at the end of pre-trained to 128k tokens (and many proprietary LLMs utilize a million tokens), but during post-training the output length is often reduced to 32k tokens, for instance, for Qwen3 models~\citep{yang2025qwen3}.}. Mechanistically, this could be realized if the LLM were implementing algorithmic procedures (\textit{e.g.,} generate-verify-revise, best-of-$N$, search, etc.) within the model's chain of thought~\citep{kumar2024training,setlur2025opt,gandhi2024stream,yao2023tree}.
However, similar to other empirical studies of reasoning models, we note that many open models perform poorly when extrapolating to 2-3$\times$ the training budget~\citep{qu2025optimizing,hochlehnert2025sober} (Fig.~\ref{fig:aime-line}).
Thus, relying on current RL/SFT recipes to yield extrapolation appears to be mostly futile.

In this paper, we show that the key to enabling extrapolation is \termbold{{learning to explore in-context}}: if a model learns to use compute by searching through multiple reasoning paths or implementing procedures, it can ``guide'' the search towards the correct answer, and improve its performance as more test compute becomes available. Even under the original training compute budget, we expect learning to explore in-context to improve generalization performance to unseen, out-of-distribution problems~\citep{ghosh2021gen,duan2016rl}.
To demonstrate this, we build a recipe \ethree, which trains models that leverage test compute for in-context exploration and can perform well at both normal training and extrapolation budgets. At its core, \ethree{} is based on the following three ingredients and principles (see Fig.~\ref{fig:fig1}):

\textbf{1) Asymmetries are critical for learning to explore.} LLMs can learn to explore only when each segment in the output trace is useful in ``guiding'' subsequent ones, \textit{e.g.}, if verifying initial segments can lead to more refined answers that are more likely to succeed. In the absence of external tools, we show that 
this sort of behavior can emerge from \termbold{asymmetries}, i.e., differences in the model's competence at different skills appearing in an output trace. In the context of self-verification, this corresponds to a verification-generation (VG) gap, where models are more capable of verifying their answers than generating correct ones. 
While prior work~\cite{setlur2025scaling,swamy2025all,song2024mind,kim2025scaling,gandhi2025cognitivebehaviorsenableselfimproving} observed such asymmetries, we formalize their role and show they are essential for enabling RL to increase response length by learning to explore in-context and, as a result, attain extrapolation. Without them, test-time scaling is strikingly hard.
We formalize this notion in a didactic model we call ``$p^k$''-model in Section~\ref{sec:pk_model}: a model of long chain-of-thought LLM training which operates on a base LLM that exhibits perfect self-verification but imperfect generation capabilities. We will show that this asymmetry is critical for enabling extrapolation in this model (discussed next).

\textbf{2) Negative gradient in RL amplifies in-context exploration.} If asymmetries are a prerequisite for learning to explore, what enables them to evolve and facilitate learning useful exploration strategies during post-training? We show that \termbold{negative gradients}~\citep{tajwar2024preference}  (\textit{i.e.}, gradients on incorrect traces) in RL training is a key enabler of in-context exploration when the base model presents asymmetries. Negative gradients drive exploration 
by moving the probability mass from shorter failed traces onto longer traces that \emph{``chain''} new asymmetries (\textit{e.g.}, LLM verifying a calculation one more time). In contrast, SFT only maximizes likelihood on correct traces in the training data and reinforces the model to end the solution within the length of these traces. In our $p^k$ model, SFT only aims to reduce the failure probability $p$ at a fixed $k$, whereas negative gradients also amplify $k$ and increase response length. 

\textbf{3) Structured exploration with coupled curriculum.} 
Finally, while negative gradients amplify asymmetries and produce longer responses, running RL training at very long budgets suffers from poor training convergence, typically seen in long-horizon RL~\cite{agarwal2021theory}. 
Although one could resolve this by training with a smaller budget, we show that training on hard problems at short context lengths often disincentivizes exploration altogether, since the model is forced to commit to an answer prematurely. As a result, we see poor extrapolation of compute and generalization to unseen problems. To resolve this, we design a \termbold{coupled curriculum} over pairs of $(\text{data mixture}, \text{training budget})$ that effectively structures the exploration driven by the negative gradient. 
\emph{\textbf{Our key insight}} is that at any stage of the curriculum, we should choose the smallest ``RL optimization friendly'' budget such that the model initialized for RL training can: (i) complete most of its responses within the budget; and (ii) can continue to improve performance as it chains more asymmetries beyond the chosen budget. 

The above principles and insights constitute our recipe \ethree, that we use to post-train the Qwen3-1.7B model with a training budget of up to 16k output tokens using problems from the DeepScaleR~\citep{deepscaler2025} dataset. We achieve the \textbf{\emph{best performance at $<$2B scale on AIME'25 and HMMT'25}} (to our knowledge), and our model consistently improves as we extrapolate the test-time compute to 32k (2$\times$ the training budget) upon deployment. Our models also attain consistent improvements under the \texttt{pass@32} metric on both of these benchmarks, showing that \ethree{} does more than simply sharpening the base model.

\vspace{-0.4cm}
\section{Problem Statement: Optimizing \& Extrapolating Test-Time Compute}
\label{sec:test-extrapolation}
\vspace{-0.1cm}

\newcommand{\btr}{B_\mathrm{tr}}

\begin{wrapfigure}{r}{0.4\textwidth}
\vspace{-0.8cm}
  \begin{center}
    \includegraphics[trim=0 0 0 0, width=0.99\linewidth]{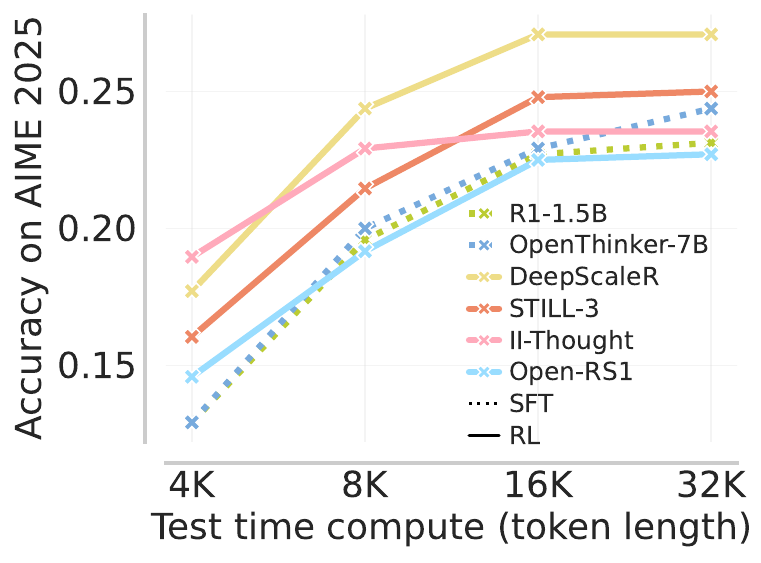}
  \end{center}
  \vspace{-0.6cm}
  \caption{\footnotesize{\textbf{\emph{Accuracy of various open-source models at different budgets on AIME 2025.}} Performance gains diminish as the test-time budget increases, with virtually no gains from 16k to 32k. \vspace{-0.2cm}}}
    \label{fig:aime-line}
    \vspace{-0.4cm}
\end{wrapfigure}
\textbf{Post-training for test-time scaling.} 
RL and SFT are two categories of post-training algorithms that refine a pre-trained base LLM $\pi_b$ into a reasoning model, especially one that utilizes more test-time compute by producing long CoTs.
Typical outcome-reward RL trains LLM $\pi$ (initialized with $\pi_b$) to maximize performance on outcome 0/1 reward  $r^\star(\bx, \by)$, for inputs $\bx \sim \rho$ and response $\by \sim \pi(\by \mid \bx)$ restricted to an apriori fixed maximum token length or \emph{training budget} $B_\mathrm{tr}$~\cite{yu2025dapo,deepscaler2025}. 
On the other hand,  SFT fine-tunes $\pi_b$ on long thinking traces from more capable models or humans to distill their reasoning capabilities~\cite{openthoughts,muennighoff2025s1simpletesttimescaling}, where the maximum length of the expert traces also implicitly induces a training budget $B_\mathrm{tr}$, similar to RL. 
\textbf{Our goal} is to train models that can improve performance when we extrapolate test-compute beyond $B_\mathrm{tr}$. 
Even though the true promise of test-time compute is extrapolation performance, we find that \emph{\textbf{current thinking models fall short on extrapolation.}} We evaluate multiple models on a test budget of 32K,  $\approx$1.5-2$\times\btr$ across all models. We plot our results on AIME25 in Fig.~\ref{fig:aime-line} (see App.~\ref{sec:test-extrapolation-appendix} for a detailed comparison) and note that performance gains are minuscule as we go beyond $\btr$.

\textbf{Negative gradient in RL.} A key distinction between SFT and RL is the \termbold{negative gradient}, which corresponds to the part of the policy gradient coming from traces that fail.  In \eqref{eq:policy-grad} we present a simplified yet generalizing version of the policy gradient adopted by most RL post-training methods: REINFORCE~\cite{ahmadian2024back}, PPO~\cite{schulman2017proximal}, and GRPO~\cite{shao2024deepseekmath}. 
From this, we note that on a prompt $\bx$, RL training observes two types of gradients: (i) the positive gradient which maximizes the likelihood of a correct responses $\by$ with a positive advantage $A(\bx, \by)$, and (ii) the negative gradient which \emph{pushes down} the likelihood of an incorrect response with a negative advantage $A(\bx, \by)$. Here, $\by$ can be sampled \emph{on-policy} $\pi=\tilde{\pi}$ or \emph{off-policy} $\pi\neq \tilde{\pi}$. Thus, we can view  SFT as a purely positive gradient method that only maximizes likelihood on correct reasoning traces. 
In Sec.~\ref{sec:negative-gradient}, we show why the negative gradient is largely responsible for driving up response lengths and in-context exploration during RL, thereby enabling RL-trained models to explore more at test-time and extrapolate better compared to SFT-based ones. 
{
\setlength{\abovedisplayskip}{5pt}
\setlength{\belowdisplayskip}{3pt}
\begin{align}
    \label{eq:policy-grad}
    \mathbb{E}_{\mathbf{y} \sim \tilde{\pi}(\cdot \mid \mathbf{x})} \;  \brck{A_i(\mathbf{x}, \mathbf{y}) \cdot \nabla_\pi\log \pi (\mathbf{y} \mid \mathbf{x})}  \quad \text{{(simple form of policy gradient in RL)}} 
\end{align}
}

\newcommand{\bte}{B_\mathrm{ex}}

\vspace{-0.2cm}
\section{Asymmetries in the Base Model: A Prerequisite for In-Context Exploration}
\label{sec:vg-gap}
\vspace{-0.1cm}

\emph{How can extrapolating beyond the training budget improve performance?} To answer this, we begin by revisiting why longer traces perform better in general. The conventional wisdom is that longer traces can represent solutions that make multiple attempts, interleaving verification and generation~\citep{setlur2025scaling,nie2024evolve,krishnamurthy2024can}, to arrive at the final answer. We can think of this as the LLM learning to interleave basic ``skills'', \textit{e.g.}, verification, summarization, or retrieval, to perform {in-context exploration}. But why, or when,  should a post-training recipe favor learning such solution traces over other uses of test-time compute that arrive at an answer more directly? This section demonstrates that when the base model exhibits \textit{asymmetric} incompetence at  different skills, 
RL post-training prefers to learn solutions that  \termbold{chains asymmetric skills} in ways that improve final performance. A formal description is given by the following definition:

 \newcommand{\detect}{\mathtt{detect}}
 \begin{tcolorbox}[colback=black!6!white,colframe=black,boxsep=0pt,top=4pt,bottom=4pt,left=3pt,right=3pt]
\begin{definition}[Chaining asymmetric capabilities $p, q$ in model $\pi$.]
\label{def:asymmetry}
    Let $p, q:\gS \mapsto \gS$ be functions over token sequences $\gS$ (\textit{e.g.}, $p$ can be generation, $q$ can be verification), and $\detect(f, \tau)$ detects number of calls to function $f$ in a token trace $\tau$. For a reward $r$, we say that policy $\pi$ chains asymmetries $p, q$ if it benefits from calls to the composition $q(p(\cdot))$, compared to only $p(\cdot)$:
    {
    \setlength{\abovedisplayskip}{3pt}
    \setlength{\belowdisplayskip}{3pt}
    \begin{align*}    
    \E_{\tau \sim \pi}\brck{r(\tau) \mid \detect(q(p(\cdot)), \tau)> 0} \; > \; \E_{\tau \sim \pi}\brck{r(\tau) \mid \detect(p, \tau) > 0}, 
    \end{align*}}even though there is an optimal policy $\pi_r^\star$ that never calls $q$, i.e., $\E_{\tau \sim \pi_{r}^\star}\brck{\detect(q, \tau)}=0$.
\end{definition}
\end{tcolorbox}
We focus on a key special case when the model is more accurate at verifying its own answers than it is at generating correct ones; that is, when the model exhibits a \emph{verification-generation gap} (\termbold{VG Gap}), 
on a particular problem domain~\cite{song2024mind,setlur2025scaling,swamy2025all}. In this section, we show that RL training on problem domains with VG gap (i) encourages chaining asymmetries, (ii) enables in-context exploration that (iii) discovers new solutions, often extrapolating to larger budgets and more difficult problem domains. 

\newcommand{\ToyBase}{Llama3.2-3B}
\begin{wrapfigure}{r}{0.36\textwidth}
\vspace{-0.6cm}
  \begin{center}
    \includegraphics[width=0.99\linewidth]{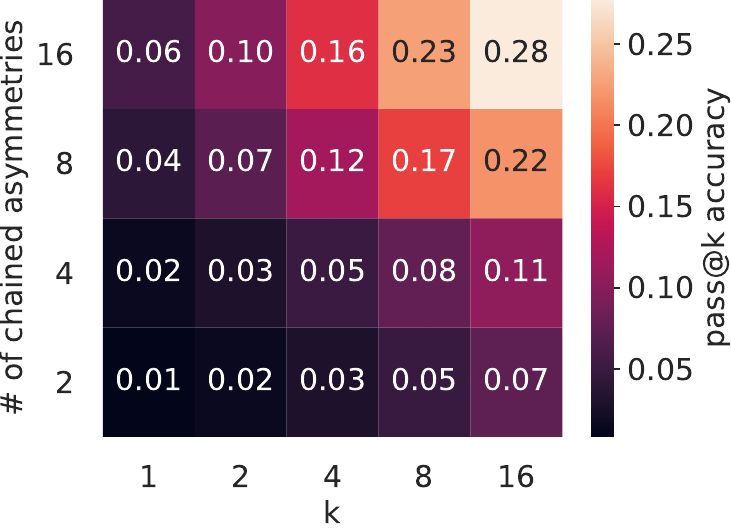}
  \end{center}
  \vspace{-0.6cm}
  \caption{\footnotesize{\textbf{\emph{Measuring asymmetry (Def.~\ref{def:asymmetry}) \& \texttt{pass@k} on \cdown.}} Pass@k improves more for all $k$ as the number of chained asymmetries increases in a trace from  \ToyBase.}}
\vspace{-0.3cm}
    \label{fig:base-asymmetries}
\end{wrapfigure}
\textbf{Setup.} We validate the role of asymmetries in learning to explore by investigating two didactic tasks, on which \ToyBase~ admits different VG gaps. First, the \textbf{Countdown game}~\citep{yao2023tree,gandhi2024stream} (\cdown) requires converting a set of numbers into an equation that evaluates to the desired target. The base LLM is more effective at verifying whether a proposed equation evaluates to the target than searching over all possible equations to solve the task, and traces with more chained asymmetries are more performant, as we measure pass@k in Fig.~\ref{fig:base-asymmetries}, where performance on traces with more chains is higher. 
Second, we study \textbf{n-digit multiplication} (\mult{}) in natural language, without any {external} tools, where the base model exhibits limited verification (see App.~\ref{sec:vg-gap-appendix} for asymmetry gap on \mult). Additionally, we fine-tune  \ToyBase~on correct n-digit multiplication traces from Qwen-32B-r1-distilled. These traces contain multiple verification steps verifying intermediate steps, like smaller digit multiplications, that are part of  a longer trace (see App.~\ref{sec:vg-gap-appendix} for an example). This fine-tuning is a direct way to encourage more verification attempts (\multv{}). Comparison of \mult~vs.~\multv~enables direct evaluation of the benefits of asymmetries in base LLM, all else being held equal. In these results, we detect verification segments by separating by the ``$\backslash$n$\backslash$n'' token (see App.~\ref{sec:example-traces} for examples).

\emph{\textbf{Finding 1: Verification-generation  asymmetry in the base model improves the performance of RL trained solutions.}}
Fig.~\ref{fig:vg-gap-main}(a,b) shows a stark difference in performance and length of output traces as the training budget $\btr$ varies on \cdown~and \mult. 
On \cdown,   performance consistently increases as $\btr$ increases from 512 $\rightarrow$ 2048, accompanied by a very clear increase in response length. On \mult, where the base model has limited propensity to verify,  performance increases when $\btr$ increases from 1024 to 2048, but it plateaus thereon. Unlike \cdown, test-time length is far from saturating budget limits and also oscillates widely across RL training epochs.  Contrast this with Fig.~\ref{fig:vg-gap-main}(d), where RL training on \multv, which leverages verification, exhibits longer lengths and stronger extrapolation performance. Overall, this implies that \textbf{leveraging asymmetries improves performance and length-utilization in RL postraining.} Curiously, we also observe that models with greater VG gap exhibit less KL divergence from the base model, perhaps implying better generalization -- see App.~\ref{sec:vg-gap-appendix} for those results and discussion.  

\begin{figure}
    \centering
    \includegraphics[width=0.99\linewidth]{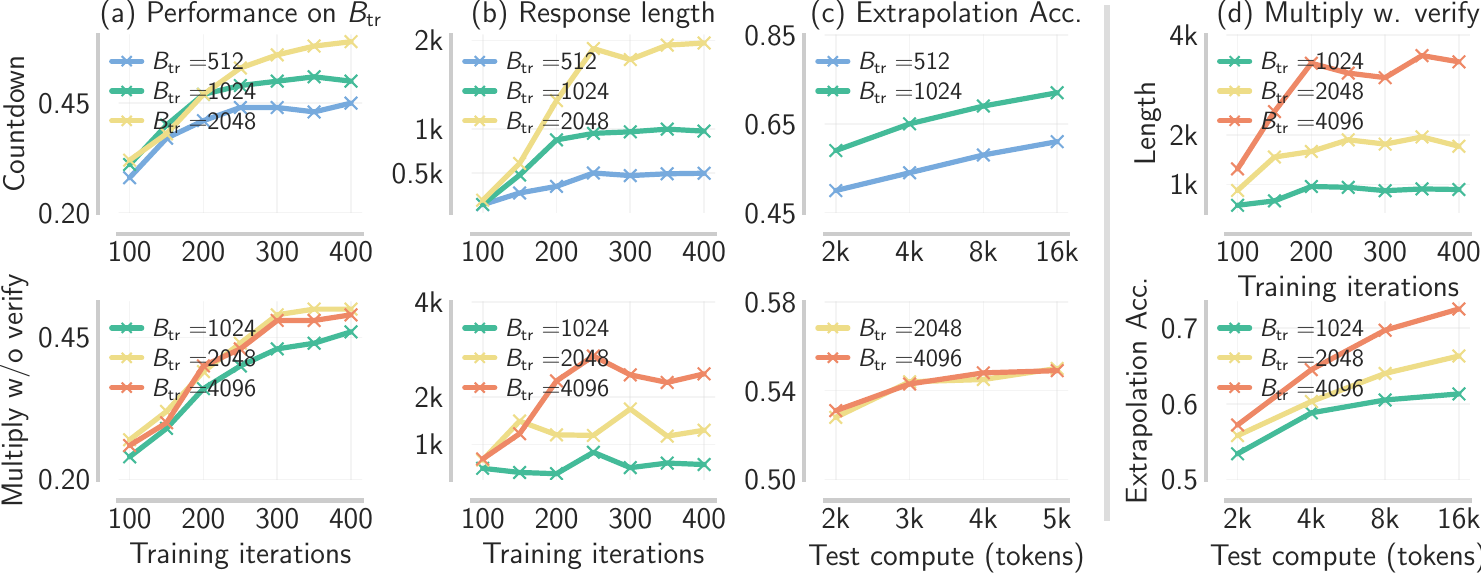}
    \vspace{-0.15cm}
    \caption{\footnotesize{\textbf{\emph{RL training with and without asymmetries in the base model.}} When asymmetries such as the VG gap are present (\textit{e.g.,} in \cdown), RL training amplifies response length by chaining more asymmetries to explore in-context, where the probability of success improves with higher length on both $\btr$ and extrapolation regimes. On the other hand, when VG gap is absent in $\pibase$ (\textit{e.g.,} in \mult), increases in length and extrapolation performance are subdued. When we explicitly train on a base model fine-tuned to verify \mult~ (a setting we refer to as the \multv), we again observe upward length and extrapolation trends, consistent with \cdown.}}
    \label{fig:vg-gap-main}
        \vspace{-0.5cm}
\end{figure}

\emph{\textbf{Finding 2: Chaining asymmetries enable extrapolation via in-context exploration.}} Interleaving verification and generation steps chains together asymmetric skills of the base model; we refer to this special case of skill-chaining as \termbold{chaining asymmetries}. To measure the benefits of chained asymmetries on \cdown, we plot the pass@k accuracy of the base LLM, shown in Figure~\ref{fig:base-asymmetries}, and observe that performance increases as more chained asymmetries arise. In fact, the best strategy is to not simply scale $k$, but rather to scale both $k$ and the number of chained asymmetries (more details on this experiment are in Appendix~\ref{sec:vg-gap-appendix}).
In Fig.~\ref{fig:vg-gap-main} (c), we plot the extrapolation performance of the models trained at two values of $B_\mathrm{tr}$. On \cdown~the model trained with $B_\mathrm{tr}$ 0.5-1k makes steady progress on problems in test budgets that are 8-16$\times$ $B_\mathrm{tr}$ itself. On \mult, we find that $B_\mathrm{tr}$ has absolutely no effect on extrapolation performance with the base LLM that does not have VG asymmetry, but it has a substantial effect when the asymmetry is present. More importantly, while the base model without VG asymmetry fails to extrapolate and solve unsolved problems, with its accuracy improving by merely $\leq 2\%$ despite 16$\times$ test-time compute scaling, the base model with VG asymmetry can still extrapolate well.

\textbf{Why do asymmetries enable in-context exploration?.} We explain this via our didactic {$p^k$-model} (details in Sec.~\ref{sec:pk_model}). Here we view the LLM as sequentially guessing responses $a_1,a_2,\dots$, each with failure probability $p$, and up to at most terminal $k$ responses. We assume that the LLM admits perfect verification, meaning that it can decide when to stop or continue perfectly. Now, in a simplified setting where attempts are independent, failure probability (= $p^k$) decays exponentially as $k$ increases, as on \cdown. Therefore, we can improve performance by increasing $k$ and $p$ together. 
However, if verification is difficult, increasing $k$ provides little benefit, since the model cannot adjudicate whether one guess is any better than another. Then, the only way to improve performance is by lowering $p$ (better first guesses as seen on \mult).

\begin{AIbox}{Takeaways: Asymmetries are a critical pre-requisite for learning to explore.}
\begin{itemize}[leftmargin=0.7em]
    \setlength\itemsep{0em}
    \item  Asymmetries like the VG gap enable the model to continually explore, verify, and refine answers. 
    \item RL training amplifies chaining of asymmetric skills and produces solutions that learn to explore in-context, thus benefiting from additional test-time compute beyond the training budget. 
\end{itemize}
\end{AIbox}

\begin{figure}[t]
    \centering
    \includegraphics[width=0.99\linewidth]{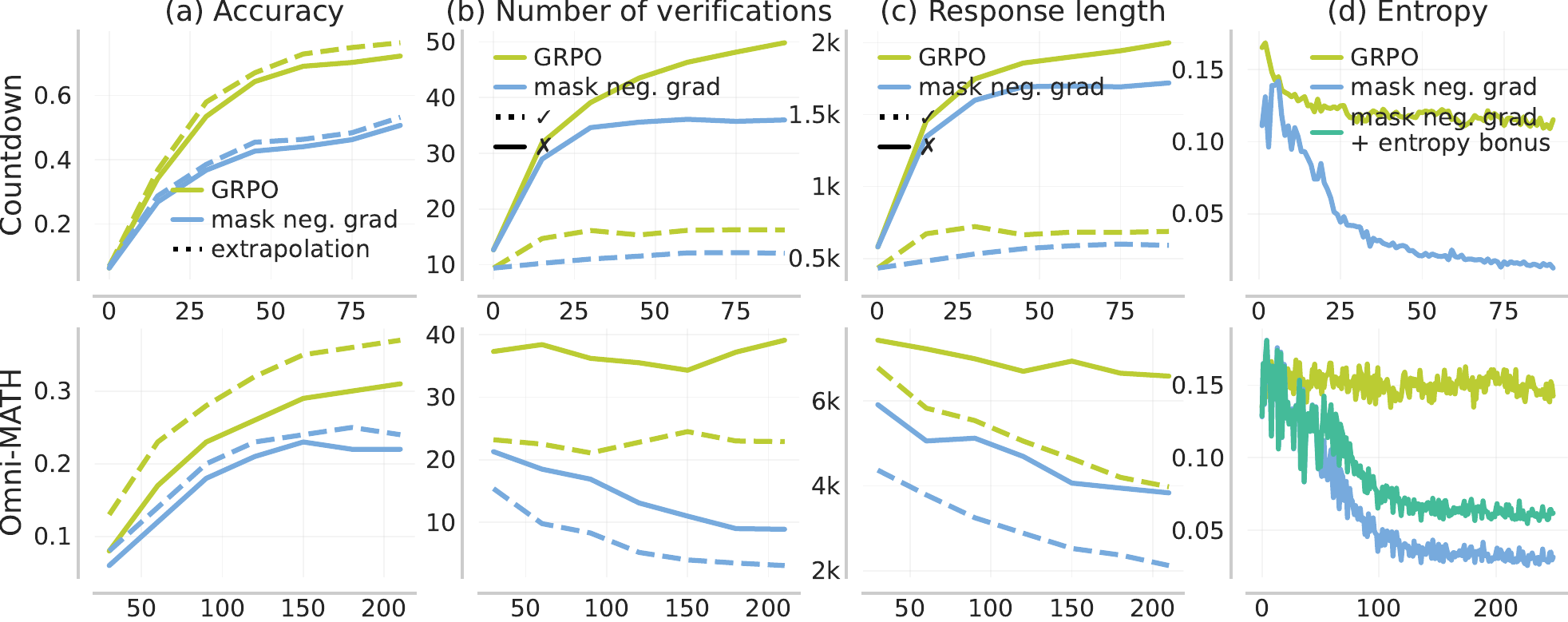}
    \vspace{-0.15cm}
      \caption{\footnotesize \textbf{\emph{RL training with and without negative gradients:}} When the base model admits asymmetries, negative gradients promote in-context exploration by: (i) increasing length ((c)) and chaining asymmetries, which shows up as more verification attempts (b); and (ii) increasing token entropy and thus response diversity (d). This leads to better performance on the training budget and upon extrapolation.  In (b, c), $\cmark$ denotes the statistic computed on correct responses and $\xmark$ on incorrect responses. \vspace{-0.5cm}}
    \label{fig:negative-grad-main}
\end{figure}

\vspace{-0.3cm}
\section{Negative Gradients Incentivize Exploration that Chains Asymmetries}
\label{sec:negative-gradient}
\vspace{-0.2cm}
Having observed that the presence of asymmetry in the base model is a prerequisite for in-context exploration, the next question is: What enables models to exploit and chain these asymmetries during RL?  
In this section, we show that a crucial ingredient here is the \termbold{negative gradient}, the gradient term multiplied by a negative advantage in \eqref{eq:policy-grad}. Negative gradient drives in-context exploration via two mechanisms: (i) incentivizing sampling of unseen token sequences; (ii) chaining asymmetries like VG gap (Sec.~\ref{sec:vg-gap}) that rapidly drives up response length and the number of verification attempts.  Note that while mechanism \textbf{(i)} corresponds to the \textit{classical} notion of exploration, mechanism \textbf{(ii)} is special in that it corresponds to a form of ``structured'' exploration over strategies already in the model. When put in RL terminology, this corresponds to ``meta exploration''~\citep{liu2020learning,gupta2018umrl}. We study these effects in this section.

\begin{wrapfigure}{r}{0.5\textwidth}
\vspace{-0.7cm}
  \begin{center}
    \hspace{-0.3cm}
    \includegraphics[width=0.49\linewidth]{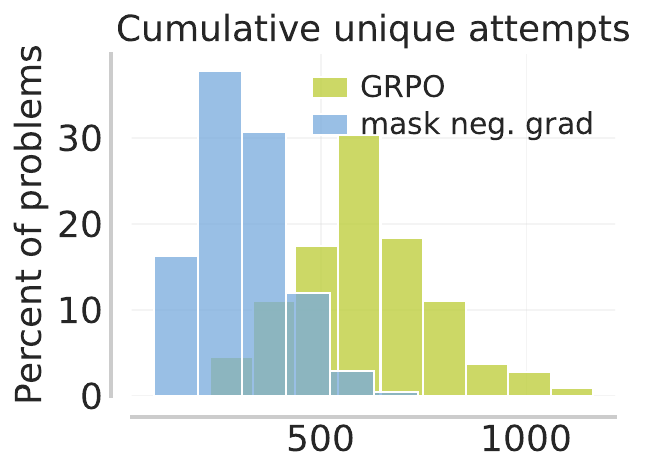}
    \includegraphics[width=0.49\linewidth]{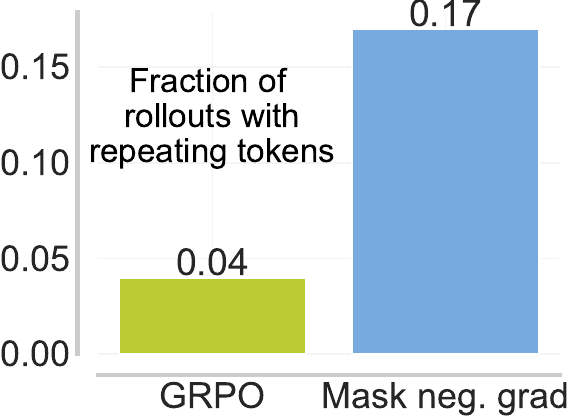}
  \end{center}
  \vspace{-0.6cm}
  \caption{\footnotesize{
   \textbf{\emph{Negative gradient encourages distinct responses:}} it increases the cumulative number of unique attempts on \cdown~(left) and reduce responses that end with a repeating stream of tokens on \mathdata~(right) vs. masking it out.
  \vspace{-0.2cm}}}
\label{fig:unique-attempts}
\vspace{-0.3cm}
\end{wrapfigure}
\textbf{Analysis setup.} We analyze the evolution of response length, performance, and the number of chained asymmetries by comparing two training algorithms: (i) standard outcome-reward RL using \Grpo~\citep{shao2024deepseekmath} with token-level normalization~\citep{yu2025dapo}; (ii) \GrpoM, which zeros out (i.e. masks) the negative gradient and whilst retaining the \emph{positive} gradient,  thereby resembling an approach close to online STaR~\cite{zelikman2022star} or RFT~\cite{yuan2023scaling}. We conduct our experiments on \cdown~and \mathdata~reasoning (questions sourced from  DeepScaleR dataset~\citep{deepscaler2025}) that exhibit the VG asymmetry. We make the following observations:

\emph{\textbf{Finding 1: Negative gradients promote diverse responses during RL training}}, encouraging exploration at two levels: (i) within a rollout; and (ii) across different rollouts. For (i), we observe that removing the negative gradient results in an entropy collapse over the next-token distribution (Fig.~\ref{fig:negative-grad-main} (d)). 
 This curtails diversity and leads to responses with a repeating stream of tokens when extrapolating the trained model to larger budgets (Fig.~\ref{fig:unique-attempts}). For (ii), we measure the cumulative unique attempts on the \cdown~test set as we train the model (Fig.~\ref{fig:unique-attempts}) by separating each rollout into attempts using ``\textbackslash n\textbackslash n'' and parsing the equations from each attempt. An attempt is unique when its equations differ from those of other attempts from rollouts across all gradient steps. We find more unique attempts when training with negative gradients. Therefore, utilizing the negative gradient clearly enhances exploration. While it is not surprising that RL algorithms benefit from exploration~\cite{hazan2019maxent}, we next explain how, distinctly from standard RL, this exploration can be particularly effective when extrapolating to larger test budgets.

\emph{\textbf{Finding 2: Negative gradient increases the number of chained asymmetries}}, and thereby boosts structured exploration (and extrapolation) as we show next. Concretely, when training on an incorrect response $\by$ with tokens $y_1, y_2,.., \texttt{EOS}$, the negative gradient reduces the conditional probability of each token $y_i$ conditioned on the prefix $y_{1:i-1}$ appearing in this response, i.e. $p(y_i|\by_{1:i-1})$. This process also reduces the probability of the \texttt{EOS} token: $p(\texttt{EOS} |\by)$, for any incorrect response that ends within the response budget. Where does this probability mass go? Clearly since total probability is conserved, this probability mass must be repurposed to increase the likelihood of other tokens. Fig.~\ref{fig:negative-grad-main}(b) shows that the probability mass recovered from the negative gradient is repurposed to increase the probability of chaining new pairs of asymmetric skills to the current trace (\textit{e.g.}, ``Wait, ...'' instead of terminating with $\texttt{EOS}$). This chaining results in a greater response length \textbf{(c)} and higher overall performance. 

When negative gradients are masked (\GrpoM) in \cdown, we see that attempts Fig.~\ref{fig:negative-grad-main}(b) and length Fig.~\ref{fig:negative-grad-main}(c) plateau, accompanied by a decrease in performance. The relative trends between \Grpo{} and \GrpoM~are similar for \mathdata, but differ in absolute values (e.g., the number of absolute chained asymmetries decline in the absence of negative gradients). We include further results in App.~\ref{sec:negative-gradient-appendix} (Fig.~\ref{fig:neg-grad-multiply}), where we also demonstrate that \mult~(which does not exhibit asymmetries) benefits far less from negative gradients. This mechanism for boosting exploration by chaining new asymmetries is different from the typical notions of improving coverage or trying novel tokens discussed in Finding 1.

\emph{\textbf{Finding 3: LLMs trained with negative gradients extrapolate better.}}
Finally, we explain why negative gradients enable extrapolation. Longer responses that chain asymmetries are more likely to yield correct answers and thus receive positive reward. Therefore, the policy gradient update reinforces chaining and improves in-context exploration, and this process exhibits a ``rich gets richer'' effect, where further training incentivizes more in-context exploration (since the gap between number of verifications with and without negative gradient increases as training progresses in Fig.~\ref{fig:negative-grad-main}(b)). As discussed in Sec.~\ref{sec:introduction}, models that learn to explore in-context benefit from additional test-time compute—greater search leads to better performance under large value gaps. Fig.~\ref{fig:negative-grad-main}(a) confirms this: on hard \mathdata\ problems (we classify a problem as hard if QwQ-32B attains pass@32 performance of \emph{\textbf{zero}}), doubling the test-time budget amplifies the performance gap when negative gradients are used, compared to the masked variant.

\begin{AIbox}{Takeaways: Negative gradient incentivizes in-context exploration with large VG gaps}
\begin{itemize}[leftmargin=0.7em]
    \setlength\itemsep{0em}
    \item Negative gradients in RL ``move'' probability from short-length incorrect answers onto other modes, \textit{e.g.,} those that exploit asymmetries or those that end in a correct answer. When the VG gap is large, longer responses that chain more asymmetries and eventually discover the right answer are rewarded and reinforced. As a result, in-context exploration is reinforced. 
    \item Negative gradients boost response diversity and thus coverage over correct answers, as confirmed by our empirical results on \cdown{}, \mathdata{}, and theoretical results in the bi-gram model. 
\end{itemize}
\end{AIbox}

\newcommand{\Aspace}{\gA}
\newcommand{\Abarspace}{\bar{\gA}}
\newcommand{\astar}{a^\star}
\newcommand{\stopact}{\bm{\texttt{stop}}}
\newcommand{\Laws}{\triangle}
\newcommand{\pinot}{\pi_{0}}

\vspace{-0.25cm}
\section{Analyzing Negative Gradient Dynamics in the \texorpdfstring{$p^k$}{pk} Model}
\label{sec:pk_model}
\vspace{-0.2cm}

In this section, we introduce a didactic $p^k$ model, where an LLM samples $k$ independent actions sequentially, verifies them (with a perfect accuracy), and terminates immediately after the correct one is produced. In this section, we introduce a didactic setup where verification is perfect (and hence, there is a high VG gap), and formalize the intuitions regarding negative gradient from the previous section.  

\textbf{Didactic analysis setup.} We consider a Markov decision process (MDP)~\citep{puterman1994markov}
with action space $\Abarspace = \Aspace \cup \{\stopact\}$, where $\Aspace = [100]$  are standard actions and $\stopact$ is an early ``stopping'' action (like \texttt{EOS}) that terminates the trace. For simplicity, we consider policies parametrized as a softmax bigram model $\pi_{M}(a_{t+1} \mid a_t)$: in this model, the policy only retains one token in its history and is parameterized by a softmax over logits described by bi-grams, i.e., $\pi_M(a_{t+1} | a_t) \propto \exp(M(a_t, a_{t+1}))$. In this bi-gram model, the current state $s_t$ always matches the previous action $a_{t-1}$, and $a^\star \in \Aspace$ denotes the optimal action. In a rollout $a_1,..., a_t$, the initial action $a_1$ is sampled from a fixed $\pinot$. For $t > 1$, a learner policy samples an action $a_t$$\sim$$\pi(\cdot|a_{1:t-1})$ $\in$ $\Laws(\Aspace)$. The MDP terminates with reward $1$ at time $t$ if $a_t = \astar$,  and with reward $0$ if  $a_t = \stopact$ (stops too early),  or  $t > \btr$ (budget is exhausted before a correct response). The policy is initialized to one that puts a high probability mass on choosing $a=$ \texttt{stop}. Details are in App.~\ref{sec:negative-gradient-appendix}.

We say that the model \textbf{learns to explore in-context} if it learns to never play $\stopact$ for any $t$ (no early stopping), until $a^\star$ is observed, \textit{i.e.}, increasing $k$ in $p^k$.  On the other hand, \textbf{classical exploration} amounts to upweighting $\pi(\astar \mid a_{1:t-1})$ without reducing $p(\stopact)$, \textit{i.e.}, improving $p$ in $p^k$.

\begin{figure}
    \centering
    \includegraphics[width=0.99\linewidth]{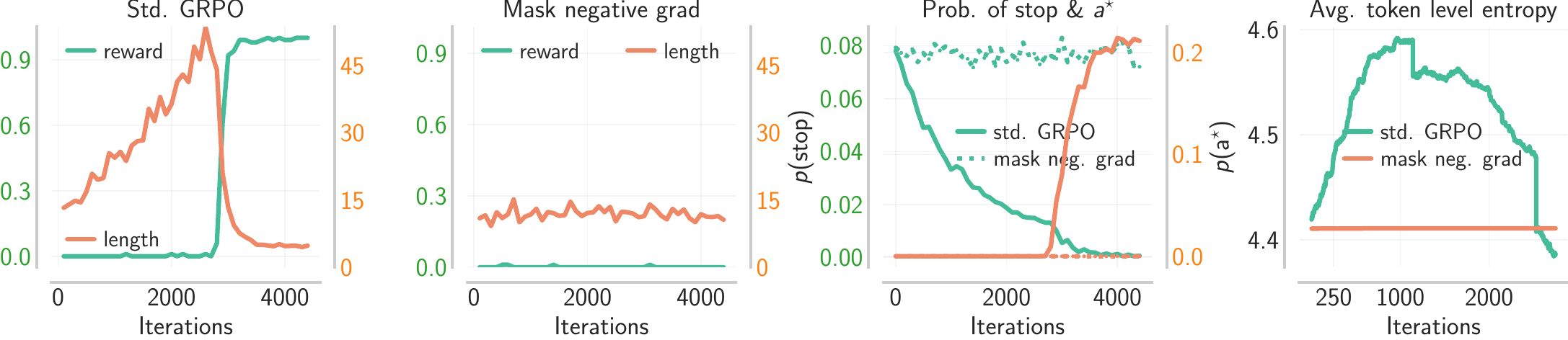}
    \vspace{-0.1cm}
    \caption{\footnotesize{\emph{\textbf{Negative gradients in the $p^k$-model.}}} Negative gradients push down $p(\stopact)$ during training (c), increasing length (a) and entropy of the next action distribution (d) to accommodate more in-context exploration, only decreasing them when $\astar$ is discovered. In contrast, positive gradients rarely change $p(\stopact)$ or entropy.  \vspace{-0.55cm}}
    \label{fig:simulation-main-fig}
\end{figure}

\emph{\textbf{Finding 1: Negative gradient increases length until $p(a^\star)$ is reasonably high.}} In Fig.~\ref{fig:simulation-main-fig}(a), standard \Grpo~($\btr=100$) increases average response length from $15$ to $45$ at budget, driven by the drop in the marginal probability of stopping early $p(\stopact)$ (Fig.~\ref{fig:simulation-main-fig}(c)). After multiple RL iterations with negative gradients, the average number of attempts per trace is sufficiently large, and the learner can sample $a^\star$ with non-trivial probability in any given trace. Once this happens (Fig.~\ref{fig:simulation-main-fig}(c)), in our simple bigram setup, the model rapidly upweights the likelihood of one-step transitions to $a^\star$, resulting in a phase transition where reward increases as length drops. In contrast, \GrpoM~(Fig~\ref{fig:simulation-main-fig}(b)) fails to improve reward or increase length.  The first phase is akin to chaining more asymmetries in LLMs and results  in a longer response length. In our LLM benchmarks, however, we do not see the same phase transition since finding ``shortcuts'' to correct responses is considerably more difficult. Moreover, the LLM is conditioned on an entire history and learns to utilize the history carefully in the first phase. This makes it unlikely for it to quickly learn to reduce length substantially even if it transitions into this second phase on some problems.

\emph{\textbf{Finding 2: Negative gradient improves coverage by increasing entropy of $\pi_M(\cdot \mid a_{1:t-1})$.}} When $\pi_M$ samples a highly likely yet incorrect action, the negative gradient computed on this sample increases entropy by moving probability mass onto less-seen modes of the distribution, including $a^\star$. Note that no explicit entropy bonus is applied. We show this formally in Theorem~\ref{thm:neg-gradient-entropy-informal} where we prove that upon sampling a highly likely incorrect action with probability $p$, \Grpo{} update with a negative gradient results in an entropy increase of $\approx p^2$ when all other actions, including $a^\star$ are highly unlikely. We note this empirically as well in Fig.~\ref{fig:simulation-main-fig}(d), where conditional entropy increases across states, until $a^\star$ is discovered, after which it drops sharply as the positive gradient rapidly moves mass onto $a^\star$ within a few iterations.

\begin{theorem}[Negative gradient increases entropy when $a^\star$ is unlikely;  formal version in Thm.~\ref{thm:neg-gradient-entropy-formal}] 
\label{thm:neg-gradient-entropy-informal}
At state $\bs$, if the most likely action under $\pi$ is $a_1 \eqdef\argmax_{a'}\pi(a' | \bs) \neq a^\star$, 
then, for any $\pi$, a  negative stochastic gradient step increases the entropy of $\pi(\cdot|\bs)$ with prob. $\geq \pi(a_1 | \bs)$. Additionally, in a suitable regime of $\pi$, the  increase $\gsim$$(\pi(a_1| \bs)-\pi(a_2| \bs))^2$, where $a_2$ is second most likely after $a_1$. In contrast, in the absence of the negative gradient, the entropy is preserved with prob. $1-\pi(a^\star | \bs)$.
\end{theorem}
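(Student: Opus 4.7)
The plan is to reduce the statement to an explicit analysis of how one stochastic policy-gradient step with negative advantage on the sampled action reshapes the conditional distribution $\pi(\cdot \mid \bs)$. Writing the bigram softmax as $\pi(a \mid \bs) \propto \exp(M(\bs, a))$ and differentiating $\log \pi(a \mid \bs)$ with respect to $M(\bs, \cdot)$, a negative step of effective size $\eta > 0$ on a sampled action $a$ simply subtracts $\eta$ from the logit $M(\bs, a)$ (the uniform contribution on the other entries cancels under softmax normalization). This yields the closed form $\pi'(a \mid \bs) = e^{-\eta} p_a / Z$ and $\pi'(a' \mid \bs) = p_{a'}/Z$ for $a' \neq a$, where $p_a := \pi(a \mid \bs)$ and $Z = 1 - (1 - e^{-\eta}) p_a$. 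All subsequent steps work from this one-parameter family.

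For the first assertion, I condition on the event $\{a = a_1\}$, which holds with probability $p_1 = \pi(a_1 \mid \bs)$ since $a$ is sampled from $\pi(\cdot \mid \bs)$. In that event the update strictly decreases the mass on the mode $a_1$ and multiplicatively boosts every remaining $p_{a'}$ by the factor $1/Z > 1$. Because $a_1$ is the argmax, $p_1 > 1/|\gA|$, so the redistribution is a mean-preserving equalization: the updated vector is strictly majorized by the original. Schur concavity of the Shannon entropy then yields $H(\pi'(\cdot \mid \bs)) > H(\pi(\cdot \mid \bs))$, which establishes the probability lower bound.

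For the quantitative part, the plan is to Taylor expand $H$ along the one-parameter family and extract a lower bound. Using $\sum_{a'} \Delta p_{a'} = 0$, the first-order change reduces to $-\sum_{a'} \Delta p_{a'} \log p_{a'}$, which is positive once $p_1$ exceeds the geometric mean of the other coordinates. Under the ``suitable regime'' of $\pi$, which I will formalize as mass concentrating on the top two actions (e.g.\ $p_1 + p_2 \geq 1 - \delta$ for small $\delta$), the update essentially collapses onto the binary-entropy slice $h(p_1) = -p_1 \log p_1 - (1 - p_1)\log(1 - p_1)$. Computing $h(p_1 - \Delta) - h(p_1)$ when the displacement $\Delta$ brings $p_1$ toward the symmetric midpoint of $\{a_1, a_2\}$ and using $h''(1/2) = -4$ produces a quadratic lower bound of the form $\gsim (p_1 - p_2)^2$, with the residual contribution from the tail $1 - p_1 - p_2$ entering at higher order in $\delta$.

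Finally, the contrast claim requires little effort: a positive-only update modifies $\pi(\cdot \mid \bs)$ only when the sampled action has positive advantage, which in this MDP happens only when $a = a^\star$; with probability $1 - \pi(a^\star \mid \bs)$, the sampled action is incorrect, no update is applied, and hence $\pi(\cdot \mid \bs)$ and its entropy are preserved. The main obstacle I anticipate is pinning down the ``suitable regime'' so that the $(p_1 - p_2)^2$ scaling holds with an absolute constant rather than one that degrades as $p_1 \to 1$ or as the tail mass grows. I plan to handle this by performing the second-order expansion restricted to the $\{a_1, a_2\}$ slice and then controlling the perturbation from the remaining $|\gA| - 2$ coordinates with a crude but sufficient Lipschitz bound on $H$ in the total-variation metric.
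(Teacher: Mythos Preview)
Your reduction to a single-logit shift is incorrect. Differentiating $\log\pi(a\mid\bs)$ with respect to the logit row $M(\bs,\cdot)$ gives $e_a-\pi(\cdot\mid\bs)$; the second summand is \emph{not} uniform across coordinates and therefore does not cancel under softmax normalization. The true negative step is $M_b\leftarrow M_b-\eta\bigl(\mathbf 1(b{=}a)-\pi_b\bigr)$, which for $b\ne a$ adds $\eta\pi_b$ and hence changes the ratios $\pi_b/\pi_c$ among the non-sampled actions; your closed form $\pi'(a')=p_{a'}/Z$ therefore fails. The majorization conclusion can be rescued for small $\eta$ (one checks $\Delta\pi_b>0$ for every $b\ne a_1$ via $\|\pi\|_2^2\le\pi_{a_1}$), but it needs the correct update and in fact breaks for large $\eta$, so Schur concavity alone does not settle the statement without the small-$\eta$ Taylor argument the paper uses anyway.

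The quantitative part has a sharper gap: the displacement $\Delta$ of $p_1$ after one gradient step is of order $\eta$, not $(p_1-p_2)/2$, so the update does not ``bring $p_1$ toward the symmetric midpoint'' and the binary-entropy expansion around $h''(1/2)$ gives you no handle on $(p_1-p_2)^2$. What the theorem actually bounds is the \emph{coefficient of $\eta$} in the entropy change. The paper computes this alignment $T(a):=\langle\nabla_MH,\,A\cdot\nabla_M\log\pi(a\mid\bs)\rangle$ in closed form as $v_a-\mu$ with $v_i=\pi_i(H+\log\pi_i)$ and $\mu=\sum_i\pi_iv_i$, shows $T(a_1)\ge 0$ from convexity of $g(x)=x(H+\log x)$ (this gives the probability-$\pi_{a_1}$ claim), and then lower-bounds $T(a_1)\ge(1-\pi_{a_1})\bigl(g(\pi_{a_1})-g(\pi_{a_2})\bigr)$ under the regime $\pi_{a_2}\ge e^{-H}$; a second-order Taylor of $g$ on $[\pi_{a_2},\pi_{a_1}]$ with curvature $g''(x)=1/x$ produces the $(p_1-p_2)^2$ factor. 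The paper's ``suitable regime'' $\pi_{a_2}\ge e^{-H}$ is much weaker than your two-mass concentration assumption $p_1+p_2\ge 1-\delta$, and it is precisely by \emph{not} collapsing to a binary slice that the $K$-dependence in the formal bound arises.
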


\vspace{-0.4cm}
\section{Coupled Curriculum Training Structures Exploration in Long Length RL}
\label{sec:structured-exploration}
\vspace{-0.1cm}

In the presence of asymmetries, training with negative gradients produces models that can extrapolate beyond their training budget.
However, of course, training on just \emph{any} arbitrarily chosen training token budget $\btr$ is not enough: if $\btr$ is too small, then we would not expect any form of in-context exploration to emerge. Perhaps unsurprisingly it turns out that a much larger $\btr$ is also not sufficient. In Fig.~\ref{fig:curriculum-panel-0}(a), we show that different training budgets $B_\mathrm{tr}$ lead to different levels of performance on the training budget, as well as extrapolated test compute. \emph{So how should we set the budget $B_\mathrm{tr}$ to attain strong extrapolation performance?} And in correspondence with token budgets, \emph{what prompts should we be training on for a given budget?} To answer these questions, we run several training runs at different budgets.

\vspace{-0.3cm}
\subsection{Training on Static Budgets or Data Mixtures is Insufficient}
\vspace{-0.2cm}

\textbf{Setup.} We evaluate extrapolation performance on \mathdata{} and \cdown{} after  training on  different budgets and prompt compositions. We split \mathdata{} evenly across three levels of hardness as measured by the performance of Qwen-R1-Distilled-32B accuracy.
For \cdown, we judge problem difficulty based on the number of terms in the equation. We use the GRPO~\cite{shao2024deepseekmath} algorithm to train models on all compute budgets and datasets (see App.~\ref{sec:structured-exploration-appendix} for the hyperparameter configurations we use).

\textbf{\emph{Finding 1: Training solely at low or high $B_\mathrm{tr}$ is not desirable.}} We train on the easy \mathdata~problems at different training budgets $B_\mathrm{tr}$= 4k, 8k,16k (see Fig.~\ref{fig:curriculum-panel-0}(a)). While training at the short budget $B_\mathrm{tr}$= 4k attains the best performance at the same test budget of 4k tokens, it ``kills'' in-context exploration since traces with many chained asymmetries are typically longer than the training budget of 4k and traces that might succeed after spending 4k tokens are negatively rewarded. Overall, this hinders length increase and chaining of asymmetries driven by the negative gradient, leading to poor extrapolation (no gains from 8k to 32k). 
Fig.~\ref{fig:curriculum-panel-0}(c) shows that this biases the model to stop early and terminate incorrectly.

On the other extreme, training at $B_\mathrm{tr}$= 16k introduces significant optimization challenges,
typical of policy gradients in long horizons suffering from high gradient variance~\cite{agarwal2021theory}. This model performs worse on its own training budget of 16k compared to a model trained on  $\btr$= 8k and extrapolated ``zero-shot'' to 16k. We find that $B_\mathrm{tr}$= 8k attains the best scaling when extrapolating test compute, implying that the choice of $\btr$ needs to strike a balance between: (i) the length budget available for negative gradient to encourage chained asymmetries (infeasible in $<$4k tokens); and (ii) mitigating optimization challenges.%

\begin{figure}[!t]
     \captionsetup{font=small,skip=0pt}
    \centering
    \begin{subfigure}[b]{0.24\textwidth}
    \captionsetup{font=small,skip=2pt}
        \centering
        \includegraphics[width=0.99\textwidth]{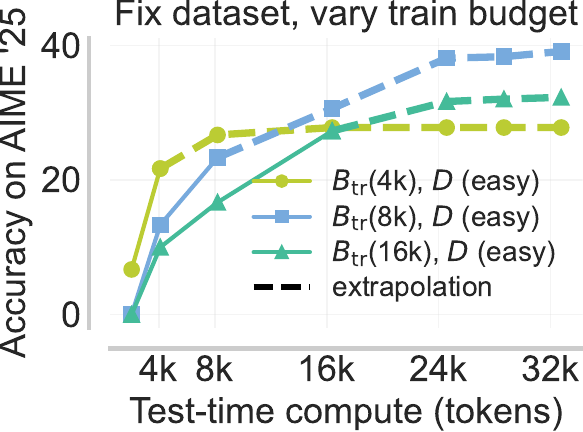}
        \caption*{\centering(a)}
    \end{subfigure}\hfill
    \begin{subfigure}[b]{0.24\textwidth}
     \captionsetup{font=small,skip=2pt}
        \centering
        \includegraphics[width=0.99\textwidth]{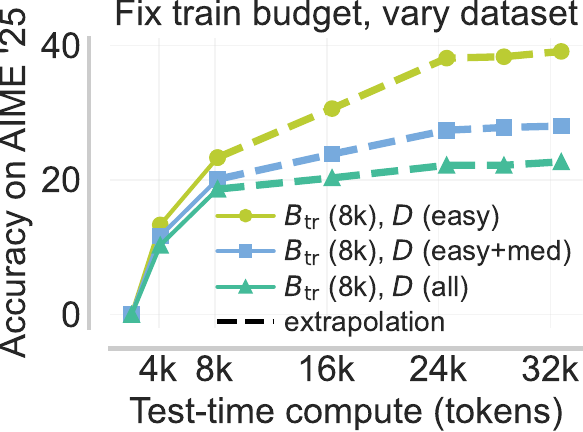}
        \caption*{\centering(b)}
    \end{subfigure}\hfill
    \begin{subfigure}[b]{0.24\textwidth}
    \captionsetup{font=small,skip=2pt}
        \centering
        \includegraphics[width=0.99\textwidth]{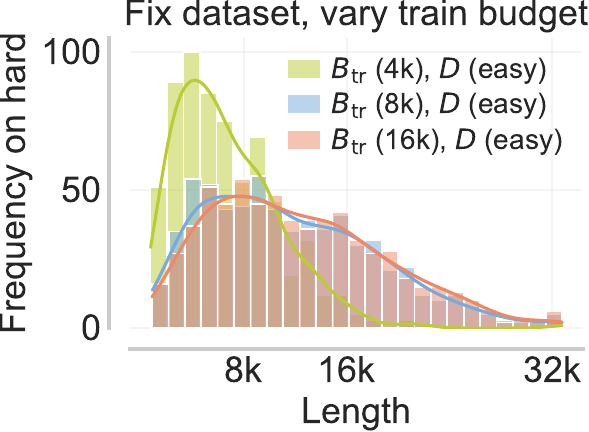}
        
        \caption*{\centering(c)}
    \end{subfigure}
        \begin{subfigure}[b]{0.24\textwidth}
    \captionsetup{font=small,skip=2pt}
        \centering
        \includegraphics[width=0.99\textwidth]{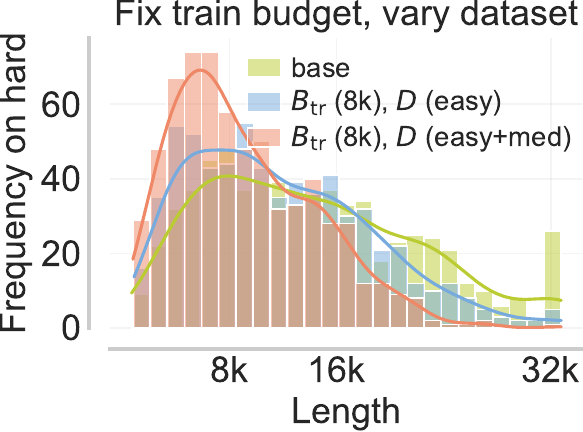}
        
        \caption*{\centering(d)}
    \end{subfigure}
    \caption{\footnotesize{{\textbf{\emph{RL training on different data and length budgets.}} \textbf{(a), (c):} Optimal results come from balancing optimization difficulty (better at shorter budgets) and in-context exploration (better at longer budgets). \textbf{(b), (d):} Training on hard problems at the 8k token budget kills longer traces with in-context exploration needed to discover solutions for hard problems}\vspace{-0.5cm}}}
    \label{fig:curriculum-panel-0}
\end{figure}

\textbf{\emph{Finding 2: Training na\"ively on a static data mixture is insufficient.}} Having identified a reasonable training budget of 8k, we now turn to studying the effect of data compositions (prompt mixtures). 
To do so, we compare the na\"ive training data mixture with equal proportions of all difficulties (easy + medium + hard) against easy, easy + medium  at $B_\mathrm{tr}$= 8k. As expected, matching train and test composition is ideal for better \emph{in-distribution} performance, \textit{i.e.}, when evaluating models at a test budget of $\btr$, equal to the training budget (see App.~\ref{sec:structured-exploration-appendix}). 
However, perhaps surprisingly, the same is not true for performance on out-of-distribution (OOD) problems, especially when performance is computed at budgets $\gg\btr$. As shown in Fig.~\ref{fig:curriculum-panel-0}(b), the model trained on \emph{only easy} problems obtains the best performance on OOD AIME'25 when extrapolating compute to 32k. This is despite the fact that AIME'25 problems resemble harder problems and few prior AIME problems are also present in the hard subset of  \mathdata{}. 

\textbf{\emph{Why does this happen?}} Given a dataset, training on budgets smaller than the length of a typical response for the base model on that dataset penalizes in-context exploration early in training. This results in overly short solutions (see Fig.~\ref{fig:curriculum-panel-0}(d)) that are mostly exploitative. When projected to our $p^k$ model from Section~\ref{sec:pk_model}, this means that at overly short budgets, RL mainly attempts to improve the failure probability $p$ of the best guess response, and does not learn to increase $k$ which corresponds to chaining asymmetries. To increase $k$, it needs to be able to learn to increase the number of attempts and requires a large enough budget. But the budget cannot be too large to result in optimization challenges.

\emph{\textbf{How can we avoid challenges with training on a fixed dataset and length budget?}}
One approach to avoid the above challenges is to incorporate a curriculum that varies $B_\mathrm{tr}$ over training. However, this alone is insufficient because, as shown above, training on hard problems with short budgets suppresses length and in-context exploration. On the other hand, we can design a curriculum over the difficulty level and keep the training budget fixed at a high enough value. However, this presents optimization challenges as we also describe above, and  leads to learning over-exploratory traces tailored to easy problems (see App.~\ref{sec:structured-exploration-appendix} for a detailed study of this on \cdown). \textbf{In a nutshell}, a curriculum that only varies the training budget or the dataset composition is insufficient to incentivize in-context exploration. To mitigate this, we describe our recipe which proposes a ``coupled'' curriculum over data composition and training budget.

\vspace{-0.2cm}
\subsection{{Our Recipe \textbf{{\texttt{e3}:  Coupled Curriculum for In-Context Exploration}}}} 
\vspace{-0.2cm}
\label{subsec:coupled-curriculum}
We develop a \termbold{coupled curriculum} that varies the training budget $\Btr$ and problem difficulty in a coordinated fashion during RL training on a base model with asymmetries. We refer to our recipe (chained asymmetries, negative gradient, and the coupled curriculum) as {\texttt{e3}}: \textit{\underline{e}xploration \underline{e}nables \underline{e}xtrapolation}.

\newcommand{\btri}{B_{\mathrm{tr},i}}
\newcommand{\btris}{B^\star_{\mathrm{tr},i}}

\begin{figure}[!t]
     \captionsetup{font=small,skip=0pt}
    \centering
    \begin{subfigure}[b]{0.32\textwidth}
    \captionsetup{font=small,skip=2pt}
        \centering
        \includegraphics[width=0.99\textwidth]{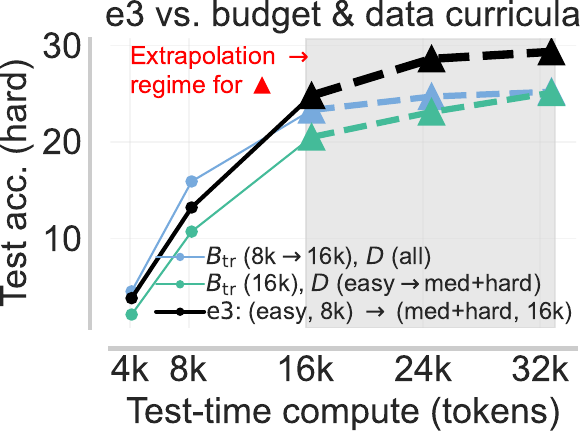}
        
        \caption*{\centering(a)}
    \end{subfigure}
    \begin{subfigure}[b]{0.32\textwidth}
    \captionsetup{font=small,skip=2pt}
        \centering
        \includegraphics[width=0.99\textwidth]{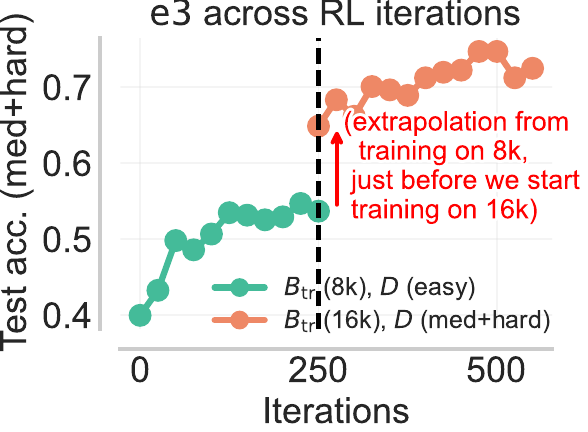}
        
        \caption*{\centering(b)}
    \end{subfigure}
    \begin{subfigure}[b]{0.32\textwidth}
    \captionsetup{font=small,skip=2pt}
        \centering
        \includegraphics[width=0.99\textwidth]{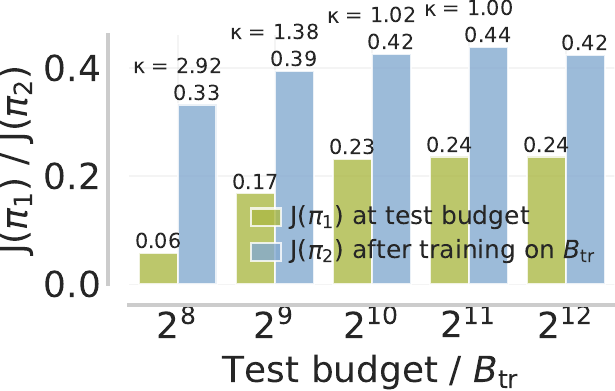}
        
        \caption*{\centering(c)}
    \end{subfigure}
\caption{\footnotesize{{\textbf{\emph{RL training with coupled curricula:}} \textbf{(a):} coupled curriculum outperforms data and budget curricula (shaded area indicates the extrapolation regime), \textbf{(b):} extrapolation gain from switching to a longer token budget of 16k on medium and hard problems, \textbf{(c):} illustrating how budget $\btr$ can be selected via Eq.~\ref{eq:opt-curr} on \cdown; $B_{\mathrm{tr}}=2^{10}$ is the smallest value with $\kappa < 1.2$, and it corresponds to the budget where accuracy plateaus. $J(\pi_1)$ is pass@128 performance and $J(\pi_2)$ is accuracy.\vspace{-0.3cm}}}}
    \label{fig:curriculum-panel-2}
\end{figure}

\textbf{\emph{Key insight for curriculum design.}} 
We simplify curriculum design by fixing the dataset at each stage and progressively increasing task difficulty in a stage-wise manner, from easy to hard. Now, at each curriculum stage~$i$, we define a dataset~$D_i$ and focus on selecting an appropriate token budget~$\btri$. The goal is to choose~$\btri$ such that training with this budget encourages \emph{in-context exploration}. That is, RL should reward longer reasoning traces that successfully chain asymmetries and are discoverable with high probability under the current model~$\pi_i$, within budget $\btri$. This ensures that the resulting policy can extrapolate to longer sequences and provides a strong initialization for the next stage~$i{+}1$, where the token budget increases to~$B_{\mathrm{tr},i+1}$.
At the same time, for optimization to be efficient, the budget~$\btri$ should be as small as possible while still accommodating most valid completions from~$\pi_i$. Balancing these desiderata, we formalize the choice of~$\btri$ via the following optimization as a thumb rule:
{
\setlength{\abovedisplayskip}{8pt}
\setlength{\belowdisplayskip}{5pt}
\begin{align}
    {\btris}(D_i) = \argmin_{B \geq B_0} B ~~~ \text{s.t.}~J(\pi_i; D_i, 2\cdot B) \; \leq \; \kappa \cdot J(\pi_i; D_i, B) , \;\; \kappa > 1
    \label{eq:opt-curr}
\end{align}}where $J(\pi ; D , B)$ denotes the performance of  $\pi$ at budget $B$ on dataset $D$,  and the budget $B_0$ denotes a reasonable minimal length for $\pi$ on dataset ${D}_i$, \textit{e.g.}, $B_0$ can be the average token length of responses from $\pi$ on $D_i$. In practice, we solve the optimization over $B$ by restricting to a fixed set of training budgets:
4k, 8k, 16k. We find the above strategy of choosing the token budget to be a useful heuristic for greedily choosing the budget $\btri$ at stage $i$ of the curriculum in a way that incentivizes in-context exploration, since it is challenging to jointly optimize the budgets across all stages. 
\textit{E.g.}, setting $\kappa=1.2$, we find 8k to be the best choice for training on easy problems (observe that the trained model satisfies the condition in \eqref{eq:opt-curr} at $\kappa=1.2$ in Fig.~\ref{fig:curriculum-panel-0}(a)). 
Following this, \ethree{} fine-tunes the Qwen3-1.7B base model on easy problems in \mathdata{} at $\btr$ of 8k, and subsequently continues training on medium and hard problems in \mathdata{} with a token budget of 16k. 
For training on medium and hard problems in \mathdata{}, we can also optimize the training budget, as we did for the run on easy problems. From Fig.~\ref{fig:curriculum-panel-0}(a), we note that the model trained with a token budget of 8k extrapolates compute to a budget of 16k and even 24k on AIME '25, after which the gains start diminishing. We find similar extrapolation performance on medium and hard problems in \mathdata{}. Thus, we can safely train on a budget of 16k or 24k on this set, and due to GPU memory constraints, we chose to train on the shorter of the two (16k).    
Finally, in Fig.~\ref{fig:curriculum-panel-1}(c), we show that the model produced by \ethree{} by training on easy problems at the end of the first stage does extrapolate well, which is helpful to kickstart RL training when we move from the budget of 8k to 16k. Concretely, we observe a >10\% performance gain with extrapolation.

\begin{table}[t]
  \centering  
  \small
  \begin{tabular*}{0.94\textwidth}{@{}%
    >{\raggedright\arraybackslash}p{0.3\textwidth}%
    @{\hskip 4pt}c@{\hskip 4pt}c@{\hskip 4pt}c@{\hskip 4pt}c@{\hskip 4pt}c@{\hskip 4pt}c%
    |@{\hskip 4pt}c@{\hskip 4pt}c@{\hskip 4pt}c@{\hskip 4pt}c@{\hskip 4pt}c@{\hskip 4pt}c@{}}
    \multirow{2}{*}{\textbf{Model}} &
      \multicolumn{6}{c}{\textbf{AIME 2025}} &
      \multicolumn{6}{c}{\textbf{HMMT 2025}} \\
      & $k{=}1$ & 2 & 4 & 8 & 16 & 32
      & $k{=}1$ & 2 & 4 & 8 & 16 & 32 \\
    \midrule
    Qwen3-1.7B~\citep{yang2025qwen3}              & 35.5 & 41.4 & 47.0 & 52.4 & 58.3 & 65.2
                                 & 22.2 & 27.3 & 33.0    & 39.5    & 46.7    & 54.9    \\
    R1-distill-Qwen-1.5B~\citep{deepseekai2025deepseekr1incentivizingreasoningcapability}     & 23.1    & 29.2    & 34.5    & 40.1    & 46.3    & 52.5
                                 & 12.5    & 19.1    & 24.3    & 27.9    & 36.1    & 42.8   \\
    Nemotron-Reasoning-1.5B~\citep{liu2025prorl} & 33.6    & 38.5    & 43.6    & 48.9    & 53.8    & 58.0
                                 & 17.4    & 22.5    & 29.6    & 35.2    & 40.7    & 45.0    \\
    \midrule
    \textbf{\texttt{e3}-1.7B (Ours)}                           & \textbf{43.8} & \textbf{51.1} &\textbf{ 56.7} & \textbf{60.8} &\textbf{ 64.0 }&\textbf{ 67.2}
                                 & \textbf{24.7} &\textbf{ 30.4} & \textbf{37.0} & \textbf{44.1} & \textbf{50.8} & \textbf{56.1} 
  \end{tabular*}
  \vspace{-0.2cm}
  \caption{\footnotesize \textbf{\textit{Final results with \ethree{}: Best $<$2B sized model on AIME/HMMT'25}}: We measure \texttt{pass@k} (\%) on AIME'25 and HMMT'25 for our 1.7B model obtained by post-training the Qwen3-1.7B base model on \mathdata{} with our recipe \ethree{}. Following Sec.~\ref{subsec:coupled-curriculum}, we use a coupled task and budget curriculum during RL training (first train on easy problems at $\btr$=8k, and then on medium and hard ones at $\btr$=16k). We compare the gains with the base model and other strong reasoning models withing the $<$2B model family. Note that unlike recent trends~\cite{yue2025does} that show RL training improving \texttt{pass@1} at the cost of \texttt{pass@k} for a higher $k$, we note that \ethree{} trained models improve performance by not just sharpening the base model distribution around high reward traces, but by actually chaining asymmetries and discovering new solutions with longer traces.}
  \label{tab:passk}
  \vspace{-0.3cm}
\end{table}

\begin{wrapfigure}{r}{0.35\textwidth}
\vspace{-0.9cm}
  \begin{center}
    \hspace{-0.3cm}
    \includegraphics[width=0.97\linewidth]{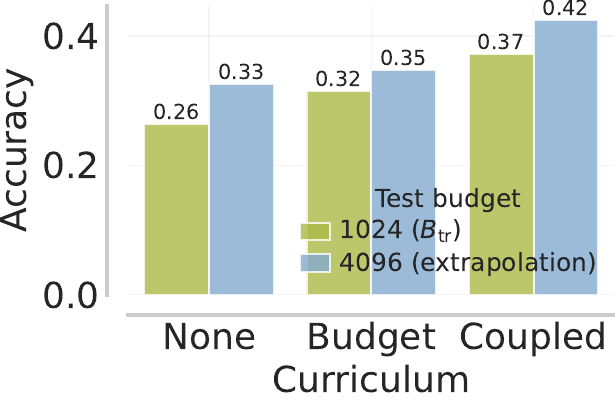}
  \end{center}
  \vspace{-0.6cm}
\label{fig:countdown-curriculum-mainpaper}
\vspace{-0.3cm}
\end{wrapfigure}
\underline{\termbold{\textbf{Illustrating the efficacy of coupled curriculum.}}} We first demonstrate the efficacy of \ethree{} on \cdown{} by training on problems of 3, 4, and 7 numbers. In our coupled curriculum, we first train on problems of easier difficulty with 3 and 4 numbers, on a budget of 256 tokens. Following~\eqref{eq:opt-curr} with say $\kappa = 1.2$, to select the budget for the next stage, we examine the performance of this first-stage model on the second-stage dataset consisting of harder problems with 7 numbers. \eqref{eq:opt-curr} prescribes that we pick the smallest but reasonable $B$ such that there is only a marginal improvement from extending $B$ to $2B$, defined by $\kappa = 1.2$. As shown in Fig~\ref{fig:curriculum-panel-2}(c), this corresponds to ${B^\star_{\mathrm{tr},1}}(D_{1})= 1024$ (where $D_{1}$ is the second stage training dataset). Indeed, Fig~\ref{fig:curriculum-panel-2}(c) shows that at ${B^\star_{\mathrm{tr},1}}(D_{1})= 1024$, we get nearly the best extrapolation performance to 4096 tokens on the harder problems (7 numbers). We also note that while $\btr=2048$ marginally improves test performance over $\btr=1024$, it is unclear apriori if $\btr=2048$  would train stably and our goal is to make a thumb rule prescription. We also find in the figure on the right, that our coupled curriculum outperforms budget curriculum or not training with any curriculum.

\vspace{-0.2cm}
\subsection{Final Results with \texorpdfstring{\ethree{}}{e3}: \textit{State-of-the-art} \texorpdfstring{$<$2B}{<2B} Model on AIME/HMMT'25}
\vspace{-0.2cm}

\textbf{\textit{Extrapolation to 32k with \ethree{}}.} In Fig.~\ref{fig:curriculum-panel-1}(a,b), we compare the performance of a Qwen3-1.7B model fine-tuned using \ethree{} with open-source models, including some 7B and 32B models. As shown, at a test-time token budget of 32k tokens, \ethree{} achieves state-of-the-art performance on AIME'25 and HMMT'25, within a model class of size $<$2B. We outperform the best model in this class by $>$8\% on AIME'25 in terms of peak performance, and show that our model, trained only up to a budget of 16k, extrapolates better than other models including s1.1-32B~\cite{muennighoff2025s1simpletesttimescaling} and OpenThinker-7B~\cite{openthoughts} when we extrapolate them to 32k output tokens.
In principle, one can simply force the model (trained even with SFT) to use more test-time compute by intervening its output trace with an appended prompt (\textit{e.g.}, by appending ``Wait'' to an output trace as suggested in s1~\cite{muennighoff2025s1simpletesttimescaling}).  
Interestingly, Fig.~\ref{fig:curriculum-panel-2}(c) shows that compared to budget forcing via ``Wait'', \texttt{e3} achieves substantially better scaling, without any form of prompting or budget forcing.

\textbf{\textit{Improving \texttt{pass@32} with \ethree{}}.} In Tab.~\ref{tab:passk}, we also report the \texttt{pass@k} performance, comparing \ethree{} with other models of a similar size. We find that our final model at the end of second stage of training on a budget of 16k outperforms other models on higher values of $k$, on AIME and HMMT '25. We  especially note the comparison against the Nemotron-Reasoning-1.5B model~\citep{liu2025prorl} trained with a prolonged RL training recipe on a broader dataset, including our training data.
This model consistently improves \texttt{pass@16} performance during RL training~\cite{liu2025prorl}. To concretely describe our estimation procedure, we used 128 rollouts per prompt to compute a bootstrapped estimate~\cite{chen2021evaluating} of the \texttt{pass@k} performance for $k=1,2,\ldots,32$. Evaluating \texttt{pass@k} at higher values of $k$ would require a much higher number of rollouts ($>$$2048$) since the variance of the \texttt{pass@k} estimate increases sharply with $k$, for a given number of rollouts. Moreover, in all of our GRPO training runs, we only use a maximum of 32 rollouts per problem to estimate the advantage value. Therefore, we can conclude that if \ethree{} is able to improve over \texttt{pass@32} of the base model, then  it does improve beyond any na\"ive distillation-based approach, that aims to distill the \texttt{pass@32} policy corresponding to the base model into a better \texttt{pass@1} policy.

\begin{figure}[!t]
     \captionsetup{font=small,skip=0pt}
    \centering
    \begin{subfigure}[b]{0.33\textwidth}
    \captionsetup{font=small,skip=2pt}
        \centering
        \includegraphics[width=0.99\textwidth]{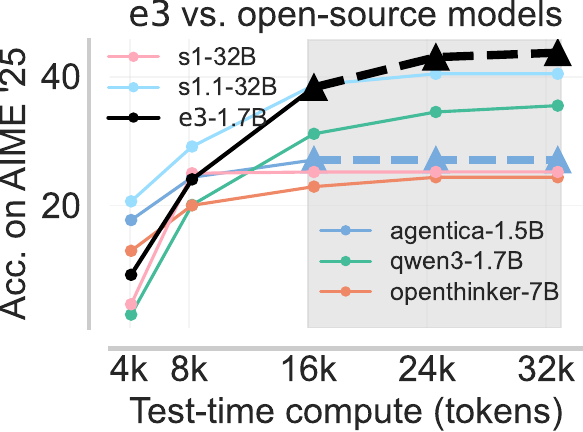}
        \caption*{\centering(a)}
    \end{subfigure}\hfill
    \begin{subfigure}[b]{0.33\textwidth}
     \captionsetup{font=small,skip=2pt}
        \centering
        \includegraphics[width=0.99\textwidth]{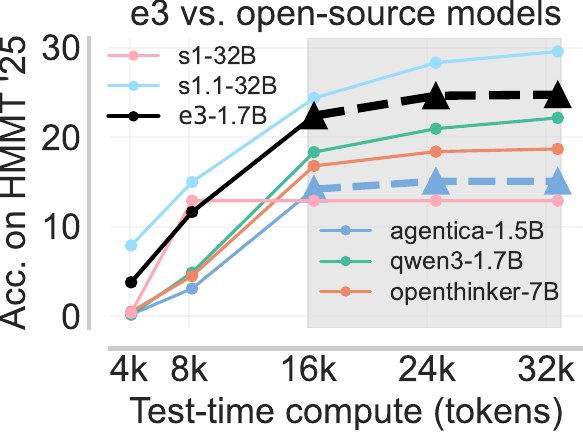}
        \caption*{\centering(b)}
    \end{subfigure}\hfill
    \begin{subfigure}[b]{0.33\textwidth}
    \captionsetup{font=small,skip=2pt}
        \centering
        \includegraphics[width=0.99\textwidth]{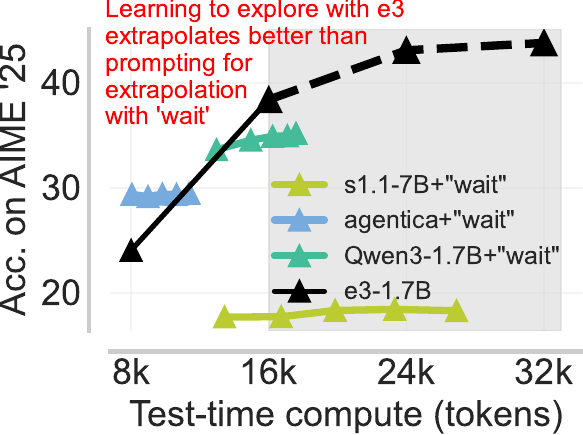}
        
        \caption*{\centering(c)}
    \end{subfigure}
        
\caption{\footnotesize{{\textbf{\emph{RL training with coupled curricula (\ethree{}).}} The shaded area indicates the extrapolation regime and dashed curves indicate that we are testing the model beyond the training budget (shown with the solid line). \textbf{ (a), (b):} \ethree{} achieves \emph{state-of-the-art} performance across models < 2B, and outperforms some larger sized models like openthinker-7B~\cite{openthoughts} and s1.1-32B~\cite{muennighoff2025s1simpletesttimescaling} (on AIME'25) on larger test-time budgets upto 32k, \textbf{(c):}  \texttt{e3} (w/o ``wait'') extrapolates better to larger test-time token budgets, compared to budget forcing with ``wait'' prompt 2-8 times, as proposed in s1~\cite{muennighoff2025s1simpletesttimescaling}.}\vspace{-0.2cm}}}
    \label{fig:curriculum-panel-1}
\end{figure}

\vspace{0.2cm}

\begin{AIbox}{Takeaways: Coupled data \& budget curriculum structures exploration during training.}
\begin{itemize}[leftmargin=0.7em]
    \setlength\itemsep{0em}
    \item RL with fixed $\btr$, $D$ hurts in-context exploration: (i) short $\btr$ penalizes exploration on hard problems as budget is overrun; (ii) large $\btr$ overfits on over-exploratory behavior on easy ones.
    \item We propose a coupled curriculum \ethree{}: at each stage, given $D$,  choose smallest $\btr$ such that chaining more asymmetries till a budget of $2\cdot\btr$ is positively rewarded at RL initialization.
    \item By fine-tuning Qwen3-1.7B with \ethree{}, we outperform $<$2B models on AIME'25, HMMT '25.
\end{itemize}
\end{AIbox}

\vspace{-0.3cm}
\section{Related Work}
\label{sec:relwork}
\vspace{-0.1cm}
\textbf{Scaling test-time compute via long CoT reasoning.} Prior work explores a number of avenues for scaling test-time compute, including majority voting~\citep{wang2022self}, best-of-n sampling, and beam search~\citep{setlur2024rewarding, snell2024scaling}, as well as sequential self-correction~\citep{qu2024recursive, kumar2024training}. More recent results indicate that training models to use test-time compute to generate longer chains of thought (CoT) that combine verification, search, and self-correction -- all in a free-form manner, performs better~\citep{deepseekai2025deepseekr1incentivizingreasoningcapability, kimiteam2025kimik15scalingreinforcement, openai2024openaio1card}, resulting in widespread open-source reproduction efforts~\citep{openr1, yeo2025demystifyinglongchainofthoughtreasoning, zeng2025simplerlzooinvestigatingtamingzero,deepscaler2025}. We situate our work in the paradigm of long CoT reasoning.

\textbf{Test-time extrapolation. }The true benefit of test-time scaling is consistently improving performance as we extrapolate test compute. While prior work tests the model's performance on budgets longer than the training budget~\cite{zeng2025simplerl, deepcoder2025}, they do not explain the relationship between the training recipe and the extrapolation, like we aim to do in our work. Other works perform extrapolation by explicitly prompting models to generate more tokens when a response terminates~\cite{muennighoff2025s1simpletesttimescaling,aggarwal2025l1}, whereas, we show that models that learn to explore in-context extrapolate test compute better than prompting-based approaches (Fig.~\ref{fig:curriculum-panel-1}). In particular, we study the role played by the base model, training algorithm (RL), as well as data mixtures and token budgets, on the ability to extrapolate. Furthermore, prior work~\cite{setlur2025scaling} has investigated scaling when train and test budgets are the same, but we expand the scope of this comparison substantially.

\textbf{Exploration in test-time scaling.} 
While prior works have shown the importance of the base model’s ability to conduct exploration~\citep{gandhi2025cognitivebehaviorsenableselfimproving, liu2025understanding}, we discover that it is crucial for extrapolation. We show that the negative gradient in RL incentivizes chaining multiple asymmetries and leads to longer response length, and better performance. SFT alone does not provide this kind of chaining or exploration benefits. Our analysis is orthogonal to theoretical works \citet{setlur2025scaling,swamy2024minimaximalist}, which shows that RL performs better than SFT, but from a statistical perspective, whereas our argument is more focused on the learning dynamics. Concurrent work builds techniques to boost 
exploration during RL via advantage normalization \citep{li2022understanding,yu2025dapo} or PPO clipping \citep{yu2025dapo}, and these techniques can be combined with \ethree, but they do not study the role of negative gradients in learning to explore. Finally, \citet{wang2025reinforcement} briefly remarks about the role of policy gradient loss and entropy when running RL with only a few examples. Our study investigates the underlying mechanism of negative gradients increasing length and entropy.

\textbf{Data and length curricula. }
Recent works have also investigated using a curriculum on problem difficulty \cite{kimiteam2025kimik15scalingreinforcement, xie2025logic, shi2025efficientreinforcementfinetuningadaptive} and output length \cite{deepscaler2025, acemath2024} during RL training. Their motivation stems primarily from an efficiency standpoint: avoiding zero advantage updates \cite{shi2025efficientreinforcementfinetuningadaptive, yu2025dapo}, efficient optimization \cite{deepscaler2025}, or efficiency of using test-time compute \cite{qu2025optimizing}. While we do make similar observations regarding each curriculum individually, perhaps our most interesting finding is that carefully coupling both data and budget curricula can lead to much better performance and extrapolation, beyond merely some gains in efficient training. We show that training on hard problems with short budgets often yields terse solutions that fail to extrapolate, while easy problems with long budgets can cause optimization issues or verbose outputs. Thus, curricula must be carefully designed to support effective extrapolation. Conceptually, our curricula are most related to dense progress rewards~\citep{qu2025optimizing,setlur2024rewarding}, in the sense that curricula incentivize different degrees of progress for different questions, at different points in training. 

\vspace{-0.2cm}
\section{Discussion and Conclusion}
\vspace{-0.1cm}
\label{sec:conclusion}
We show that in-context exploration is a core capability to enable extrapolation of test-time compute in LLMs. Therefore we build a recipe that amplifies in-context exploration. Our recipe \ethree{}, leverages (1) asymmetries in the base model, (2) negative gradients during RL training, and (3) a coupled curriculum over data and token budget to train a model that can  perform in-context exploration. Applied to the Qwen3-1.7B model, our method achieves state-of-the-art performance on the AIME/HMMT'25 benchmarks, with particularly strong gains in the extrapolation regime. We also show that our \ethree{} recipe also improves \texttt{pass@k} over the course of training, for values of $k$ upto 32 that we evaluate. 
There are a number of implications of our work and a number interesting directions that future work should build upon. We list the main technical implications and open questions below.
\begin{itemize}[itemsep=4pt]
\vspace{-0.3cm}
    \item \textbf{Sharpening vs in-context exploration.} A number of concurrent RL results either directly~\citep{yue2025does} or indirectly~\citep{sharpen1, sharpen2, sharpen3, sharpen4} argue that RL training on LLMs sharpens the base model's distribution, as also previously studied by \citet{huang2024self}. In contrast to this, our study shows that if we can utilize a coupled curriculum on top of a base model that admits asymmetries, RL can actually enable chaining new asymmetries, resulting in an increase in length, indicating the presence of structured exploration. This behavior is distinct from traditional sharpening that corresponds to cloning one (or few) of the responses sampled from the base model. In fact, our conceptual study in the $p^k$ model in Section~\ref{sec:pk_model} also highlights these two distinct phases during RL: an initial in-context exploration phase where negative gradients lead to an increase in response length and the policy learns to utilize test-time compute for better exploration, followed by a phase where it sharpens to the best traces found thus far. The design of \ethree{} enables it to operate in the former phase. We believe concurrent works that finds RL largely sharpens the model operate in the second regime by training on data that does not require chaining asymmetries or operating with a very low training budget such that chaining is impossible. As a result, models trained purely in the sharpening regime may behave similarly to the base model with an alternate prompt, with RL perhaps offering little more than an implicit prompt tuning effect. But we would not expect this for the chaining regime. A detailed study on separating these regimes, and identifying all the factors that draw RL training into these regimes is an interesting direction for both theoretical and empirical research.
    \item \textbf{Connection with dense progress rewards.} While \ethree{} utilizes a coupled curriculum, this curriculum is closely connected with the use of dense rewards, as prescribed by our prior work~\citep{qu2025optimizing,setlur2024rewarding}. To see why, note that one can reparameterize coupled curriculum into a single round of training with dense rewards applied to short segments of the output response, perhaps in a similar way as \citet{qu2025optimizing,qi2025optimizing}. Therefore, the success of the coupled curriculum approach in \ethree{} at improving performance and not only in reducing total training compute perhaps hints at future success with dense rewards at scale, with initial results showing that dense rewards help larger models already being shown in the community~\citep{wang2025beyond}. We encourage readers to explore the connection between curriculum and dense rewards further.  
    \item \textbf{Introducing new asymmetries.} The conceptual model behind \ethree{} applies with any asymmetry, though most experiments in this paper utilize only the verification-generation gap. It would be interesting to identify other asymmetries and study methods to imbue base models with these asymmetries. Definition~\ref{def:asymmetry} in Section~\ref{sec:vg-gap} provides a starting point to define these asymmetries.
    \item \textbf{Is curriculum fundamentally needed?} A natural question is whether curriculum is fundamentally necessary as we vary model sizes and capabilities. Unlike supervised learning on a fixed dataset, online RL generates its own rollouts. Reinforcing chaining behavior via negative gradients (Sec.~\ref{sec:negative-gradient}) requires that such chaining reliably improves performance on training problems much more substantially compared to sampling diverse traces that do not chain asymmetries. This likely necessitates specific training configurations regardless of model size with standard outcome-reward RL, or the use of dense rewards (as discussed above). While larger models may admit simpler curricula, deliberately using currciulum or dense rewards as inspiration may be critical.
    \item \textbf{Explicit exploration bonuses.} In our runs, the main issue hindering us from benefits of further scaling of output length during RL is the repetition bias in the base model, where it tends to repeat previously-generated segments in its trace beyond a certain output length. This repetition bias inhibits the efficacy of in-context exploration beyond a certain output length and as a result inhibits further test-time scaling. We believe that explicit exploration bonuses that enable the model to search for tokens in this regime would result in even better in-context exploration.
    \vspace{-0.3cm}
\end{itemize}
Finally, our study is limited in terms of model scale and domain. Future work should explore how \ethree{} generalizes to larger model scales and other reasoning domains.

\vspace{-0.2cm}
\section*{Acknowledgements}
\vspace{-0.2cm}
We thank Christina Baek, Yuxiao Qu, Anikait Singh, Yoonho Lee, Max Sobol Mark, Zheyuan Hu, Seohong Park, Bhavya Agrawalla, Sang Michael Xie, Paria Rashidinejad, and the rest of the AIRe lab at CMU for informative discussions, feedback, input on our results, and a previous version of this paper. We thank Yuxiao Qu for help with debugging implementations and infrastructure. The main large-scale experiments in this paper utilized H100 GPU resources from the Orchard cluster in the FLAME center at CMU for which we especially thank Graham Neubig and Chenyan Xiong for their generous support, and TPUs from Google Cloud. We thank Oumi for providing us with resources that supported the experiments on the Countdown domain. This project is supported by funding from the Office of Naval Research under N00014-24-1-2206 and a Schmidt Sciences AI2050 Fellowship. AS is supported by JP Morgan PhD fellowship. This paper does not reflect the opinions of employers or other parties.

\bibliography{references}

\begin{thebibliography}{64}
\providecommand{\natexlab}[1]{#1}
\providecommand{\url}[1]{\texttt{#1}}
\expandafter\ifx\csname urlstyle\endcsname\relax
  \providecommand{\doi}[1]{doi: #1}\else
  \providecommand{\doi}{doi: \begingroup \urlstyle{rm}\Url}\fi

\bibitem[Agarwal et~al.(2021)Agarwal, Kakade, Lee, and Mahajan]{agarwal2021theory}
Alekh Agarwal, Sham~M Kakade, Jason~D Lee, and Gaurav Mahajan.
\newblock On the theory of policy gradient methods: Optimality, approximation, and distribution shift.
\newblock \emph{Journal of Machine Learning Research}, 22\penalty0 (98):\penalty0 1--76, 2021.

\bibitem[Aggarwal and Welleck(2025)]{aggarwal2025l1}
Pranjal Aggarwal and Sean Welleck.
\newblock L1: Controlling how long a reasoning model thinks with reinforcement learning.
\newblock \emph{arXiv preprint arXiv:2503.04697}, 2025.

\bibitem[Ahmadian et~al.(2024)Ahmadian, Cremer, Gall{\'e}, Fadaee, Kreutzer, Pietquin, {\"U}st{\"u}n, and Hooker]{ahmadian2024back}
Arash Ahmadian, Chris Cremer, Matthias Gall{\'e}, Marzieh Fadaee, Julia Kreutzer, Olivier Pietquin, Ahmet {\"U}st{\"u}n, and Sara Hooker.
\newblock Back to basics: Revisiting reinforce style optimization for learning from human feedback in llms.
\newblock \emph{arXiv preprint arXiv:2402.14740}, 2024.

\bibitem[Chen et~al.(2021)Chen, Tworek, Jun, Yuan, Pinto, Kaplan, Edwards, Burda, Joseph, Brockman, et~al.]{chen2021evaluating}
Mark Chen, Jerry Tworek, Heewoo Jun, Qiming Yuan, Henrique Ponde De~Oliveira Pinto, Jared Kaplan, Harri Edwards, Yuri Burda, Nicholas Joseph, Greg Brockman, et~al.
\newblock Evaluating large language models trained on code.
\newblock \emph{arXiv preprint arXiv:2107.03374}, 2021.

\bibitem[DeepSeek-AI et~al.(2025)DeepSeek-AI, Guo, Yang, Zhang, Song, Xu, Zhang, and Zhang]{deepseekai2025deepseekr1incentivizingreasoningcapability}
DeepSeek-AI, Daya Guo, Dejian Yang, Haowei Zhang, Junxiao Song, Zhipeng Xu, Zhongyu Zhang, and Zhen Zhang.
\newblock Deepseek-r1: Incentivizing reasoning capability in llms via reinforcement learning, 2025.
\newblock URL \url{https://arxiv.org/abs/2501.12948}.

\bibitem[Duan et~al.(2016)Duan, Schulman, Chen, Bartlett, Sutskever, and Abbeel]{duan2016rl}
Yan Duan, John Schulman, Xi~Chen, Peter~L Bartlett, Ilya Sutskever, and Pieter Abbeel.
\newblock Rl $^2$: Fast reinforcement learning via slow reinforcement learning.
\newblock \emph{arXiv preprint arXiv:1611.02779}, 2016.

\bibitem[Face(2025)]{openr1}
Hugging Face.
\newblock Open r1: A fully open reproduction of deepseek-r1, January 2025.
\newblock URL \url{https://github.com/huggingface/open-r1}.

\bibitem[Gandhi et~al.(2024)Gandhi, Lee, Grand, Liu, Cheng, Sharma, and Goodman]{gandhi2024stream}
Kanishk Gandhi, Denise Lee, Gabriel Grand, Muxin Liu, Winson Cheng, Archit Sharma, and Noah~D Goodman.
\newblock Stream of search (sos): Learning to search in language.
\newblock \emph{arXiv preprint arXiv:2404.03683}, 2024.

\bibitem[Gandhi et~al.(2025)Gandhi, Chakravarthy, Singh, Lile, and Goodman]{gandhi2025cognitivebehaviorsenableselfimproving}
Kanishk Gandhi, Ayush Chakravarthy, Anikait Singh, Nathan Lile, and Noah~D. Goodman.
\newblock Cognitive behaviors that enable self-improving reasoners, or, four habits of highly effective stars, 2025.
\newblock URL \url{https://arxiv.org/abs/2503.01307}.

\bibitem[Gao et~al.(2019)Gao, Meyer, Mesgar, and Gurevych]{gao2019reward}
Yang Gao, Christian~M Meyer, Mohsen Mesgar, and Iryna Gurevych.
\newblock Reward learning for efficient reinforcement learning in extractive document summarisation.
\newblock \emph{arXiv preprint arXiv:1907.12894}, 2019.

\bibitem[Ghosh et~al.(2021)Ghosh, Rahme, Kumar, Zhang, Adams, and Levine]{ghosh2021gen}
Dibya Ghosh, Jad Rahme, Aviral Kumar, Amy Zhang, Ryan~P. Adams, and Sergey Levine.
\newblock Why generalization in {RL} is difficult: Epistemic pomdps and implicit partial observability.
\newblock \emph{CoRR}, abs/2107.06277, 2021.
\newblock URL \url{https://arxiv.org/abs/2107.06277}.

\bibitem[Gupta et~al.(2018)Gupta, Eysenbach, Finn, and Levine]{gupta2018umrl}
Abhishek Gupta, Benjamin Eysenbach, Chelsea Finn, and Sergey Levine.
\newblock Unsupervised meta-learning for reinforcement learning.
\newblock \emph{CoRR}, abs/1806.04640, 2018.
\newblock URL \url{http://arxiv.org/abs/1806.04640}.

\bibitem[Hazan et~al.(2019)Hazan, Kakade, Singh, and Soest]{hazan2019maxent}
Elad Hazan, Sham Kakade, Karan Singh, and Abby~Van Soest.
\newblock Provably efficient maximum entropy exploration.
\newblock In \emph{ICML}, 2019.
\newblock URL \url{https://arxiv.org/pdf/1812.02690.pdf}.

\bibitem[Hochlehnert et~al.(2025)Hochlehnert, Bhatnagar, Udandarao, Albanie, Prabhu, and Bethge]{hochlehnert2025sober}
Andreas Hochlehnert, Hardik Bhatnagar, Vishaal Udandarao, Samuel Albanie, Ameya Prabhu, and Matthias Bethge.
\newblock A sober look at progress in language model reasoning: Pitfalls and paths to reproducibility.
\newblock \emph{arXiv preprint arXiv:2504.07086}, 2025.

\bibitem[Huang et~al.(2024)Huang, Block, Foster, Rohatgi, Zhang, Simchowitz, Ash, and Krishnamurthy]{huang2024self}
Audrey Huang, Adam Block, Dylan~J Foster, Dhruv Rohatgi, Cyril Zhang, Max Simchowitz, Jordan~T Ash, and Akshay Krishnamurthy.
\newblock Self-improvement in language models: The sharpening mechanism.
\newblock \emph{arXiv preprint arXiv:2412.01951}, 2024.

\bibitem[Kim et~al.(2025)Kim, Wu, Lee, Yue, Lee, Moon, Gashteovski, Lawrence, Hockenmaier, Neubig, et~al.]{kim2025scaling}
Seungone Kim, Ian Wu, Jinu Lee, Xiang Yue, Seongyun Lee, Mingyeong Moon, Kiril Gashteovski, Carolin Lawrence, Julia Hockenmaier, Graham Neubig, et~al.
\newblock Scaling evaluation-time compute with reasoning models as process evaluators.
\newblock \emph{arXiv preprint arXiv:2503.19877}, 2025.

\bibitem[Krishnamurthy et~al.(2024)Krishnamurthy, Harris, Foster, Zhang, and Slivkins]{krishnamurthy2024can}
Akshay Krishnamurthy, Keegan Harris, Dylan~J Foster, Cyril Zhang, and Aleksandrs Slivkins.
\newblock Can large language models explore in-context?
\newblock \emph{arXiv preprint arXiv:2403.15371}, 2024.

\bibitem[Kumar et~al.(2024)Kumar, Zhuang, Agarwal, Su, Co-Reyes, Singh, Baumli, Iqbal, Bishop, Roelofs, et~al.]{kumar2024training}
Aviral Kumar, Vincent Zhuang, Rishabh Agarwal, Yi~Su, John~D Co-Reyes, Avi Singh, Kate Baumli, Shariq Iqbal, Colton Bishop, Rebecca Roelofs, et~al.
\newblock Training language models to self-correct via reinforcement learning.
\newblock \emph{arXiv preprint arXiv:2409.12917}, 2024.

\bibitem[Li et~al.(2022)Li, Zhai, Ma, and Levine]{li2022understanding}
Qiyang Li, Yuexiang Zhai, Yi~Ma, and Sergey Levine.
\newblock Understanding the complexity gains of single-task rl with a curriculum.
\newblock \emph{arXiv preprint arXiv:2212.12809}, 2022.

\bibitem[Li et~al.(2024)Li, Zhang, Do, Yue, and Chen]{li2024long}
Tianle Li, Ge~Zhang, Quy~Duc Do, Xiang Yue, and Wenhu Chen.
\newblock Long-context llms struggle with long in-context learning.
\newblock \emph{arXiv preprint arXiv:2404.02060}, 2024.

\bibitem[Liu et~al.(2020)Liu, Keramati, Seshadri, Guu, Pasupat, Brunskill, and Liang]{liu2020learning}
Evan~Zheran Liu, Ramtin Keramati, Sudarshan Seshadri, Kelvin Guu, Panupong Pasupat, Emma Brunskill, and Percy Liang.
\newblock Learning abstract models for strategic exploration and fast reward transfer.
\newblock \emph{arXiv preprint arXiv:2007.05896}, 2020.

\bibitem[Liu et~al.(2025{\natexlab{a}})Liu, Diao, Lu, Hu, Dong, Choi, Kautz, and Dong]{liu2025prorl}
Mingjie Liu, Shizhe Diao, Ximing Lu, Jian Hu, Xin Dong, Yejin Choi, Jan Kautz, and Yi~Dong.
\newblock Prorl: Prolonged reinforcement learning expands reasoning boundaries in large language models.
\newblock \emph{arXiv preprint arXiv:2505.24864}, 2025{\natexlab{a}}.

\bibitem[Liu et~al.(2025{\natexlab{b}})Liu, Chen, Li, Qi, Pang, Du, Lee, and Lin]{liu2025understanding}
Zichen Liu, Changyu Chen, Wenjun Li, Penghui Qi, Tianyu Pang, Chao Du, Wee~Sun Lee, and Min Lin.
\newblock Understanding r1-zero-like training: A critical perspective.
\newblock \emph{arXiv preprint arXiv:2503.20783}, 2025{\natexlab{b}}.

\bibitem[Liu et~al.(2024)Liu, Chen, Shoeybi, Catanzaro, and Ping]{acemath2024}
Zihan Liu, Yang Chen, Mohammad Shoeybi, Bryan Catanzaro, and Wei Ping.
\newblock Acemath: Advancing frontier math reasoning with post-training and reward modeling.
\newblock \emph{arXiv preprint}, 2024.

\bibitem[Luo et~al.(2025{\natexlab{a}})Luo, Tan, Huang, Patel, Ariyak, Wu, Shi, Xin, Cai, Weber, Zhang, Li, Popa, and Stoica]{deepcoder2025}
Michael Luo, Sijun Tan, Roy Huang, Ameen Patel, Alpay Ariyak, Qingyang Wu, Xiaoxiang Shi, Rachel Xin, Colin Cai, Maurice Weber, Ce~Zhang, Li~Erran Li, Raluca~Ada Popa, and Ion Stoica.
\newblock Deepcoder: A fully open-source 14b coder at o3-mini level, 2025{\natexlab{a}}.
\newblock Notion Blog.

\bibitem[Luo et~al.(2025{\natexlab{b}})Luo, Tan, Wong, Shi, Tang, Roongta, Cai, Luo, Zhang, Li, Popa, and Stoica]{deepscaler2025}
Michael Luo, Sijun Tan, Justin Wong, Xiaoxiang Shi, William~Y. Tang, Manan Roongta, Colin Cai, Jeffrey Luo, Tianjun Zhang, Li~Erran Li, Raluca~Ada Popa, and Ion Stoica.
\newblock {DeepScaleR: Surpassing O1-Preview with a 1.5B Model by Scaling RL}, 2025{\natexlab{b}}.
\newblock Notion Blog.

\bibitem[Muennighoff et~al.(2025)Muennighoff, Yang, Shi, Li, Fei-Fei, Hajishirzi, Zettlemoyer, Liang, Candès, and Hashimoto]{muennighoff2025s1simpletesttimescaling}
Niklas Muennighoff, Zitong Yang, Weijia Shi, Xiang~Lisa Li, Li~Fei-Fei, Hannaneh Hajishirzi, Luke Zettlemoyer, Percy Liang, Emmanuel Candès, and Tatsunori Hashimoto.
\newblock s1: Simple test-time scaling, 2025.
\newblock URL \url{https://arxiv.org/abs/2501.19393}.

\bibitem[Nie et~al.(2024)Nie, Su, Chang, Lee, Chi, Le, and Chen]{nie2024evolve}
Allen Nie, Yi~Su, Bo~Chang, Jonathan~N Lee, Ed~H Chi, Quoc~V Le, and Minmin Chen.
\newblock Evolve: Evaluating and optimizing llms for exploration.
\newblock \emph{arXiv preprint arXiv:2410.06238}, 2024.

\bibitem[OpenAI et~al.(2024)OpenAI, :, Jaech, Kalai, Lerer, Richardson, El-Kishky, Low, Helyar, Madry, Beutel, Zhang, Wang, Shao, and Li]{openai2024openaio1card}
OpenAI, :, Aaron Jaech, Adam Kalai, Adam Lerer, Adam Richardson, Ahmed El-Kishky, Aiden Low, Alec Helyar, Aleksander Madry, Alex Beutel, Yuchen Zhang, Yunyun Wang, Zheng Shao, and Zhuohan Li.
\newblock Openai o1 system card, 2024.
\newblock URL \url{https://arxiv.org/abs/2412.16720}.

\bibitem[Prabhudesai et~al.(2025)Prabhudesai, Chen, Ippoliti, Fragkiadaki, Liu, and Pathak]{sharpen4}
Mihir Prabhudesai, Lili Chen, Alex Ippoliti, Katerina Fragkiadaki, Hao Liu, and Deepak Pathak.
\newblock Maximizing confidence alone improves reasoning.
\newblock \emph{arXiv preprint arXiv:2505.22660}, 2025.

\bibitem[Puterman(1994)]{puterman1994markov}
Martin~L Puterman.
\newblock \emph{Markov Decision Processes: Discrete Stochastic Dynamic Programming}.
\newblock John Wiley \& Sons, Inc., 1994.

\bibitem[Qi et~al.(2025)Qi, Liu, Pang, Du, Lee, and Lin]{qi2025optimizing}
Penghui Qi, Zichen Liu, Tianyu Pang, Chao Du, Wee~Sun Lee, and Min Lin.
\newblock Optimizing anytime reasoning via budget relative policy optimization.
\newblock \emph{arXiv preprint arXiv:2505.13438}, 2025.

\bibitem[Qu et~al.(2024)Qu, Zhang, Garg, and Kumar]{qu2024recursive}
Yuxiao Qu, Tianjun Zhang, Naman Garg, and Aviral Kumar.
\newblock Recursive introspection: Teaching language model agents how to self-improve.
\newblock \emph{arXiv preprint arXiv:2407.18219}, 2024.

\bibitem[Qu et~al.(2025)Qu, Yang, Setlur, Tunstall, Beeching, Salakhutdinov, and Kumar]{qu2025optimizing}
Yuxiao Qu, Matthew~YR Yang, Amrith Setlur, Lewis Tunstall, Edward~Emanuel Beeching, Ruslan Salakhutdinov, and Aviral Kumar.
\newblock Optimizing test-time compute via meta reinforcement fine-tuning.
\newblock \emph{arXiv preprint arXiv:2503.07572}, 2025.

\bibitem[Schulman et~al.(2017)Schulman, Wolski, Dhariwal, Radford, and Klimov]{schulman2017proximal}
John Schulman, Filip Wolski, Prafulla Dhariwal, Alec Radford, and Oleg Klimov.
\newblock Proximal policy optimization algorithms.
\newblock \emph{arXiv preprint arXiv:1707.06347}, 2017.

\bibitem[Setlur et~al.(2024)Setlur, Nagpal, Fisch, Geng, Eisenstein, Agarwal, Agarwal, Berant, and Kumar]{setlur2024rewarding}
Amrith Setlur, Chirag Nagpal, Adam Fisch, Xinyang Geng, Jacob Eisenstein, Rishabh Agarwal, Alekh Agarwal, Jonathan Berant, and Aviral Kumar.
\newblock Rewarding progress: Scaling automated process verifiers for llm reasoning.
\newblock \emph{arXiv preprint arXiv:2410.08146}, 2024.

\bibitem[Setlur et~al.(2025{\natexlab{a}})Setlur, Qu, Yang, Zhang, Smith, and Kumar]{setlur2025opt}
Amrith Setlur, Yuxiao Qu, Matthew Yang, Lunjun Zhang, Virginia Smith, and Aviral Kumar.
\newblock Optimizing llm test-time compute involves solving a meta-rl problem.
\newblock \url{https://blog.ml.cmu.edu/}, 2025{\natexlab{a}}.
\newblock CMU MLD Blog.

\bibitem[Setlur et~al.(2025{\natexlab{b}})Setlur, Rajaraman, Levine, and Kumar]{setlur2025scaling}
Amrith Setlur, Nived Rajaraman, Sergey Levine, and Aviral Kumar.
\newblock Scaling test-time compute without verification or rl is suboptimal.
\newblock \emph{arXiv preprint arXiv:2502.12118}, 2025{\natexlab{b}}.

\bibitem[Shafayat et~al.(2025)Shafayat, Tajwar, Salakhutdinov, Schneider, and Zanette]{sharpen1}
Sheikh Shafayat, Fahim Tajwar, Ruslan Salakhutdinov, Jeff Schneider, and Andrea Zanette.
\newblock Can large reasoning models self-train?, 2025.
\newblock URL \url{https://arxiv.org/abs/2505.21444}.

\bibitem[Shao et~al.(2025)Shao, Li, Xin, Geng, Wang, Oh, Du, Lambert, Min, Krishna, Tsvetkov, Hajishirzi, Koh, and Zettlemoyer]{sharpen3}
Rulin Shao, Shuyue~Stella Li, Rui Xin, Scott Geng, Yiping Wang, Sewoong Oh, Simon~Shaolei Du, Nathan Lambert, Sewon Min, Ranjay Krishna, Yulia Tsvetkov, Hannaneh Hajishirzi, Pang~Wei Koh, and Luke Zettlemoyer.
\newblock Spurious rewards: Rethinking training signals in rlvr, 2025.
\newblock Notion Blog.

\bibitem[Shao et~al.(2024)Shao, Wang, Zhu, Xu, Song, Bi, Zhang, Zhang, Li, Wu, et~al.]{shao2024deepseekmath}
Zhihong Shao, Peiyi Wang, Qihao Zhu, Runxin Xu, Junxiao Song, Xiao Bi, Haowei Zhang, Mingchuan Zhang, YK~Li, Y~Wu, et~al.
\newblock Deepseekmath: Pushing the limits of mathematical reasoning in open language models.
\newblock \emph{arXiv preprint arXiv:2402.03300}, 2024.

\bibitem[Sheng et~al.(2024)Sheng, Zhang, Ye, Wu, Zhang, Zhang, Peng, Lin, and Wu]{verl}
Guangming Sheng, Chi Zhang, Zilingfeng Ye, Xibin Wu, Wang Zhang, Ru~Zhang, Yanghua Peng, Haibin Lin, and Chuan Wu.
\newblock Hybridflow: A flexible and efficient rlhf framework.
\newblock \emph{arXiv preprint arXiv: 2409.19256}, 2024.

\bibitem[Shi et~al.(2025)Shi, Wu, Song, Zhou, and Zhao]{shi2025efficientreinforcementfinetuningadaptive}
Taiwei Shi, Yiyang Wu, Linxin Song, Tianyi Zhou, and Jieyu Zhao.
\newblock Efficient reinforcement finetuning via adaptive curriculum learning, 2025.
\newblock URL \url{https://arxiv.org/abs/2504.05520}.

\bibitem[Snell et~al.(2024)Snell, Lee, Xu, and Kumar]{snell2024scaling}
Charlie Snell, Jaehoon Lee, Kelvin Xu, and Aviral Kumar.
\newblock Scaling llm test-time compute optimally can be more effective than scaling model parameters.
\newblock \emph{arXiv preprint arXiv:2408.03314}, 2024.

\bibitem[Song et~al.(2024)Song, Zhang, Eisenach, Kakade, Foster, and Ghai]{song2024mind}
Yuda Song, Hanlin Zhang, Carson Eisenach, Sham Kakade, Dean Foster, and Udaya Ghai.
\newblock Mind the gap: Examining the self-improvement capabilities of large language models.
\newblock \emph{arXiv preprint arXiv:2412.02674}, 2024.

\bibitem[Swamy et~al.(2024)Swamy, Dann, Kidambi, Wu, and Agarwal]{swamy2024minimaximalist}
Gokul Swamy, Christoph Dann, Rahul Kidambi, Zhiwei~Steven Wu, and Alekh Agarwal.
\newblock A minimaximalist approach to reinforcement learning from human feedback.
\newblock \emph{arXiv:2401.04056}, 2024.

\bibitem[Swamy et~al.(2025)Swamy, Choudhury, Sun, Wu, and Bagnell]{swamy2025all}
Gokul Swamy, Sanjiban Choudhury, Wen Sun, Zhiwei~Steven Wu, and J~Andrew Bagnell.
\newblock All roads lead to likelihood: The value of reinforcement learning in fine-tuning.
\newblock \emph{arXiv preprint arXiv:2503.01067}, 2025.

\bibitem[Tajwar et~al.(ICML 2024)Tajwar, Singh, Sharma, Rafailov, Schneider, Xie, Ermon, Finn, and Kumar]{tajwar2024preference}
Fahim Tajwar, Anikait Singh, Archit Sharma, Rafael Rafailov, Jeff Schneider, Tengyang Xie, Stefano Ermon, Chelsea Finn, and Aviral Kumar.
\newblock {Preference Fine-Tuning of LLMs Should Leverage Suboptimal, On-Policy Data}, ICML 2024.

\bibitem[Team et~al.(2025)Team, Du, Gao, Xing, Jiang, Wang, Yang, Huang, Huang, Xu, and Yang]{kimiteam2025kimik15scalingreinforcement}
Kimi Team, Angang Du, Bofei Gao, Bowei Xing, Changjiu Jiang, Zhexu Wang, Zhilin Yang, Zhiqi Huang, Zihao Huang, Ziyao Xu, and Zonghan Yang.
\newblock Kimi k1.5: Scaling reinforcement learning with llms, 2025.
\newblock URL \url{https://arxiv.org/abs/2501.12599}.

\bibitem[Team(2025)]{openthoughts}
OpenThoughts Team.
\newblock {Open Thoughts}.
\newblock https://open-thoughts.ai, February 2025.

\bibitem[Wang et~al.(2025{\natexlab{a}})Wang, Yu, Gao, Zheng, Liu, Lu, Dang, Chen, Yang, Zhang, et~al.]{wang2025beyond}
Shenzhi Wang, Le~Yu, Chang Gao, Chujie Zheng, Shixuan Liu, Rui Lu, Kai Dang, Xionghui Chen, Jianxin Yang, Zhenru Zhang, et~al.
\newblock Beyond the 80/20 rule: High-entropy minority tokens drive effective reinforcement learning for llm reasoning.
\newblock \emph{arXiv preprint arXiv:2506.01939}, 2025{\natexlab{a}}.

\bibitem[Wang et~al.(2022)Wang, Wei, Schuurmans, Le, Chi, Narang, Chowdhery, and Zhou]{wang2022self}
Xuezhi Wang, Jason Wei, Dale Schuurmans, Quoc Le, Ed~Chi, Sharan Narang, Aakanksha Chowdhery, and Denny Zhou.
\newblock Self-consistency improves chain of thought reasoning in language models.
\newblock \emph{arXiv preprint arXiv:2203.11171}, 2022.

\bibitem[Wang et~al.(2025{\natexlab{b}})Wang, Yang, Zeng, Ren, Liu, Peng, Cheng, He, Wang, Gao, et~al.]{wang2025reinforcement}
Yiping Wang, Qing Yang, Zhiyuan Zeng, Liliang Ren, Lucas Liu, Baolin Peng, Hao Cheng, Xuehai He, Kuan Wang, Jianfeng Gao, et~al.
\newblock Reinforcement learning for reasoning in large language models with one training example.
\newblock \emph{arXiv preprint arXiv:2504.20571}, 2025{\natexlab{b}}.

\bibitem[Xie et~al.(2025)Xie, Gao, Ren, Luo, Hong, Dai, Zhou, Qiu, Wu, and Luo]{xie2025logic}
Tian Xie, Zitian Gao, Qingnan Ren, Haoming Luo, Yuqian Hong, Bryan Dai, Joey Zhou, Kai Qiu, Zhirong Wu, and Chong Luo.
\newblock Logic-rl: Unleashing llm reasoning with rule-based reinforcement learning.
\newblock \emph{arXiv preprint arXiv:2502.14768}, 2025.

\bibitem[Yang et~al.(2025)Yang, Li, Yang, Zhang, Hui, Zheng, Yu, Gao, Huang, Lv, et~al.]{yang2025qwen3}
An~Yang, Anfeng Li, Baosong Yang, Beichen Zhang, Binyuan Hui, Bo~Zheng, Bowen Yu, Chang Gao, Chengen Huang, Chenxu Lv, et~al.
\newblock Qwen3 technical report.
\newblock \emph{arXiv preprint arXiv:2505.09388}, 2025.

\bibitem[Yao et~al.(2023)Yao, Yu, Zhao, Shafran, Griffiths, Cao, and Narasimhan]{yao2023tree}
Shunyu Yao, Dian Yu, Jeffrey Zhao, Izhak Shafran, Thomas~L Griffiths, Yuan Cao, and Karthik Narasimhan.
\newblock Tree of thoughts: Deliberate problem solving with large language models.
\newblock \emph{arXiv preprint arXiv:2305.10601}, 2023.

\bibitem[Yeo et~al.(2025)Yeo, Tong, Niu, Neubig, and Yue]{yeo2025demystifyinglongchainofthoughtreasoning}
Edward Yeo, Yuxuan Tong, Morry Niu, Graham Neubig, and Xiang Yue.
\newblock Demystifying long chain-of-thought reasoning in llms, 2025.
\newblock URL \url{https://arxiv.org/abs/2502.03373}.

\bibitem[Yu et~al.(2025)Yu, Zhang, Zhu, Yuan, Zuo, Yue, Fan, Liu, Liu, Liu, et~al.]{yu2025dapo}
Qiying Yu, Zheng Zhang, Ruofei Zhu, Yufeng Yuan, Xiaochen Zuo, Yu~Yue, Tiantian Fan, Gaohong Liu, Lingjun Liu, Xin Liu, et~al.
\newblock Dapo: An open-source llm reinforcement learning system at scale.
\newblock \emph{arXiv preprint arXiv:2503.14476}, 2025.

\bibitem[Yuan et~al.(2023)Yuan, Yuan, Li, Dong, Tan, and Zhou]{yuan2023scaling}
Zheng Yuan, Hongyi Yuan, Chengpeng Li, Guanting Dong, Chuanqi Tan, and Chang Zhou.
\newblock Scaling relationship on learning mathematical reasoning with large language models.
\newblock \emph{arXiv preprint arXiv:2308.01825}, 2023.

\bibitem[Yue et~al.(2025)Yue, Chen, Lu, Zhao, Wang, Song, and Huang]{yue2025does}
Yang Yue, Zhiqi Chen, Rui Lu, Andrew Zhao, Zhaokai Wang, Shiji Song, and Gao Huang.
\newblock Does reinforcement learning really incentivize reasoning capacity in llms beyond the base model?
\newblock \emph{arXiv preprint arXiv:2504.13837}, 2025.

\bibitem[Zelikman et~al.(2022)Zelikman, Wu, Mu, and Goodman]{zelikman2022star}
Eric Zelikman, Yuhuai Wu, Jesse Mu, and Noah Goodman.
\newblock Star: Bootstrapping reasoning with reasoning.
\newblock \emph{Advances in Neural Information Processing Systems}, 35:\penalty0 15476--15488, 2022.

\bibitem[Zeng et~al.(2025{\natexlab{a}})Zeng, Huang, Liu, Liu, He, Ma, and He]{zeng2025simplerl}
Weihao Zeng, Yuzhen Huang, Qian Liu, Wei Liu, Keqing He, Zejun Ma, and Junxian He.
\newblock Simplerl-zoo: Investigating and taming zero reinforcement learning for open base models in the wild.
\newblock \emph{arXiv preprint arXiv:2503.18892}, 2025{\natexlab{a}}.

\bibitem[Zeng et~al.(2025{\natexlab{b}})Zeng, Huang, Liu, Liu, He, Ma, and He]{zeng2025simplerlzooinvestigatingtamingzero}
Weihao Zeng, Yuzhen Huang, Qian Liu, Wei Liu, Keqing He, Zejun Ma, and Junxian He.
\newblock Simplerl-zoo: Investigating and taming zero reinforcement learning for open base models in the wild, 2025{\natexlab{b}}.
\newblock URL \url{https://arxiv.org/abs/2503.18892}.

\bibitem[Zhao et~al.(2025)Zhao, Kang, Feng, Levine, and Song]{sharpen2}
Xuandong Zhao, Zhewei Kang, Aosong Feng, Sergey Levine, and Dawn Song.
\newblock Learning to reason without external rewards, 2025.
\newblock URL \url{https://arxiv.org/abs/2505.19590}.

\end{thebibliography}

\newpage

\appendix
\part*{Appendices}
\vspace{-0.2cm}

 \hyperref[sec:test-extrapolation-appendix]{\textbf{A.}}$~~~~$ Testing Extrapolation of Open Source Models. \\
\hyperref[sec:vg-gap-appendix]{\textbf{B.}}$~~~~$ Additional Experiments and Details for Section~\ref{sec:vg-gap} (Chained Asymmetries). \\
\hyperref[sec:negative-gradient-appendix]{\textbf{C.}}$~~~~$ Additional Experiments and Details for Section~\ref{sec:negative-gradient} (Negative Gradient). \\
\hyperref[sec:structured-exploration-appendix]
{\textbf{D.}}$~~~\,$ Additional Experiments and Details for Section~\ref{sec:structured-exploration} (Curricula Training). \\
\hyperref[sec:omitted-proofs-appendix]{\textbf{E.}}$~~~\:$ Omitted Proofs. \\
\hyperref[sec:broad-impact-app]{\textbf{F.}}$\;\;\,\,\,\,$ Broader Impact Statement.  \\
\hyperref[sec:computational-resources-appendix]{\textbf{G.}}$~~~\:$ Note on Computational Resources Used for \texttt{e3}. \\
\hyperref[sec:example-traces]{\textbf{H.}}$~~\:\,$ Example Traces.

\vspace{-0.3cm}

\section{Testing Extrapolation of Open-Source Models}
\label{sec:test-extrapolation-appendix}

 \begin{figure}[htbp]
  \centering
  \includegraphics[width=0.99\linewidth]{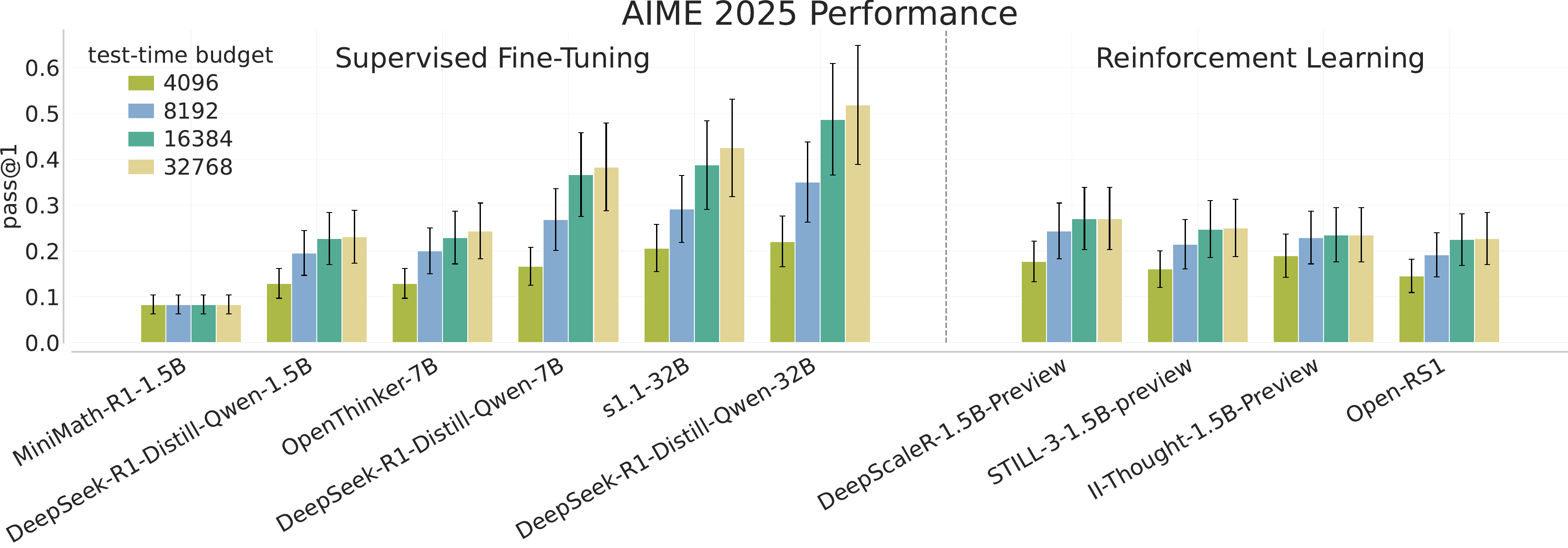}
  \vspace{0cm}
  \caption{\footnotesize\textbf{\emph{Extrapolation of test-time compute:}}{We plot the performance (pass@1) on AIME 2025 at different test-time compute budgets across multiple open-source models of different sizes, trained with SFT or RL.} }
  \label{fig:aime}
  \vspace{0cm}
 \end{figure}

\textbf{Extrapolation on AIME 2025.}
Extrapolation (i.e. the chaining of generation, verification, refinement, etc.) can potentially extend LLM performance after training, and do so beyond the context length the model was originally trained on. To evaluate this properly, we need sufficiently challenging problems that allow meaningful expressiveness in reasoning beyond small context lengths. The math problems associated with AIME align with this, and our evaluations prioritize AIME 2025 to attempt to mitigate any potential data contamination in the models' training sets from previous years of AIME. The goal of the experiment is to measure the extent to which test-time compute influences overall model performance as context length increases, with the expectation that increasing output length allows models to "reason" for longer periods, continuing the extrapolation process, and ultimately arriving at the correct answer more frequently.

\textbf{Experiment setup.}
Inference for every open-source model was performed using Oumi through data-parallel SGLang. All models had inference run with a max output length of approximately 32k tokens, though some are slightly lower due to this exceeding their max context length when combined with the prompt. The exact inference hyperparameters are described in Table \ref{aime-exp-table}. After inference, the model responses were truncated from the right side until the number of remaining tokens present was equal to the specified test-time budget. 16 responses were collected for every problem in AIME with the specified inference settings, and the Pass@1 rate was calculated by averaging over these 16 responses. Final answers were extracted using a regular expression for the boxed portion of the answer, with correct answers marked as passing and incorrect or incorrectly parsed answers marked as nonpassing. The prompt used is in Box~\ref{box:aime-eval-prompt}, and the problems were taken from the FVU AIME 2025 dataset on HuggingFace\footnote{ \url{https://huggingface.co/datasets/FVU/AIME_2025}}.

\newtcolorbox[auto counter, number within=section]{instructionbox}[2][]{%
 title=Box~\thetcbcounter: #2,
 label=#1,
 breakable
}

\begin{instructionbox}[box:aime-eval-prompt]{AIME Evaluation Prompt Template}
You will be given a math problem. Solve the problem step by step. Output your final answer in the form of \textbackslash\textbackslash boxed\{your answer\}.
Problem: \{problem\}
 
\end{instructionbox}

\begin{table}[htbp]
\centering
\begin{tabular}{p{0.35\linewidth}|p{0.06\linewidth}|p{0.06\linewidth}|p{0.09\linewidth}|p{0.10\linewidth}|p{0.12\linewidth}}
Model             & Temp. & Top $p$ & Rollouts & Max New Tokens & Model Max Length \\ \midrule
MiniMath R1-1.5B       & 0.6     & 0.95 & 16          & 32768     & 40960      \\
DeepSeek R1-Distill-Qwen-1.5B & 0.6     & 0.95 & 16          & 32768     & 40960      \\
OpenThinker-7B        & 0.6     & 0.95 & 16          & 31000     & 32768      \\
DeepSeek-R1-Distill-Qwen-7B  & 0.6     & 0.95 & 16          & 32768     & 40960      \\
s1.1-32B           & 0.6     & 0.95 & 16          & 31000     & 32768      \\
DeepSeek-R1-Distill-Qwen-32B & 0.6     & 0.95 & 16          & 32768     & 40960      \\
DeepScaleR-1.5B-Preview    & 0.6     & 0.95 & 16          & 32768     & 40960      \\
STILL-3-1.5B-preview     & 0.6     & 0.95 & 16          & 32768     & 40960      \\
II-Thought-1.5B-Preview    & 0.6     & 0.95 & 16          & 32768     & 40960      \\
Open-RS1           & 0.6     & 0.95 & 16          & 32768     & 40960      \\
\end{tabular}
\newline
\vspace{0.1cm}
\caption{Inference parameters used for generating the extrapolation plots in Figure~\ref{fig:aime-line}.}
\label{aime-exp-table}
\end{table}

\textbf{Results.}
The results in Figure \ref{fig:aime} show that as the maximum number of output tokens increases, every model capable of "reasoning" is able to attain a higher Pass@1 rate, with performance generally saturating at 16k tokens with relatively minor improvements at 32k. We do not observe this with MiniMath-R1-1.5B, and we suspect this is due to its fine-tuning focusing solely on smaller math problems trained with supervised fine-tuning, likely resulting in catastrophic forgetting of the ability to continuously extrapolate. Interestingly, we do not see a strong improvement in extrapolation behavior among models tuned with reinforcement learning compared to DeepSeek R1-Distill-Qwen-1.5B, which was trained with supervised fine-tuning. We suspect that this is likely due to the nature of the distillation data from the R1 model, which, if varied sufficiently in length, could avoid the length bias normally learned from supervised fine-tuning, while still teaching the model to perform extrapolation.

\section{Additional Experiments and Details for Section~\ref{sec:vg-gap} (Chained Asymmetries)}
\label{sec:vg-gap-appendix}

\subsection{Details on \texorpdfstring{\mult{}}{Mult} and \texorpdfstring{\multv{}}{Mult-V}}

\textbf{Data collection.} Both \mult{} and \multv{} consist of multiplication traces for solving a 5-digit $\times$ 5-digit multiplication problem. For the \mult{} task, we use a Llama3.2-3B instruction tuned model where the number of intermediate verification attempts is much lower in a trace when asked to solve a multiplication problem. In fact, it is not hard to see that, in general, for multiplication, generation of a trace may be as hard as verifying a generated one, as the only way to verify the entire trace is to re-attempt the multiplication or carry out a division with the computed target. We contrast this task with the \multv{} task, where the Llama3.2-3B models are first finetuned on traces from Qwen-32B-R1-Distilled and GPT-4o models. These traces contain multiple verification attempts that verify intermediate steps solving smaller multiplication problems, and the steps are part of an entire trace that attempts to solve the main multiplication problem involving two 5-digit numbers. For collecting data we used the prompt in Box~\ref{box:multv-prompt}. In App.~\ref{sec:example-traces} Example 2, we also provide an example multiplication trace with verification attempts sampled by the base model in \multv{}. As we will see in Fig.~\ref{fig:neg-grad-multiply}, the absence of asymmetries in \mult{} leads to lower accuracy and verifications when compared to \multv{}, where asymmetries are present.  

\begin{instructionbox}[box:multv-prompt]{Prompt for generating \multv{} data}
Multiply \{num1\} and \{num2\}. Please reason step by step, and put your 
final answer within \textbackslash\textbackslash boxed\{\}. At each step, try to verify your response if possible and prefix the line with ``Check:''. <think> 
\end{instructionbox}

\begin{table*}[!ht]
\centering
\begin{tabularx}{0.4\linewidth}{l|c}
 \toprule
\multicolumn{1}{c}{\textbf{Hyperparameter}} \vline & \multicolumn{1}{c}{\textbf{Values}} \\ 
\midrule
train\_batch\_size & 256 \\
ppo\_mini\_batch\_size & 64 \\
learning\_rate & 5.0e-6 \\
kl\_loss\_coef & 0.001 \\
entropy\_coeff & 0.001 \\
temperature & 1.0 \\
rollout.n & 16 \\
ppo\_lowerclip\_threshold & 0.2  \\
ppo\_higherclip\_threshold & 0.2  \\
\end{tabularx}
\vspace{0.1cm}
\caption{Verl~\cite{verl} hyperparameters used for \mult{} and \multv{}.}
\label{tab:finetune_hyper_mult}
\end{table*}

\textbf{Training details.} Hyperparameters for our experiments on \mult{} and \multv{} are given in Table~\ref{tab:finetune_hyper_mult}.

\subsection{Details on \texorpdfstring{\cdown{}}{CDown}}
\textbf{Training details.}
Hyperparameters in \cdown{} experiments follow the table below unless otherwise specified. In all of our \cdown{} experiments, we take the fine-tuned Llama3.2-3B base model from~\cite{gandhi2025cognitivebehaviorsenableselfimproving}. For Fig.~\ref{fig:vg-gap-main}, we trained with $\btr = 512, 1024, 2048$ on problems with $3,4,5,6$ candidates. The total number of datapoints we used was 40000, which were evenly split across the four difficulties.
\begin{table*}[ht]
\centering
\begin{tabularx}{0.4\linewidth}{l|c}
 \toprule
\multicolumn{1}{c}{\textbf{Hyperparameter}} \vline & \multicolumn{1}{c}{\textbf{Values}} \\ 
\midrule
train\_batch\_size & 128 \\
ppo\_mini\_batch\_size & 32 \\
learning\_rate & 1.0e-6 \\
kl\_loss\_coef & 0.001 \\
entropy\_coeff & 0 \\
temperature & 0.6 \\
rollout.n & 8  \\
ppo\_lowerclip\_threshold & 0.2  \\
ppo\_higherclip\_threshold & 0.2  \\
\end{tabularx}
\vspace{0.1cm}
\caption{Verl~\cite{verl} hyperparameters used for \cdown{}.}
\label{tab:finetune_hyper}
\end{table*}

\textbf{Evolution of chained asymmetries at test time.}
In Fig.~\ref{fig:base-asymmetries-full}, we show that as training progresses, responses with more chained asymmetries enjoy a greater improvement. If we move across any diagonal parallel to the main diagonal from top left to bottom right, we move across a constant attempt budget (\textit{e.g.,} moving from $16 \text{ chained asymmetries} \times 1 \text{ pass}$ to $8 \text{ chained asymmetries} \times 2 \text{ passes}$). Having sequential chained asymmetries become increasingly better than parallel rollouts as training progresses, indicating the exploitation of asymmetries in RL training. See example of chained asymmetry in App.~\ref{sec:example-traces}, Example 1.

\begin{figure}[!ht]
  \begin{center}
  \hspace{-0.4cm}
  \includegraphics[width=0.96
  \linewidth]{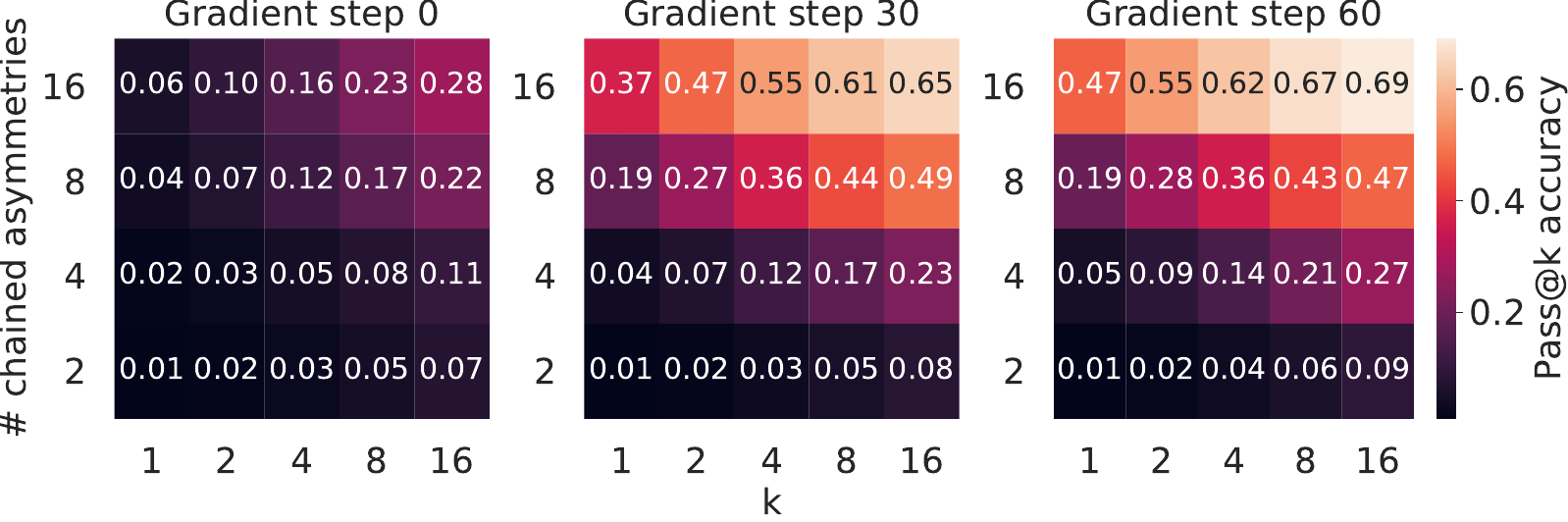}
  \vspace{-0.4cm}
 \end{center}
  \caption{\footnotesize{\textbf{\emph{Evolution of asymmetries during training on \cdown}:}} More chained asymmetries lead to a greater improvement in pass@k performance across gradient steps.
 \vspace{-0.4cm}
 }
  \label{fig:base-asymmetries-full}
\end{figure}

\subsection{In the Presence of Asymmetries, KL Divergence with Base LLM Reduces as Training Token Budget Increases}

\begin{figure}[!h]
    \centering
    \includegraphics[width=0.7\linewidth]{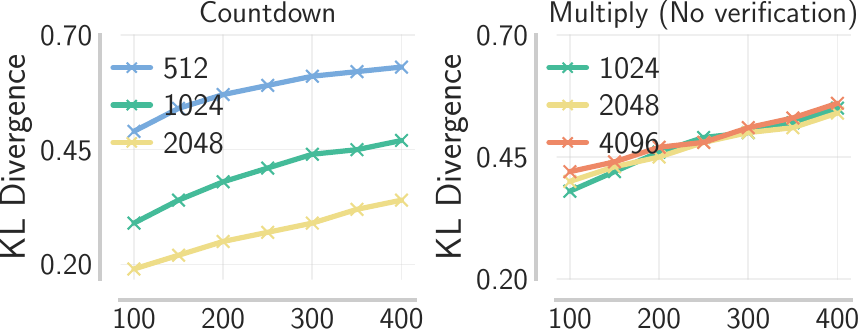}
    \caption{\footnotesize{\textbf{\emph{KL-divergence with base LLM on \cdown{} and \mult{}}:} When running RL training on \cdown{} and \mult{} with multiple training budgets (512, 1024, 2048 on \cdown{} and 1024, 2048, 4096 on \mult{}) we note that the KL divergence is lower when running RL training on higher training budgets, when the base model presents asymmetries (in here the asymmetry is given by the verification generation gap on \cdown{}).}}
    \label{fig:kl-asymmetries-appendix}
\end{figure}

In Fig.~\ref{fig:kl-asymmetries-appendix}, we  interestingly observe that training with higher $B_\mathrm{tr}$ results in a smaller token KL-divergence from $\pi_b$ all throughout training on countdown. On multiplication in the absence of asymmetries, the KL-divergence values are roughly similar for all $B_\mathrm{tr}$. This means that when the verification-generation asymmetry is present, the training process deviates less from $\pi_b$ at each token, but is able to ``chain'' multiple verification and generation attempts together to improve accuracy, by learning to explore over the space of basic skills. Prior work argues that a model that deviates less from the base pre-trained model generalizes better on unseen prompts~\citep{gao2019reward}. If we were to apply this argument in our case, this means that models that are able to use asymmetries better should result in better performance on unseen prompts, especially when operating at higher test compute.

\section{Additional Experiments and Details for Section~\ref{sec:negative-gradient} (Negative Gradient)}
\label{sec:negative-gradient-appendix}

\subsection{Details for \texorpdfstring{\cdown{}}{CDown}}
We trained models for 90 steps on problems with 5 candidate numbers with a training budget of 2k.

\textbf{Cumulative unique attempts plot.} Fig.~\ref{fig:unique-attempts} (left) was filtered on incorrect traces on problems with < 50\% success across gradient steps. We select only incorrect traces to capture the ability of the model to explore for the correct trace, rather than to output diverse correct traces once one is found. We filter for problems with < 50\% success across training for \Grpo{} and \GrpoM{} because otherwise the algorithm with better rewards would see more problems with lower cumulative unique attempts, as the correct traces are discovered early and subsequently reinforced.

\textbf{Evolution of the conditional distribution given past attempts in \cdown{}.} We run ablations on the conditional distribution of a new attempt (sequence of tokens that constitute an attempt to plug-in operations so as to match the target \cdown{}) given past attempts in three different settings, shown in Fig.~\ref{fig:logprob-panel}. In (a), we plot $\log p(a_k|a_{1:k-1}) - \log p(a_k|a_{1:k-2})$, which should average to roughly 0 if the attempts are independent. As training progresses, this quantity grows, indicating a correlation between attempts, especially with larger $k$ (potentially because the new attempt can attend to more previous attempts, and thus becomes more dependent on them). In (b), we plot $\log p(a_k|a_{1:k-1}) - \log p(a_{k-1}|a_{1:k-2})$, which also grows over time. This indicates that the conditional distribution $p(\text{new attempt} | \text{past attempts})$ sharpens as the number of past attempts grows, implying that the model gets more confident as it explores more in-context. In (c), we plot $\log p(a_{k-1}|a_{1:k-1})$ and note that it reduces with more attempts way more on the trained model, compared to initialization. This means, that the model has learned not to repeat its previous attempt when it immidiately re-attempts to solve the problem. These three trends jointly tell us that the learned model indeed learns to explore-in-context where it adapts and sharpens  the conditional distribution over the next attempt with more previous attempts.

\begin{figure}[!h]
     \captionsetup{font=small,skip=0pt}
    \centering
    \begin{subfigure}[b]{0.325\textwidth}
    \captionsetup{font=small,skip=2pt}
        \centering
        \includegraphics[width=0.99\textwidth]{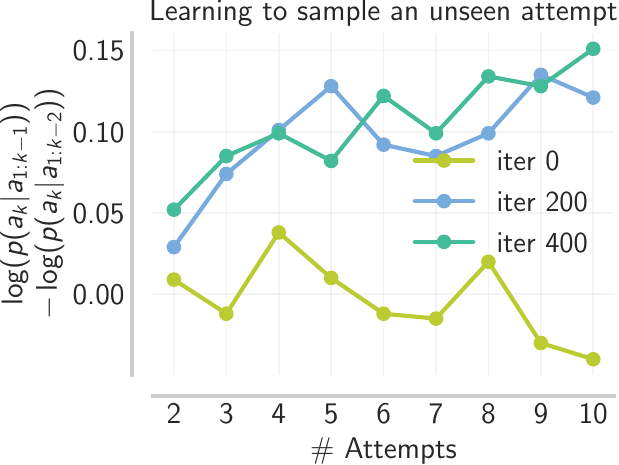}
        \caption*{(a)}
    \end{subfigure}\hfill
    \begin{subfigure}[b]{0.34\textwidth}
     \captionsetup{font=small,skip=2pt}
        \centering
        \includegraphics[width=0.99\textwidth]{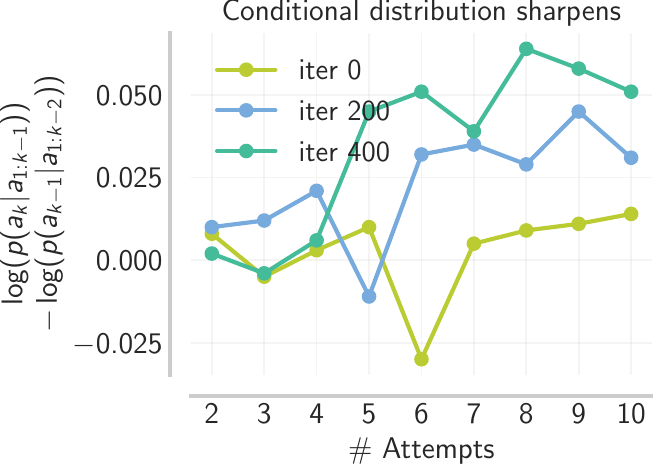}
        \caption*{(b)}
    \end{subfigure}\hfill
    \begin{subfigure}[b]{0.315\textwidth}
    \captionsetup{font=small,skip=2pt}
        \centering
        \includegraphics[width=0.99\textwidth]{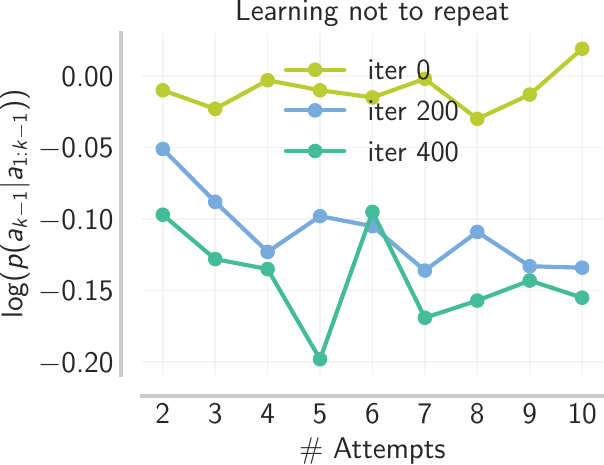}
        
        \caption*{(c)}
    \end{subfigure}
    \vspace{0.1cm}
\caption{\footnotesize{{\textbf{\emph{Probing the conditional distributions conditioned on past attempts in \cdown{}.}} \textbf{(a):} New attempts are not independent of past attempts \textbf{(b):} Model becomes more certain of what to try next given more past attempts  \textbf{(c):} Model learns not to repeat past attempts.}}}
    \label{fig:logprob-panel}
\end{figure}

\subsection{Additional Experiments with \texorpdfstring{\mult{}}{Mult}}

In Section~\ref{sec:negative-gradient} we saw that training with the negative gradient leads to more exploration during RL training, which in turn leads to the amplification of any chained asymmetries that may be present in the base model, \textit{e.g.}, more generation-verification steps. In particular, we noted the increase in the number of verification steps in Fig.~\ref{fig:negative-grad-main}(b). 
To see how negative gradients affect the response length and number of chained asymmetries in the absence of a strong VG gap, we compare running GRPO with and without negative gradients on our multiplication task \mult{} where the VG gap is weaker in the base model. 

We plot results in Fig.~\ref{fig:neg-grad-multiply}, where we note two trends when running RL training with and without negative gradients on \mult{} (without VG gap), and \multv{} (with VG gap) using a training budget of 4096 tokens. First, we note that the number of verifications is higher when we use negative gradients in a setting with a large VG gap. When the VG gap is absent, the number of chained asymmetries (verification-generation steps) are roughly the same with and without masking the negative gradient. Second, we note that the accuracy is much higher with negative gradients in the presence of VG gap (\multv{}), and comparable to a run where we mask the negative gradients in the setting where the VG gap is poor (\mult{}). Together, this tells us that the boost in exploration driven by negative gradients leads to more chained asymmetries when the base model presents some of them, like a large VG gap.  
\begin{figure}
    \centering
    \includegraphics[width=0.96\linewidth]{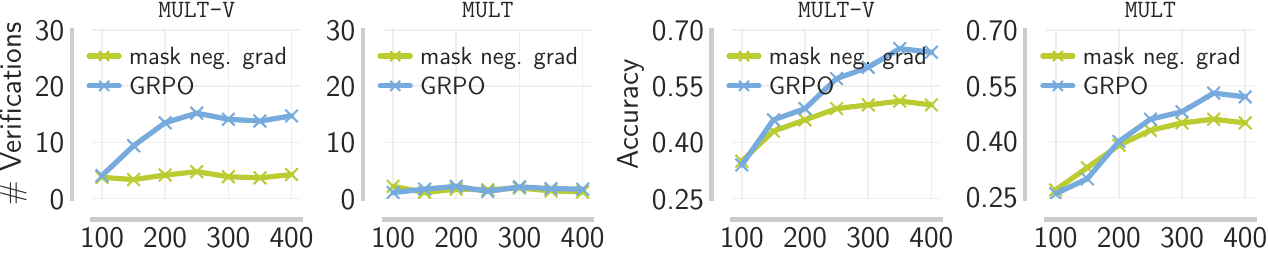}
    \vspace{0.1cm}
    \caption{\footnotesize\textbf{\emph{Negative gradient amplifies verification when VG gap is large.}} While utilizing the negative gradient amplifies the number of calls to verification in \multv{}, the number of verification calls does not grow over training in \mult{}. Interestingly, though, we find that when negative gradient is masked out on \multv{}, the number of verification calls is still very low and does not increase, corroborating our findings that exploration driven by negative gradients results in in-context exploration only in the presence of asymmetries in the base model. A similar trend is also observed in terms of the raw accuracy. }
    \label{fig:neg-grad-multiply}
\end{figure}

\subsection{Additional Details for the Didactic Setting in Sec.~\ref{sec:negative-gradient}}

First, we comment on exploration and meta-exploration in RL, and how negative gradients in our didactic setting can connnect one to the other in the presence of asymmetries. Next, we introduce some details for the policy parameterization and training. 

\textbf{Negative gradients boost exploration, which in the presence of asymmetries incentivizes in-context exploration.} In Sec.~\ref{sec:negative-gradient} we showed how negative gradients can boost exploration in RL, and in the presence of asymmetries in the base model, lead to more chained asymmetries and longer responses -- a phenomenon we call in-context exploration. Here, we present a theoretical result that explains why negative gradient can incentivize the more ``traditional exploration'' in RL, in our didactic bi-gram model. Since verification is perfect in our bi-gram model, any policy in our policy class always stops at the $\stopact$ token. Thus, an increase in exploration leads to longer traces, and more chained asymmetries. As a result, in this setting, we can view an improvement in exploration as an improvement in meta-exploration (or in-context exploration), driven by negative gradients. 

\textbf{Parameterization of the policy class.} We parameterize the policy class as a softmax policy, where the probability of next action $a_{t+1}$, at state current $a_t$ (in a bi-gram model current state is equivalent to the previous action) is parameterized with the vector of logits $[M(a \mid a_t)]_{a\in \bar{\gA}}$, i.e.:
\begin{align}
\pi_M(a_{t+1} \mid a_t) = \frac{e^{M(a_{t+1} \mid a_t)}}{\sum_{a' \in \Abarspace} e^{M(a' \mid a_t)}}, \quad a_{t+1} \in \Abarspace, a \in \Aspace \label{eq:softmax-policy}
\end{align}
where $M = [M(a^+ \mid a)]_{a^+ \in \Abarspace,A \in \Aspace}$ can be expressed as a matrix in $\R^{(K+1) \times K}$. Note that the cuurent state can never be the $\stopact$ action, since a $\stopact$ always terminates the MDP.

\textbf{Training details.} We set the initial distribution $\pi_0$ to be the uniform distribution over all actions except $a^\star$, i.e., $\pi_0(a^\star) = 0$. For each state $\bs$, the policy is first initialized with random values of $M(\cdot \mid \bs)$ in $[-3.0, 3.0]$, and then we set $M(\stopact\mid \bs) = 4.0$, $M(a^\star \mid \bs) = -4.0$, which mimics the setting where the probability of sampling the stop action is higher than any random action, and the probability of sampling $a^\star$ is lower than any random action. We train with a learning rate of $1e$-$2$ and use stochastic gradient descent to update the policy where a single update samples a random trajectory $\tau$, starting from a random state sampled from the initial state distribution $\pi_0$, by running the policy until termination of the MDP. We then compute the policy gradient term, by averaging the policy gradient loss over the tokens in the trajectory $\tau$: $\nicefrac{1}{|\tau|}\cdot \sum_{i\in |\tau|}\log \pi_M(a_i \mid a_{1:i-1}) \cdot A(a_i, a_{1:i-1})$.

\section{Additional Experiments and Details for Section~\ref{sec:structured-exploration} (Curricula Training)}
\label{sec:structured-exploration-appendix}
\subsection{Training Details and In-distribution Performance on Training Budget}

We present our hyperparameters for \ethree{} training runs in Table~\ref{tab:math_hyper}. 

\textbf{Note on in-distribution performance.} In Sec.~\ref{sec:structured-exploration} we note that for best extrapolation performance, it is important to vary the mixture of tasks in the dataset, as well as the training budget (max token length) in a a coupled way, over the course of RL training. Here, we note that if we were to only care about in-distribution performance, i.e., performance on a fixed task mixture (of equally proportioned easy, medium, and hard questions in \mathdata{}), then the best way to train is to match the test token budget and prompt mixture with training. In particular, training only on easy problems and a budget of 8k yields a performance of $54.3\%$ on a test dataset consisting of all tasks (from easy, medium and hard splits). But, if we match the test mixture with train, and train on all difficulties, then on the same 8k test budget, we note a performance of $58.9\%$, averaged over all difficulties. Note that the exptrapolation performance (on hard, out-of-distribution AIME '25 questions) of the same models is flipped in Fig.~\ref{fig:curriculum-panel-0}, indicating that the curricula design is mainly needed for extrapolation, via in-context exploration, as opposed to best performance at a fixed test $\btr$.   

\begin{table*}[!h]
\centering
\begin{tabularx}{0.69\linewidth}{l|c|c}
\multicolumn{1}{c}{\textbf{Hyperparameter}} \vline & \multicolumn{1}{c}{\textbf{Values} ($\btr=$ 8k)} \vline & \multicolumn{1}{c}{\textbf{Values} ($\btr=$ 16k)} \\ 
\midrule
train\_batch\_size & 128  & 64  \\
ppo\_mini\_batch\_size & 32 &  32  \\
learning\_rate & 1.0e-6 & 1.0e-6 \\
kl\_loss\_coef & 0.001 & 0.001 \\
entropy\_coeff & 0.002 & 0.001 \\
temperature & 0.6 & 0.6 \\
rollout.n & 8 & 32 \\
ppo\_lowerclip\_threshold & 0.2 & 0.2 \\
ppo\_higherclip\_threshold & 0.5 & 0.35 \\
\end{tabularx}
\vspace{0.1cm}
\caption{Verl~\cite{verl} hyperparameters used for \ethree{} runs on \mathdata{}.}
\label{tab:math_hyper}
\end{table*}

\textbf{Hyperparameters for \ethree{}.} In Tab~\ref{tab:math_hyper}, we report the hyperparameters we used for RL training during different stages of our coupled curriculum runs on \mathdata{}. For the first stage of training on $\btr$=8k, we used fewer number of rollouts per prompt (16), and increased this to 32 for second stage training when $\btr$=16k. We did this to account for the larger outcome-reward variance typically associated with long horizon RL training~\cite{agarwal2021theory}. In general, we find that a successful RL run at a budget of 16k can generally characterized by the following trends: \textbf{(i)}
averag per-token entropy increases during training (or at least does not drop during training); and \textbf{(ii)} the number of chained asymmetries (verification attempts) and the response length increases during RL training. To improve token-level entropy we move away from purely on policy RL, and use off-policy data to update the current policy, with the policy density ratio clipping mechanism to avoid aggressively off-policy updates. 
Consistent with the findings in \citet{yu2025dapo}, we find that when updating the policy on stale off-policy data, using a higher clip ratio for the positive advantage tokens is critical for increasing token-level entropy during RL training. This is mainly to weight the probability of some very low probability and positive advantage tokens. But increasing the clip ratio too substantially can also de-stabilize training, as we observed in the 16k training runs, due to which we dropped the clip threshold from 0.5 to 0.35. The rest of the hyperparameters are consistent with the default options in Verl~\cite{verl}.    

\subsection{Fixed train budget, vary dataset curriculum on \texorpdfstring{\cdown}{CDown}}
In this subsection, we demonstrate that training with a data curriculum based on difficulty with a fixed train budget can lead to over-exploratory output traces, on the example task of \cdown.
With the data curriculum (i.e., fixed budget, vary data), we train first on \cdown{} problems with 3 candidate numbers (the ``easy'' problems) for 60 gradient steps, then those with 6 candidate numbers for 60 gradient steps (the ``hard'' problems), with a 1k budget across all steps.
We compare this with the coupled curriculum in which the first 60 gradient steps are trained with a budget of 256.
As shown in Fig.~\ref{fig:countdown-curriculum}, the latter achieves better reward on ``hard problems''.

\begin{figure}[htbp]
  \begin{center}
  \hspace{-0.4cm}
  \includegraphics[width=0.32\linewidth]{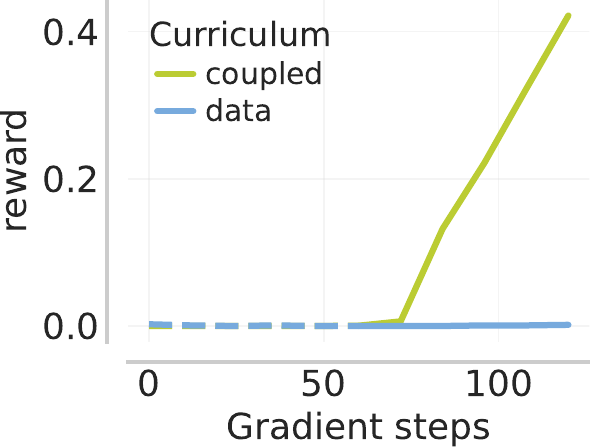}
  \includegraphics[width=0.32\linewidth]{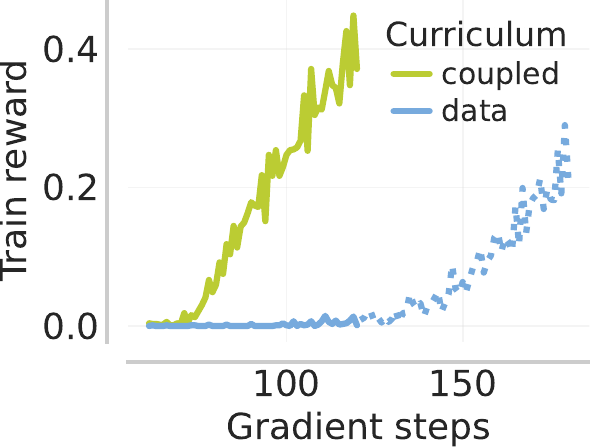}
  \includegraphics[width=0.32\linewidth]{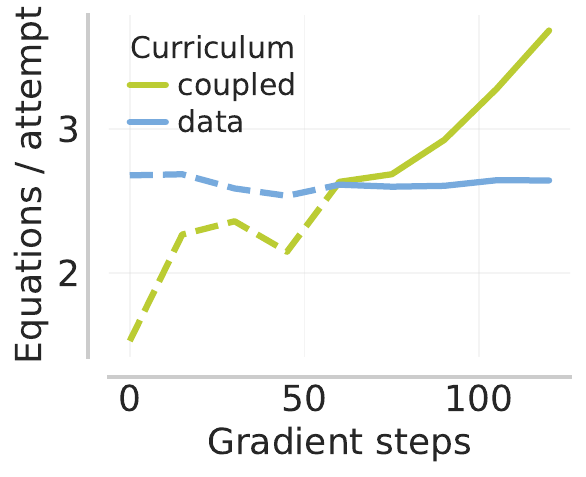}
 \end{center}
  \caption{\footnotesize{\textbf{Coupled vs. data curriculum on \cdown:}} training only on easy problems at large budgets leads to overfitting on ``over exploratory'' traces, failing to balance explore-exploit tradeoff on harder problems later on. Reward graphs are displayed for hard problems.
 }
  \label{fig:countdown-curriculum}
\end{figure}

\textit{\textbf{Why is data curriculum worse than the coupled curriculum?}}
We can view the learning of correct traces as largely composed of two stages: (i) negative gradients encourage exploration, leading to the discovery of correct traces, (ii) positive gradients reinforce correct traces, once discovered. 

For (i), we observe that training on easy problems exacerbates a tendency to perform over-exploratory in-context exploration ``underthinks'' (see Example 3 in App.~\ref{sec:example-traces}), restricting the discovery of solutions to harder problems.
When utilizing a coupled curriculum, this bias propagates to a shorter budget when compared to the data curriculum, since easy problems are trained on 256 rather than 1K tokens. As shown in Figure ~\ref{fig:countdown-curriculum}, the average number of equations per attempt (na\"ively, with 3 candidate numbers, 2 equations are required to perform a complete attempt vs. 5 equations for 6 candidates) increases noticeably for the coupled curriculum in the second stage, but plateaus for the data curriculum, implying overfitting on ``over-exploratory'' traces during the first stage. 

Furthermore, for (ii), even when nontrivial positive rewards are obtained as we run the data curriculum on hard problems for 60 additional steps (steps 120 to 180), the training reward curve converges more slowly compared to the coupled curriculum (steps 60 to 120), implying that the data curriculum is also worse at reinforcing correct traces if the behavior is over exploratory. While we do not run many controlled experiments to identify why this might be the case, we hypothesize that this is because of imperfect and noisy credit assignment on over-exploratory traces with outcome rewards. It is unclear which segments of the trace should be reinforced vs which segments might simply confuse the model.

\section{Omitted Proofs}
\label{sec:omitted-proofs-appendix}

\newcommand{\Mi}{M^{(i)}}
\newcommand{\Mis}{{M^{(i)}_{\mathbf{s}}}}
\newcommand{\HMi}{{H(M^{(i)}; \mathbf{s})}}
\newcommand{\policy}[1]{\pi^{(#1)}}

In this section, we present the formal version of Theorem~\ref{thm:neg-gradient-entropy-informal}, and provide a detailed proof for it. First, we introduce some notations and provide a proof overview.  

\textbf{Notations. } We use the shorthand $H(M; \bs)$ to denote the entropy of the conditional distribution over the next action $a_{t+1}$ given the current state $\bs$. 
We also use $\Mi$ to refer to the policy parameters (for the softmax policy in \eqref{eq:softmax-policy}) at iteration $i$ of RL training, and use the shorthand $\policy{i}$ to denote the policy induced by the parameter $\Mi$.
We use $\nabla_{\Mi} f(\Mi)$ to denote the gradient of function $f(M)$, with respect to $M$, evaluated at $M= \Mi$. 
Finally, we use $M_{\bs}$ to denote the row of softmax parameters that model the distribution $\pi_M(\cdot \mid \bs)$, \textit{i.e.}, the row of parameters $M(\cdot \mid \bs)$ in our parameter matrix $M$.

\textbf{Proof overview.} Without loss of generality, we fix an arbitrary state $\bs$ that is different from $\stopact$. Given the parameters $\Mi$ at current RL iterate $i$, we do a Taylor expansion of $H(M^{(i)};\bs)$ around $\Mi$, and then show that the gradient $\nabla_{\Mi} \HMi$ is positively correlated with the policy gradient with high probability over the sampling of the action $a \sim \pi_{\Mi}(\cdot \mid \bs)$, \textit{i.e.}:
\begin{align}
    \label{eq:positive-inner-prod} \langle \nabla_{M_i} \HMi \;,\; \nabla_{\Mi} \log \pi(a\mid \bs) \; A(\bs, a)  \rangle \geq 0, \\
    ~~ \text{whp. over sampling of action } a\sim \pi_{
    \Mi}(a\mid \bs)~~~~~ \nonumber
\end{align}

Before, we prove our result that lower bounds the increase in entropy with negative gradients, we present derivations of the entropy gradient with respect to the model parameters, as well as the policy gradient, which will simplify some calculations in the proof.

\begin{lemma}[Entropy gradient for the softmax bi–gram conditional]
\label{lem:entropy-gradient}
Fix a previous action (because the bi–gram state is \(s_t=a_{t-1}\), conditioning on the state is equivalent to conditioning on the last action) \(a\in\Aspace\).
Let the (column‑wise) logit matrix at time \(t\) be
\(M\in\mathbb{R}^{(K+1)\times K}\),
and define the corresponding softmax conditional distribution
\begin{align}
    \pi_{M}(a^{+}\mid a)
\;=\;
\frac{\exp \bigl(M(a^{+}\mid a)\bigr)}
     {Z(a)},
\qquad 
Z(a)\;=\;\sum_{a'\in\Abarspace}\exp \bigl(M(a'\mid a)\bigr).
\end{align}
Let the Shannon entropy of this conditional distribution be $H\!\bigl(\pi_{M}(\cdot\mid a)\bigr)$ or $H(M \mid a)$
Then $\nabla_{M} H(M \mid a)  \in \R^{K+1}$ is given by:
\begin{align}
    \nabla_{M} H(M \mid a) =  \;=\;
     -\,\pi\;\odot\;
       \bigl(\log \pi + H(\pi)\,\mathbf 1\bigr)
     \;=\;
     -\bigl[\,
        \pi_{i}\bigl(\log\pi_{i}+H(\pi)\bigr)
       \bigr]_{i\in\Abarspace}, 
\end{align}
\end{lemma}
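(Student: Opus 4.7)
The plan is to compute the partial derivative $\partial H(M \mid a)/\partial M(a^+ \mid a)$ for each $a^+ \in \Abarspace$ via the chain rule, and then assemble the resulting components into the claimed Hadamard product. Since the conditioning action $a$ is fixed throughout, we only ever differentiate with respect to entries in the column $M_{\cdot \mid a}$; entries in other columns contribute zero because both the partition function $Z(a)$ and each $\pi_M(a^+ \mid a)$ depend only on $M(\cdot \mid a)$. We may therefore work with the restricted vector $\vm \eqdef M(\cdot \mid a) \in \R^{K+1}$ and the induced probability vector $\pi \eqdef \pi_M(\cdot \mid a)$.

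The computation proceeds in three short steps. First I would record the standard softmax Jacobian, namely $\partial \pi_j / \partial \vm_i = \pi_j(\delta_{ij} - \pi_i)$, which follows directly from differentiating $\pi_j = e^{\vm_j}/Z$ and using $\partial Z/\partial \vm_i = e^{\vm_i}$. Second, I would differentiate $H(\pi) = -\sum_j \pi_j \log \pi_j$ via the product rule to obtain
\begin{align*}
    \frac{\partial H}{\partial \vm_i} \;=\; -\sum_{j} \frac{\partial \pi_j}{\partial \vm_i}\bigl(\log \pi_j + 1\bigr),
\end{align*}
where the ``$+1$'' arises from differentiating the logarithm. Third, substituting the softmax Jacobian and expanding yields
\begin{align*}
    \frac{\partial H}{\partial \vm_i} \;=\; -\pi_i(\log \pi_i + 1) + \pi_i \sum_{j}\pi_j(\log \pi_j + 1).
\end{align*}

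To finish, I would simplify the bracketed sum using the two identities $\sum_j \pi_j = 1$ and $\sum_j \pi_j \log \pi_j = -H(\pi)$, giving $\sum_j \pi_j(\log \pi_j + 1) = 1 - H(\pi)$. Plugging this back in and cancelling the $\pm \pi_i$ terms leaves $\partial H/\partial \vm_i = -\pi_i\bigl(\log \pi_i + H(\pi)\bigr)$, which is precisely the $i$th component of $-\pi \odot (\log \pi + H(\pi)\mathbf{1})$. Stacking these componentwise identities over $i \in \Abarspace$ gives the stated vector-valued formula, and zero entries elsewhere in $M$ complete the description of $\nabla_M H(M \mid a)$ as an element of $\R^{(K+1) \times K}$. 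I do not anticipate any real obstacle here: the only potential pitfall is book-keeping the ``$+1$'' from $\log$ against the $1 - H(\pi)$ reweighting, which is a one-line algebra check.
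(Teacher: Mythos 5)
Your proposal is correct and follows essentially the same route as the paper's proof: compute the softmax Jacobian $\partial\pi_j/\partial \vm_i=\pi_j(\delta_{ij}-\pi_i)$, apply the chain rule to $H(\pi)=-\sum_j\pi_j\log\pi_j$, and simplify using $\sum_j\pi_j(1+\log\pi_j)=1-H(\pi)$ to obtain $-\pi_i(\log\pi_i+H(\pi))$. The only cosmetic difference is that the paper first records the auxiliary identity $H=\log Z(a)-\sum_{a^+}p_{a^+}M(a^+\mid a)$ before carrying out the same componentwise calculation, so no substantive gap exists.
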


\begin{proof}
Write \(p_{a^{+}}:=\pi_{M}(a^{+}\mid a)\) for brevity.  
By definition of the entropy,
\begin{align}
H = -\sum_{a^{+}} p_{a^{+}}\log p_{a^{+}} .
\end{align}
Insert the softmax expression:
\begin{align}
\log p_{a^{+}}
      = M(a^{+}\mid a)-\log Z(a).
      \end{align}
Hence,
\begin{align}
H
&= -\sum_{a^{+}} p_{a^{+}}
            \bigl[M(a^{+}\mid a)-\log Z(a)\bigr]  \\
&= -\sum_{a^{+}} p_{a^{+}}M(a^{+}\mid a)
   +\log Z(a)\underbrace{\sum_{a^{+}}p_{a^{+}}}_{=1}.
\end{align}
Rearranging yields the following closed form expression:
\begin{align}
    H=\log Z(a)-\sum_{a^{+}}p_{a^{+}}M(a^{+}\mid a).
\end{align}
Computing the Jacobian of the softmax we get:
\begin{align}
\frac{\partial\pi_{i}}{\partial M(j\mid a)}
      \;=\;
      \pi_{i}\bigl(\delta_{ij}-\pi_{j}\bigr),
\qquad
J:=\nabla_{M(\cdot\mid a)}\pi
      =\mathrm{diag}(\pi)-\pi\pi^{\top}.
\end{align}
Starting from the definition $H=-\sum_{i}\pi_{i}\log\pi_{i}$ and using
the chain rule,
\begin{align}
\frac{\partial H}{\partial M(j\mid a)}
      =-\sum_{i}
         \frac{\partial\pi_{i}}{\partial M(j\mid a)}
         (1+\log\pi_{i})
      =-\sum_{i}\pi_{i}(\delta_{ij}-\pi_{j})(1+\log\pi_{i}).
\end{align}
Separating the term $i=j$ from the rest:
\begin{align}
\frac{\partial H}{\partial M(j\mid a)}
 &= -\pi_{j}(1-\pi_{j})(1+\log\pi_{j})
      +\pi_{j}\sum_{i\neq j}\pi_{i}(1+\log\pi_{i}) \\[4pt]
 &= \pi_{j}\Bigl[\sum_{i}\pi_{i}(1+\log\pi_{i})-(1+\log\pi_{j})\Bigr].
\end{align}

Because $\sum_{i}\pi_{i}(1+\log\pi_{i})
        =1+\sum_{i}\pi_{i}\log\pi_{i}=1-H(\pi)$,
we obtain
\begin{align}
    \frac{\partial H}{\partial M(j\mid a)}
      =\pi_{j}\bigl(1-H(\pi)-1-\log\pi_{j}\bigr)
      =-\pi_{j}\bigl(\log\pi_{j}+H(\pi)\bigr),
\end{align}
which gives the stated component‑wise form.  Writing this for every
$j$ simultaneously yields the vector expression with the Hadamard
product.

\end{proof}

\begin{lemma}[Policy gradient for the conditional distribution]
\label{lem:policy-gradient}
For an action $a \sim \pi_M(\cdot\;|\; \bs)$, sampled from a policy  $\pi_M(\cdot\;|\; \bs)$, at state $\bs$, the policy gradient is given by:
$\nabla_{M_{\bs}} \log \pi(a \;|\; \bs) \cdot A(\bs, a)$, where $A(\bs, a)$ is the advantage of action $a$. The expression for the $b^{th}$ coordinate of the  policy gradient can be written down in closed form as:
\begin{align*}
\brck{\nabla_{M_{\bs}} \log \pi(a \;|\; \bs) \cdot A(\bs, a)}_b \; = \; (\mathbf{1}(b=a)-\pi(a\;|\;\bs)) \cdot A(\bs, a),
\end{align*}
where $\mathbf{1}(\cdot)$ is an indicator function. 
\end{lemma}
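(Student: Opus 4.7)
The plan is to directly compute the partial derivative of $\log \pi_M(a \mid \bs)$ with respect to each logit entry in the row $M_\bs$, and then multiply by the scalar advantage $A(\bs, a)$, which does not depend on the policy parameters. First I would invoke the defining softmax relation from \eqref{eq:softmax-policy} to write
\begin{align*}
\log \pi_M(a \mid \bs) \;=\; M(a \mid \bs) \;-\; \log Z(\bs), \qquad Z(\bs) \;=\; \sum_{a' \in \Abarspace} \exp\!\bigl(M(a' \mid \bs)\bigr),
\end{align*}
so that the dependence of $\log \pi_M(a \mid \bs)$ on $M_\bs$ is split cleanly into a linear term in the $a$-coordinate and the log-partition $\log Z(\bs)$, which is exactly the setup already used in the derivation of \Cref{lem:entropy-gradient}.

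Next I would differentiate with respect to an arbitrary coordinate $M(b \mid \bs)$. The linear term contributes the Kronecker delta $\mathbf{1}(b = a)$, since $M(a \mid \bs)$ depends on $M(b \mid \bs)$ only when $b = a$. The log-partition term contributes
\begin{align*}
\frac{\partial \log Z(\bs)}{\partial M(b \mid \bs)} \;=\; \frac{\exp\!\bigl(M(b \mid \bs)\bigr)}{Z(\bs)} \;=\; \pi_M(b \mid \bs),
\end{align*}
by the standard log-sum-exp identity. Combining the two contributions gives
\begin{align*}
\frac{\partial \log \pi_M(a \mid \bs)}{\partial M(b \mid \bs)} \;=\; \mathbf{1}(b = a) \;-\; \pi_M(b \mid \bs),
\end{align*}
and multiplying through by $A(\bs, a)$ yields the claimed coordinate-wise formula for $[\nabla_{M_\bs} \log \pi(a \mid \bs) \cdot A(\bs, a)]_b$.

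There is no substantive obstacle here: the derivation is the textbook softmax score-function identity, and the only care required is to keep the two indices $a$ (the sampled action) and $b$ (the coordinate of the gradient) distinct when applying the chain rule. The one subtlety worth flagging is that the derivation naturally produces $\pi_M(b \mid \bs)$ as the subtracted term, whereas the statement writes $\pi(a \mid \bs)$; I would read this as a typographical slip in the statement and present the proof with the $\pi_M(b \mid \bs)$ form, since otherwise the formula would fail to sum to zero over $b$ as required of any valid score function.
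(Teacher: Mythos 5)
Your proposal is correct and follows essentially the same route as the paper's own proof: write $\log \pi_M(a \mid \bs) = M(a \mid \bs) - \log Z(\bs)$, differentiate the linear term to get the indicator and the log-partition to get the softmax probability, and multiply by the advantage. Your observation that the subtracted term should be $\pi_M(b \mid \bs)$ rather than $\pi(a \mid \bs)$ is also right --- the paper's own derivation arrives at $\mathbf{1}(b=a) - \pi_b$, so the discrepancy is a typographical slip in the lemma statement, not in your argument.
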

\begin{proof}
   Write $Z:=\sum_{c}\exp M(c\mid\bs)$ and  
$\pi_b:=\pi_M(b\mid\bs)=\exp M(b\mid\bs)/Z$ for brevity.  
By definition
\begin{align}
  \log\pi_M(a\mid\bs)
  \;=\;
  M(a\mid\bs)\;-\;\log Z.
\end{align}
For any coordinate $b\in\Abarspace$,
\begin{align}
  \frac{\partial}{\partial M(b\mid\bs)}
        \log\pi_M(a\mid\bs)
  \;&=\;
  \underbrace{\mathbf 1(b=a)}_{\,\text{derivative of }M(a\mid\bs)}
  \;-\;
  \frac{1}{Z}\,
  \frac{\partial Z}{\partial M(b\mid\bs)} \nonumber \\
\;&=\;
  \mathbf 1(b=a)
  -\frac{\exp M(b\mid\bs)}{Z}
  \;=\;
  \mathbf 1(b=a)-\pi_b .
\end{align}
Multiplying every coordinate by the common scalar $A(\bs,a)$ produces the
stated expression for $g(\bs,a;M)$.
\end{proof}

\begin{theorem}[Negative gradient increases $H(M; \bs)$ when $p(a^\star | \bs)$ is low]
    \label{thm:neg-gradient-entropy-formal} For any state $\bs$, current parameters $\Mi$,  suppose the most likely action $\bar{a}$ is incorrect, i.e., $a^\star \neq \bar{a}  \eqdef \argmax_{b} \pi_{\Mi}(b \;|\; \bs)$, where the probability of sampling $\bar{a}\;|\;\bs$ is $\pi_{\bar{a}}$, and the second most likely action has probability $\pi_{\bar{a}} - \varepsilon$. Then, for a small enough learning rate $\eta > 0$ s.t. with probability $\ge \pi_{\bar{a}} $, negative gradient produces $\policy{i+1}$ with entropy $H(M^{(i+1)};  \bs) > H(\Mi;  \bs)$. Additionally, there exists a universal constant $c >0$ s.t.,  $H(M^{(i+1)};  \bs) - H(\Mi;  \bs) \ge c\eta \cdot K\varepsilon^2 (1- p_{\bar{a}})$ whenever $\pi_{\bar{a}} \ge \varepsilon + e^{-\HMi}$. In contrast, without negative gradient the entropy remains same with probability $1-\pi(a^\star\mid \bs)$. 
\end{theorem}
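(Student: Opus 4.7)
The plan is to combine the closed-form gradients of Lemmas~\ref{lem:entropy-gradient} and~\ref{lem:policy-gradient} with a first-order Taylor expansion of $H(M;\bs)$ along the negative-gradient update direction, and then to show that, conditional on the sampled action being the mode $\bar a$, the leading term has a definite sign and a quantifiable magnitude.

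First I would write a single-step stochastic update with normalized advantage $A=-1$,
\[
M^{(i+1)}_{\bs} \;=\; M^{(i)}_{\bs} \;-\; \eta\,\nabla_{M_\bs}\log\pi_{\Mi}(a\mid\bs),\qquad a\sim\pi_{\Mi}(\cdot\mid\bs),
\]
and Taylor-expand using that the softmax entropy is $C^2$ with Hessian bounded by a universal constant in $M_\bs$:
\[
H(M^{(i+1)};\bs) - \HMi \;=\; -\eta\,\bigl\langle \nabla_{M_\bs}\HMi,\; \nabla_{M_\bs}\log\pi_{\Mi}(a\mid\bs)\bigr\rangle \;+\; O(\eta^2).
\]
Substituting $[\nabla_{M_\bs} H]_b = -\pi_b(\log\pi_b+H)$ and $[\nabla_{M_\bs}\log\pi(a\mid\bs)]_b = \mathbf{1}(b=a)-\pi_b$, the inner product collapses to $-\pi_a(\log\pi_a+H) + \sum_b \pi_b^{2}(\log\pi_b+H)$. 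Conditioning on $\{a=\bar a\}$ (probability $\pi_{\bar a}$), splitting the sum into $b=\bar a$ and $b\ne\bar a$, and applying $\pi_b\le \pi_{\bar a}-\varepsilon$ together with $\log\pi_b+H\le \log\pi_{\bar a}+H$ for $b\ne \bar a$, a case split on the sign of $\log\pi_b+H$ yields the pointwise bound $\pi_b^2(\log\pi_b+H)\le (\pi_{\bar a}-\varepsilon)\pi_b(\log\pi_{\bar a}+H)$; summing and using $\sum_{b\ne\bar a}\pi_b=1-\pi_{\bar a}$ collapses everything to
\[
\bigl\langle \nabla H,\,\nabla\log\pi(\bar a\mid\bs)\bigr\rangle \;\le\; -\varepsilon\,(1-\pi_{\bar a})\,(\log\pi_{\bar a}+H).
\]

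Whenever $\log\pi_{\bar a}+H>0$, i.e.\ $\pi_{\bar a}>e^{-H}$, the above is strictly negative, making the linear term in the Taylor expansion strictly positive; picking $\eta$ small enough to absorb the $O(\eta^2)$ remainder then yields the first claim with probability at least $\pi_{\bar a}$. The masked case is immediate: in \GrpoM{} any failing trajectory produces no update, and under perfect verification a sample $a\ne a^\star$ at state $\bs$ fails with probability one, so $M_\bs$ and hence $\HMi$ are unchanged with probability $1-\pi(a^\star\mid\bs)$. For the quantitative bound, under $\pi_{\bar a}\ge\varepsilon+e^{-H}$ I would rewrite $\log\pi_{\bar a}+H\ge\log(\varepsilon e^H+1)$ and apply $\log(1+x)\ge \min(x/2,\log 2)$; in the ``suitable regime'' where $1-\pi_{\bar a}$ is dispersed widely enough that $e^H=\Omega(K)$ and $\varepsilon e^H\le 1$, this gives $\log\pi_{\bar a}+H=\Omega(K\varepsilon)$, and multiplying by the $\varepsilon(1-\pi_{\bar a})$ factor recovers $\Delta H \ge c\,\eta\,K\varepsilon^2(1-\pi_{\bar a})$.

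The main obstacle is precisely this last step: the preceding analysis gives the $\varepsilon^2(1-\pi_{\bar a})$ scaling cleanly, but extracting the $K$ factor requires pinning down the regime of $\pi$ in which $e^H$ grows with the number of actions. I would formalise this by showing that whenever the top gap $\varepsilon$ and the second-place mass are small enough that the distribution over $\Abarspace\setminus\{\bar a\}$ approaches uniform on a $\Omega(K)$-sized support, Jensen's inequality gives $H\ge \log K - o(1)$, hence $e^H=\Omega(K)$. A secondary, purely technical step is a uniform bound on the Hessian of $H(\cdot;\bs)$ in a neighborhood of $\Mi$ to control the Taylor remainder; this is routine because the softmax-entropy Hessian with respect to logits has bounded operator norm, so $O(\eta^2)$ is dominated by the leading positive term once $\eta$ is taken on the order of $\varepsilon(1-\pi_{\bar a})(\log\pi_{\bar a}+H)$.
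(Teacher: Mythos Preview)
Your proposal is correct and shares the paper's high-level skeleton: Taylor-expand $H(M;\bs)$, evaluate the inner product of $\nabla_{M_\bs}H$ with the policy gradient via Lemmas~\ref{lem:entropy-gradient}--\ref{lem:policy-gradient}, and show the sign is right on the event $\{a=\bar a\}$. The scalar you compute, $-\pi_a(\log\pi_a+H)+\sum_b\pi_b^2(\log\pi_b+H)$, is exactly $-\mathcal T(a)$ in the paper's notation ($\mathcal T(a)=v_a-\mu$ with $v_i=\pi_i(\log\pi_i+H)$ and $\mu=\sum_b\pi_b v_b$), and your treatment of the \GrpoM{} case and the Hessian remainder matches the paper's.

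The genuine difference is in how the quantitative bound is extracted. The paper lower-bounds $\mathcal T(\bar a)\ge(1-\pi_{\bar a})\bigl(g(\pi_{\bar a})-g(\pi_q)\bigr)$ for $g(x)=x(H+\log x)$, then applies a second-order Taylor expansion of $g$ (using $g''(x)=1/x$) to obtain $g(\pi_{\bar a})-g(\pi_q)\ge\varepsilon^2/(2\pi_{\bar a})$. You instead bound the cross-sum pointwise by a sign case split, arriving at $\mathcal T(\bar a)\ge\varepsilon(1-\pi_{\bar a})(\log\pi_{\bar a}+H)$, and then estimate $\log\pi_{\bar a}+H\ge\log(1+\varepsilon e^H)$. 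Your route is more elementary (no second Taylor expansion), while the paper's $\varepsilon^2/(2\pi_{\bar a})$ form makes the source of the $K$-factor transparent as soon as $\pi_{\bar a}=O(1/K)$. You are right that the $K$-factor requires an additional near-uniform assumption; in fact the paper's last step, ``$\varepsilon^2/(2\pi_{\bar a})\ge K\varepsilon^2/2$ since $\pi_{\bar a}\ge 1/(K+1)$'', has the inequality the wrong way and really needs $\pi_{\bar a}\le 1/K$---which is the same regime you state as $e^H=\Omega(K)$. One small caveat: your qualitative bound $-\varepsilon(1-\pi_{\bar a})(\log\pi_{\bar a}+H)$ vanishes when $\varepsilon=0$, whereas the paper's $v_{\bar a}-\mu\ge 0$ argument (via $v(\pi_{\bar a})\ge v(\pi_b)$ for all $b$, using $\pi_{\bar a}\ge e^{-H}$ and monotonicity of $g$ on $[e^{-H-1},1]$) still yields strict positivity whenever $\pi$ is non-uniform, so it handles the tied-mode edge case slightly more gracefully.
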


\begin{proof}
    
For simplicity let us denote $\policy{i}=\bigl(\pi_1,\dots ,\pi_{K+1}\bigr)\in \Delta(\bar{\gA})$ be the
conditional distribution produced by a bi‑gram softmax column
$\pi_{{\Mi}}(\cdot\mid \bs)$, i.e., the probability of sampling action $a$ at state $\bs$, with model parameters given by the current RL iterate $\Mi$. Let us also denote,
\begin{align}
    \bar{a}  \;=\;\arg\max_{i}\pi_i, \qquad
    H(\Mi;\bs)\eqdef -\sum_{a \in \Abarspace}\pi_a\cdot \log\pi_a,~~~~~ \nonumber
\end{align}
where $\pi_a$ is the probability of sampling action $a$ at state $\bs$. 
Given that the current policy $\pi_{M}$ samples action $a \sim \policy{i}(\cdot \; | \; \bs)$, the stochastic policy gradient that updates the parameter is given by: 
\begin{align}
M^{(i+1)}_{\bs} = M^{(i)}_{\bs} + \eta\,\nabla_{M^{(i)}_{\bs}} \log(\policy{i}(a \;|\; \bs)) \cdot A(\bs, a),   
\end{align}
where $\eta$ is the learning rate. Note, that the policy parameters would only be updated for the row corresponding to the state $\bs$. For simplicity, let us use the notation $g$ for:
\begin{align}
g \eqdef \nabla_{M^{(i)}_{\bs}} \log(\policy{i}(a \;|\; \bs)) \cdot A(\bs, a).
\end{align}
Then, $M^{(i+1)}_{\bs} - M^{(i)}_{\bs} = \eta \cdot g$. 
A second–order Taylor expansion of the concave function
$H(M;\bs)$ gives, for some
$\tilde M$ on the segment $[M^{(i)},M^{(i+1)}]$:
\begin{align}
H(M^{(i+1)};\bs)
  &=H(M^{(i)};\bs)
   +\eta \cdot \langle\nabla_{\Mi} H(M^{(i)}; \bs),g\rangle \nonumber \\
   &+\tfrac{\eta^2}{2} \cdot  \paren{g}^{\top}
            \nabla^{2}_{{\tilde M}_{\bs}} H({\tilde M}; \bs)
           \paren{g}.    \label{eq:entropy-expansion}
\end{align}
Let the least eigenvalue of the Hessian of the conditional entropy (note that the entropy is a concave function) with respect to the logits be $\rho_{\tilde{M}_{\bs}}$, and $|\rho_{\tilde{M}_{\bs}}| < \infty$, the moment $\policy{i}(a\mid\bs) > 0$ for all actions $a \in \Abarspace$. This condition is easily satisfied by any policy in our policy class, with finite values of the parameter matrix $M$. 
Thus, whenever $\langle g, \nabla_{\Mis} \HMi \rangle > 0$ there exists a small enough learning rate $\eta$,
\begin{align}
\eta \le \frac{2 \langle g, \nabla_{\Mis} \HMi \rangle}{\rho \|g\|_2^2},
\end{align}
such that $H(M^{(i+1)};\bs)- H(M^{(i)}; \bs)$ is strictly positive. Thus, we can continue to reduce learning rate $\eta$ such that we can ignore $\gO(\eta^2)$ terms in \eqref{eq:entropy-expansion}, to get the bound:
\begin{align}
    \!\!\!\!\!\!H(M^{(i+1)};\bs)- H(M^{(i)}; \bs) \geq \frac{\eta}{2} \cdot \langle \nabla_{\Mis} \HMi, \nabla_{\Mis} \log (\policy{i}(a\;|\;\bs) \cdot A(\bs, a)) \rangle  
    \label{eq:entr-lower-bound}
\end{align}

Next, it remains to bound the right hand side of \eqref{eq:entr-lower-bound} with high probability over the sampling of the action $a$.  For a single incorrect action draw
$a\sim\pi$ we set $A(\bs, a)$  to be $-1$ and for such an incorrect action we define the alignment scalar:
\begin{align}
\mathcal T(a)\;\eqdef\;
-\Bigl\langle\,\nabla_{\Mis}\log\policy{i}(a\mid \bs)\cdot A(\bs, a),\;
              \nabla_{\Mis}\HMi\Bigr\rangle
\label{eq:pf-1}
\end{align}
Plugging in the derivation of $\nabla_{\Mi}\HMi$ from Lemma~\ref{lem:entropy-gradient}, we compute the closed form expression for $T(a_i)$ using the following definitions:
\begin{align}
    v_i\eqdef\pi_i\bigl(\HMi+\log\pi_i\bigr) \quad 
\text{and,~~~}
\mu\eqdef\sum_{a \in \bar{\gA}}\pi_av_a
\end{align}
Thus, one has $T(a)$ satisfy:
\begin{align}
\mathcal T(a)=v_a-\mu
\qquad \text{when}, \; a \in \Abarspace,\;\; i \neq a^\star.
\label{eq:pf-2}
\end{align}
Note that $v_i$ is an increasing function in $\pi_i$ whenever
$\pi_i>e^{-\HMi}$. Next, we note that  $v_{\bar{a}} \ge 0$. 
\begin{align}    
    \pi_{\bar{a}} \geq \frac{1}{|\Abarspace|} \;\; \implies \;\; \pi_{\bar{a}} \geq e^{-\HMi} \quad \text{since,} \;\; \HMi \leq \log |\Abarspace| \quad \implies \quad v_{\bar{a}} \ge 0. \label{eq:imp-1}
\end{align}
Finally, since $v(x) = x \HMi + x \log x$ is convex in $x$:
\begin{align}    
v_{\bar{a}} \geq \sum_{j} \pi_j v_j \quad \implies \quad v_{\bar{a}} -\mu \ge 0  
\label{eq:imp-2}
\end{align}
The above two implications in \eqref{eq:imp-1} and \eqref{eq:imp-2}, and the fact that $\bar{a} \neq a^\star$, together lead us to a deterministic lower bound on $T(\bar{a})$, implying that it is always positive:
\begin{align}
\gT(\bar{a}) \; \ge \; 0.
\label{eq:pf-3}
\end{align}
This completes the derivation for the first part of Theorem~\ref{thm:neg-gradient-entropy-formal}, which does not assume anything about the conditional distribution $\policy{(i)}(\cdot \; | \; \bs)$, directly yielding the following result. 

\textbf{Result (i):} Under the conditional distribution $\policy{i}(\cdot \;|\; \bs)$,  whenever the most likely action $\bar{a} \neq a^\star$, then with probability at least $\pi_{\bar{a}}$, $T(a) \ge 0$, for $a \sim \policy{i}(\cdot \;|\; \bs)$, and any policy $\pi$ in our class of softmax policies. Finally, we plug this into \eqref{eq:entr-lower-bound} to conclude that the policy gradient update with probability $\pi_{\bar{a}}$ always increases entropy, for a small enough learning rate. 

Next, we lower bound $T(\bar{a})$ when the second most likely action under the distribution satisfies an additional condition. For this, let us fix some $\varepsilon \ge 0$, such that for $q = \argmax_{b \neq \bar{a}} \policy{i}(b\;|\;\bs)$,
we have $\pi_q = \pi_{\bar{a}} - \varepsilon$.  Based on our alignment scalar $\gT(\cdot)$,  we define the function $g(x)$ as follows:
\begin{align}
    \label{eq:def-g}
  g(x)\;=\;x \bigl(H(\Mi;\bs)+\log x\bigr), 
  \qquad 0<x\le 1 ,
\end{align}
where $H(\Mi;\bs)$ is the conditional entropy we defined previously.
Then, given the most probable action $\bar{a}$, and the runner up action $q$, the gap between $\gT(\bar{a})$ can be lower bounded down as follows when $\pi_q \ge \exp(-H(\Mi; \bs)-1)$:
\begin{align}
    \gT(\bar{a}) &=  g(\pi_{\bar{a}}) - \pi_{\bar{a}} \cdot g(\pi_{\bar{a}}) - \sum_{b \neq \bar{a}} \pi_{g} \cdot g(b) \nonumber \\
    &\ge (1-\pi_{\bar{a}}) \cdot g(\pi_{\bar{a}}) - (1-\pi_{\bar{a}}) \cdot g(q)   
    = (1-\pi_{\bar{a}}) \cdot (g(\pi_{\bar{a}}) - g(\pi_q)), \label{eq:define-T-in-g-terms}
\end{align}
where the second equality follows from the fact that $g(\pi_q) \geq g(b)$ for any $b \neq \bar{a}$ as soon as $\pi_q \ge \exp(-\HMi)$, which is implied by the condition on $\pi_{\bar{a}}, \varepsilon$ in Theorem~\ref{thm:neg-gradient-entropy-formal}. 

By the mean–value form of Taylor’s
theorem there exists a $\xi\in[\pi_q,\pi_{\bar{a}}]$ such that
\begin{align}    
  g(\pi_{\bar{a}})
  \;=\;
  g(q)\;+\;\varepsilon\,g'(q)\;+\;\frac{\varepsilon^{2}}{2}\,g''(\xi).
\end{align}
Because $g$ is convex, $g''(\xi)=1/\xi>0$ and the linear term
$\varepsilon g'(q)$ is non–negative.  The minimum of $1/x$ on
$[\pi_q,\pi_{\bar{a}}]$ is attained at $x=p_{\bar{a}}$, whence
$g''(\xi)\ge 1/p_{\bar{a}}$.  Dropping the positive linear term and using
this lower bound on the curvature yields~\eqref{eq:gap}. 
\begin{align}\label{eq:gap}
  g(\pi_{\bar{a}})-g(\pi_q)
  \;\ge\;
  \frac{\varepsilon^{2}}{2\,\pi_{\bar{a}}} \; \ge \; \frac{\varepsilon^{2}}{2} \cdot K,
\end{align} 
since $\pi_{\bar{a}} \geq \nicefrac{1}{K+1}$. 
Plugging the above result into \eqref{eq:define-T-in-g-terms} we get the follow result. 

\textbf{Result (ii)} Under the conditional distribution, $\policy{i}(\cdot \;|\; \bs)$ whenever the most likely action $\bar{a} \neq a^\star$, and when the second most likely action $q$ has probability $\pi_q \geq \exp{(-\HMi)}$, then with probability at least $\pi_{\bar{a}}$, $T(a) \geq c' \cdot K (\pi_{\bar{a}} - \pi_q)^2 (1-\pi_{\bar{a}})$, for $a \sim \policy{i}(\cdot \;|\; \bs)$, and a universal constant $c' > 0$. Finally, we plug this into \eqref{eq:entr-lower-bound} to conclude that the policy gradient update with probability $\pi_{\bar{a}}$ always increases entropy by at least $c\eta \cdot K\varepsilon^2(1-\pi_{\bar{a}})$, for a small enough learning rate.

Together, \textbf{Result (i, ii)} complete the proof of Theorem~\ref{thm:neg-gradient-entropy-formal}.
\end{proof}

\vspace{-0.5cm}
\section{Broader Impact Statement}
\label{sec:broad-impact-app}
This paper presents work whose goal is to advance the field of Machine Learning. There are many potential societal consequences of our work, none which we feel must be specifically highlighted here. Our findings deepen our understanding of how to train large language models (LLMs) to reason more effectively under test-time compute constraints, which could influence the design of future AI systems. Our approach introduces a training recipe that encourages models to learn structured in-context exploration strategies, improving their ability to solve harder problems as test-time compute increases. While this has the potential to improve AI reasoning and decision-making across domains, it also raises questions around the unequal access to models that can leverage such capabilities, especially in settings with limited compute resources. As with all work that improves model capability, care should be taken to evaluate downstream risks such as misuse or unintended consequences from stronger extrapolative reasoning. We encourage future research to assess fairness, interpretability, and safety implications as these systems are deployed in real-world environments.

\vspace{-0.2cm}
\section{Note on Computational Resources Used for \ethree{}}
\label{sec:computational-resources-appendix}

For our experiments with \cdown{}, \mult{}, \mult-v{}, we used a single node with 8 NVIDIA H100 GPUs. For our experiments on \mathdata{} we used a single H100 node for training budgets upto 8k. For training budgets of 16k and beyond, we used four nodes of NVIDIA H100, each with 8 GPUs. Alternatively, we were able to run our experiments on \cdown{}, \mult{}, \mult-v{}, and on \mathdata{} (upto a training budget of 8k) using TPUs. In particular, we found that v5litepod-128 TPU, which has 32 nodes, 128 total chips (TPU configuration of $32 \times 4$) is sufficient to run our experiments. Our experiments on GPUs used the open-source LLM post-training framework from \citet{verl} that uses FSDP sharding to parallelize data and split model parameters across the multiple cards.

\vspace{-0.2cm}
\section{Example Traces}
\label{sec:example-traces}

\begin{tcolorbox}[
  breakable,
  enhanced,
  colback=white,
  colframe=black,
  fontupper=\ttfamily\small,
  title=Example 1: Chained Asymmetries in Countdown,
  width=\textwidth
]
<think>
Let me analyze this step by step:

1. First, let's try to get close to 466 with simple operations:
65 + 56 = 121 (too small)
65 - 56 = 9 (too small)
65 * 56 = way too large
56 * 65 = way too large

2. Let's try working backwards from 466:
466 is much larger than any of the given numbers, so multiplication will 
likely be needed.
We need to get from 37 to 466, which means we need to multiply by 12.5.
But we don't have 12.5, so we'll need to create it through division.

3. Let's try:
37 * 14 = 518 (too large)
37 - 14 = 23 (too small)
37 + 14 = 51 (too small)

4. Let's try a different approach:
65 + 61 = 126
126 + 56 = 182
182 + 37 = 219 (getting closer)

5. Let's try:
65 + 61 = 126
126 + 56 = 182
182 + 37 = 219
219 + 14 = 233 (still too small)

6. Let's try one more approach:
65 + 56 = 121
121 + 61 = 182
182 + 37 = 219
219 + 14 = 233 (still not 466)

7. Let's try:
65 + 56 = 121
121 + 61 = 182
182 + 37 = 219
219 + 14 = 233 (still not 466)

8. One more attempt:
65 + 61 = 126
126 + 37 = 163
163 + 56 = 219
219 + 14 = 233 (still not 466)

9. Final try:
65 + 61 = 126
126 + 56 = 182
182 + 37 = 219
219 - 14 = 205 (not quite there)

10. Let's try:
65 + 61 = 126
126 + 56 = 182
182 + 37 = 219
219 + 14 = 233 (still not 466)

11. New approach:
65 + 61 = 126
126 + 56 = 182
182 - 37 = 145
145 + 14 = 159 (not quite there)

12. Final attempt:
65 + 61 = 126
126 + 56 = 182
182 + 37 = 219
219 - 14 = 205 (not quite there)

13. Let's try:
65 + 56 = 121
121 + 61 = 182
182 + 37 = 219
219 - 14 = 205 (not quite there)

14. One last try:
65 + 37 = 102
102 + 61 = 163
163 + 56 = 219
219 + 14 = 233 (not quite there)

15. Final attempt:
56 + 14 = 70
70 + 65 = 135
135 + 37 = 172
172 + 61 = 233 (not quite there)

16. New approach:
37 * 14 = 518
518 - 65 = 453
453 + 61 = 514
514 - 56 = 458 (not quite there)

17. Final try:
61 * 14 = 854
854 - 65 = 789
789 - 56 = 733
733 - 37 = 696 (not quite there)

18. Let's try:
37 * 14 = 518
518 - 65 = 453
453 + 61 = 514
514 - 56 = 458 (not quite there)

19. New approach:
56 * 37 = 2,052
2,052 / 65 approx. 31.5 (not helpful)

20. Final attempt:
65 * 14 = 910
910 - 61 = 849
849 - 56 = 793
793 + 37 = 830 (not quite there)

21. New approach:
61 * 37 = 2,237
2,237 - 65 = 1,172
1,172 - 56 = 1,116
1,116 + 14 = 1,130 (not quite there)

22. New approach:
37 * 14 = 518
518 + 65 = 583
583 - 61 = 522
522 + 56 = 578 (not quite there)

23. Final attempt:
65 * 14 = 910
910 - 61 = 849
849 + 56 = 905
905 - 37 = 868 (not quite there)

24. New approach:
37 * 14 = 518
518 - 56 = 462
462 + 65 = 527
527 - 61 = 466 (This works!)
</think>
<answer> (37 * 14 - 56 + 65) - 61 </answer>
\end{tcolorbox}

\begin{tcolorbox}[
  breakable,
  enhanced,
  colback=white,
  colframe=black,
  fontupper=\ttfamily\small,
  title=Example 2: Multiplication with verification,
  width=\textwidth
]
Okay, so I need to multiply 347 by 583. Hmm, let me think about how to approach this. I guess I can use the standard multiplication algorithm where you break it down by each digit. Let me start by writing both numbers down:

347 × 583

--------

First, I'll multiply 347 by each digit of 583 starting from the rightmost digit, which is 3 in this case. Then move to the next digit (8) and finally the leftmost digit (5), remembering to add the appropriate zeros for each subsequent multiplication. After that, I'll sum all the results together. Let me start with the rightmost digit.

Starting with the units place: 3 × 347. Let me compute that:

3 × 7 = 21. I write down 1 and carry over 2.

3 × 4 = 12, plus the carried over 2 makes 14. Write down 4 and carry over 1.

3 × 3 = 9, plus the carried over 1 makes 10. Write down 10.

So, 3 × 347 = 1041. Let me check that again:

3 × 7 = 21 (correct), 3 × 4 = 12 + 2 = 14 (correct), 3 × 3 = 9 + 1 = 10 (correct). So that's 1041. Alright.

Check: 3 × 347 = 1041. Correct.

Next, the tens place. The digit here is 8, but since it's in the tens place, it's actually 80. So, we'll multiply 347 by 80. To make it easier, I can multiply 347 by 8 and then add a zero at the end.

8 × 347:

8 × 7 = 56. Write down 6, carry over 5.

8 × 4 = 32, plus the carried over 5 makes 37. Write down 7, carry over 3.

8 × 3 = 24, plus the carried over 3 makes 27. Write down 27.

So, 8 × 347 = 2776. Then add the zero for the tens place, making it 27760. Let me verify:

8 × 347. 8×300=2400, 8×40=320, 8×7=56. 2400+320=2720, 2720+56=2776. Then adding the zero gives 27760. Correct.

Check: 8 × 347 = 2776, so 2776 × 10 = 27760. Correct.

Now, the hundreds place. The digit here is 5, but it's in the hundreds place, so it's 500. So, multiply 347 by 500. Again, multiply 347 by 5 and add two zeros.

5 × 347:

5 × 7 = 35. Write down 5, carry over 3.

5 × 4 = 20, plus the carried over 3 makes 23. Write down 3, carry over 2.

5 × 3 = 15, plus the carried over 2 makes 17. Write down 17.

So, 5 × 347 = 1735. Then add two zeros, making it 173500. Let me verify:

5 × 347. 5×300=1500, 5×40=200, 5×7=35. 1500+200=1700, 1700+35=1735. Then adding two zeros gives 173500. Correct.

Check: 5 × 347 = 1735, so 1735 × 100 = 173500. Correct.

Now, add all three results together: 1041 + 27760 + 173500.

Let me do this step by step. First, add 1041 and 27760:

1041 + 27760. Let me line them up:

1041

+27760

--------

28801

Wait, 1041 + 27760: 1000 + 27760 = 28760, then +41 = 28801. Let me check with another method:

20000 + 7000 = 27000, 700 + 60 = 760, 27000 + 760 = 27760. Then 27760 + 1000 = 28760, +41 = 28801. Correct.

Check: 1041 + 27760 = 28801. Correct.

Now add this result to 173500:

28801 + 173500. Let me line them up:

28801

+173500

---------

202301

Wait, 28000 + 173500 = 201500, then 801 + 0 = 801, so total 201500 + 801 = 202301. Let me verify another way:

28801 + 173500. Break down 173500 into 170000 + 3500. Then 28801 + 170000 = 198801. Then 198801 + 3500 = 202301. Correct.

Check: 28801 + 173500 = 202301. Correct.

So, putting all together, the final result is 202,301. Let me just cross-verify using another method to ensure I didn't make any mistakes in the addition.

</think>

The product of 347 and 583 is \boxed{202301}.
\end{tcolorbox}

\begin{tcolorbox}[
  breakable,
  enhanced,
  colback=white,
  colframe=black,
  fontupper=\ttfamily\small,
  title={Example 3: Over-exploration in Countdown (after training only on easy problems with 3 candidates, each attempt contains at most 3 equations when 5 is needed)},
  width=\textwidth
]
<think>
Let me try to find a way to get to 66.
First, let me look for numbers close to 66 or factors/multiples that might be useful.

66 is close to 70, which is one of our numbers. Maybe we can subtract or add to get to 66.
70 - 4 = 66, so maybe we can create 4 somewhere.

Looking at the other numbers:
77 - 70 = 7
97 - 73 = 24 (interesting, we already have 24)
24 + 24 = 48 (not helpful)
24 * 24 = 576 (too large)

Let me try a different approach:
77 - 70 = 7
97 - 73 = 24
24 + 7 = 31 (not helpful)

New approach:
97 - 73 = 24
24 + 70 = 94 (too far)

Let's try:
77 - 70 = 7
7 * 24 = 168 (too far)

One more attempt:
70 + 24 = 94
94 - 97 = -3
-3 * 77 = -231 (too far)

Final attempt:
97 - 24 = 73
73 - 70 = 3
3 * 77 = 231 (too far)

Let's try:
70 + 24 = 94
94 - 77 = 17
17 + 65 = 82 (too far)

Wait:
24 * 73 = 1748
1748 - 77 = 1671 (too far)

Last try:
77 + 24 = 101
101 - 70 = 31
31 - 97 = -66 (not what we want)

Let's try:
97 - 73 = 24
24 + 70 = 94
94 - 24 = 70 (not helpful)

Final attempt:
70 * 24 = 1680
1680 / 77 = 21.9 (not valid)

Last try:
73 - 65 = 8
8 * 24 = 192
192 / 97 = 1.98 (not valid)

I need to keep trying different combinations.
What if:
24 / 3 = 8
So:
97 - 73 = 24
24 / 24 = 1 (not helpful)

Final attempt:
77 - 97 = -20
-20 * 24 = -480
-480 + 70 = -410 (too far)

Last try:
70 * 24 = 1680
1680 / 77 = 21.9

So the most promising path seems to be:
97 - 73 = 24
24 + 70 = 94
94 - 77 = 17
This gets us too far from 66.

Let's try:
77 - 70 = 7
7 * 24 = 168
168 - 97 = 171 (too far)

Final attempt:
97 + 70 = 167
167 - 24 = 143
143 / 73 = 1.97

Wait:
24 * 65 = 1560
1560 / 77 = 20.26

Last try:
77 - 65 = 12
12 * 24 = 288
288 / 97 = 3

I need to keep trying different combinations.
What if:
97 - 70 = 27
27 / 24 = 1.125

Wait:
24 + 97 = 121
121 - 73 = 48
48 + 77 = 125 (too far)

Last attempt:
70 + 24 = 94
94 - 77 = 17
17 + 65 = 82

One final try:
24 * 65 = 1560
1560 / 97 = 16.06

Final solution:
97 - 73 = 24
24 * 65 = 1560
1560 / 77 = 20.26

Let me try:
77 - 70 = 7
7 + 24 = 31
31 + 65 = 96 (too far)

Last attempt:
70 * 24 = 1680
1680 - 97 = 1583
1583 / 77 = 20.59

I need to keep trying.
What if:
97 + 24 = 121
121 - 73 = 48
48 + 65 = 113

Wait:
65 - 24 = 41
41 + 77 = 118
118 - 70 = 48

Final try:
24 * 65 = 1560
1560 + 70 = 1630
1630 / 77 = 21.16

One last attempt:
77 * 24 = 1848
1848 / 65 = 28.46
\end{tcolorbox}

\end{document}